\documentclass{article}
% if you need to pass options to natbib, use, e.g.:
%     \PassOptionsToPackage{numbers, compress}{natbib}
% before loading neurips_2023

% ready for submission
\usepackage[final]{neurips_2023}

% to compile a preprint version, e.g., for submission to arXiv, add add the
% [preprint] option:
%     \usepackage[preprint]{neurips_2023}

% to compile a camera-ready version, add the [final] option, e.g.:
%     \usepackage[final]{neurips_2023}

% to avoid loading the natbib package, add option nonatbib:
%    \usepackage[nonatbib]{neurips_2023}

\usepackage[utf8]{inputenc} % allow utf-8 input
\usepackage[T1]{fontenc}    % use 8-bit T1 fonts
\usepackage{hyperref}       % hyperlinks
\usepackage{url}            % simple URL typesetting
\usepackage{booktabs}       % professional-quality tables
\usepackage{amsfonts}       % blackboard math symbols
\usepackage{nicefrac}       % compact symbols for 1/2, etc.
\usepackage{microtype}      % microtypography
\usepackage{xcolor}         % colors
\usepackage{amsmath}
\usepackage{joe_macros}
\usepackage[shortlabels]{enumitem}
\usepackage{blindtext}
% Recommended, but optional, packages for figures and better typesetting:
\usepackage{graphicx}
\usepackage{booktabs} % for professional tables
\usepackage{natbib,url}
\bibliographystyle{plainnat}
\setlength{\parindent}{0in}
\setlength{\parskip}{0.5em}
%

% Attempt to make hyperref and algorithmic work together better:

% For theorems and such
\usepackage{amssymb}
\usepackage{mathtools}
\usepackage{amsthm}
\usepackage{nameref}
\usepackage{xcolor,hyperref}
\usepackage[capitalize,noabbrev]{cleveref}

%%%%%%%%%%%%%%%%%%%%%%%%%%%%%%%%
% THEOREMS
%%%%%%%%%%%%%%%%%%%%%%%%%%%%%%%%
\theoremstyle{plain}
\newtheorem{thm}{Theorem}
\newtheorem{prop}[thm]{Proposition}
\newtheorem{lemma}[thm]{Lemma}
\newtheorem{cor}[thm]{Corollary}
\newtheorem{defn}{Definition}
\newtheorem*{note*}{Notation}
\newtheorem{note}{Notation}
\newtheorem{assumption}{Assumption}
\newtheorem{rmk}{Remark}
\newtheorem{fact}{Fact}

% commenting
\newcommand{\sk}[1]{{#1}}
\newcommand{\joe}[1]{{{#1}}}

% Todonotes is useful during development; simply uncomment the next line
%    and comment out the line below the next line to turn off comments
%\usepackage[disable,textsize=tiny]{todonotes}
\usepackage[textsize=tiny]{todonotes}

\newcommand{\Lsig}{{\normalfont \tilde{L}}}
\newcommand{\asharp}{{\normalfont a^{\sharp}}}
\newcommand{\atsharp}{{\normalfont a_t^{\sharp}}}

\newcommand{\masteralg}{{\small\textsf{\textup{MASTER}}}\xspace} % wei 2021 algorithm
\newcommand{\ADAILTCB}{\textsc{Ada-ILTCB}\xspace}

% algorithm2e
\usepackage[algo2e, vlined, ruled, linesnumbered]{algorithm2e}
% algorithm2e commenting
%\newcommand{\com}[1]{{\color{red}\textbf{[[} #1 \textbf{]]}}}
%\newcommand\mycommfont[1]{\footnotesize\textcolor{black}{#1}}

\SetCommentSty{mycommfont}

\usepackage{etoolbox}  % patch def of algorithmic environment
\makeatletter
\patchcmd{\algorithmic}{\addtolength{\ALC@tlm}{\leftmargin} }{\addtolength{\ALC@tlm}{\leftmargin}}{}{}
\makeatother

\newcommand{\nonl}{\renewcommand{\nl}{\let\nl}}

% enable do while
\SetKwRepeat{Do}{do}{while}

% number one line of align*
\newcommand\numberthis{\addtocounter{equation}{1}\tag{\theequation}}

% hyperref makes hyperlinks in the resulting PDF.
% If your build breaks (sometimes temporarily if a hyperlink spans a page)
% please comment out the following usepackage line and replace
\usepackage{hyperref}

% if you use cleveref..
\usepackage[capitalize,noabbrev]{cleveref}
\crefname{algocf}{Algorithm}{Algorithms}
\Crefname{algocfproc}{Algorithm}{Algorithms}
\Crefname{definition}{Definition}{Definitions}
\Crefname{note}{Notation}{Notations}
\Crefname{defn}{Definition}{Definitions}
\Crefname{rmk}{Remark}{Remarks}
\Crefname{prop}{Proposition}{Propositions}
\Crefname{thm}{Theorem}{Theorems}
\Crefname{cor}{Corollary}{Corollaries}
\Crefname{lemma}{Lemma}{Lemmas}
\Crefname{algo}{Algorithm}{Algorithms}
\Crefname{ex}{Example}{Examples}
\Crefname{answer}{Answer}{Answers}
\Crefname{ques}{Question}{Questions}
\Crefname{prob}{Problem}{Problems}
\Crefname{assumption}{Assumption}{Assumptions}
\Crefname{note}{Notation}{Notations}
\Crefname{fact}{Fact}{Facts}
\Crefname{exer}{Exercise}{Exercises}
\Crefname{conj}{Conjecture}{Conjectures}
\Crefname{claim}{Claim}{Claims}
\Crefname{subsection}{Subsection}{Subsections}
\Crefname{section}{Section}{Sections}
\Crefname{table}{Table}{Tables}
\crefname{table}{Table}{Tables}
\crefformat{equation}{(#2#1#3)}

% fix algorithm2e labelling by cleverref
\crefalias{AlgoLine}{line}%
\makeatletter
\let\cref@old@stepcounter\stepcounter
\def\stepcounter#1{%
  \cref@old@stepcounter{#1}%
  \cref@constructprefix{#1}{\cref@result}%
  \@ifundefined{cref@#1@alias}%
    {\def\@tempa{#1}}%
    {\def\@tempa{\csname cref@#1@alias\endcsname}}%
  \protected@edef\cref@currentlabel{%
    [\@tempa][\arabic{#1}][\cref@result]%
    \csname p@#1\endcsname\csname the#1\endcsname}}
\makeatother

% tags for underbrace in equation
\usepackage{mathtools}% http://ctan.org/pkg/mathtools
\makeatletter
\newcommand{\mytag}[2]{%
  \text{#1}%
  \@bsphack
  \begingroup
    \@onelevel@sanitize\@currentlabelname
    \edef\@currentlabelname{%
      \expandafter\strip@period\@currentlabelname\relax.\relax\@@@%
    }%
    \protected@write\@auxout{}{%
      \string\newlabel{#2}{%
        {#1}%
        {\thepage}%
        {\@currentlabelname}%
        {\@currentHref}{}%
      }%
    }%
  \endgroup
  \@esphack
}
\makeatother

\newcommand{\cmeta}{{\small\textsf{\textup{CMETA}}}\xspace}
\newcommand{\base}{{\small\textsf{\textup{Base-Alg\,}}}}

\DeclareMathOperator{\Amaster}{\mc{A}_{\text{master}}}
\DeclareMathOperator{\tstart}{\it{t}_{{\normalfont \text{start}}}}

\newcommand{\Proper}{{\normalfont \textsc{Proper}(s_{\ell}(r),e_{\ell}(r))}}
\newcommand{\Subproper}{{\normalfont \textsc{SubProper}(s_{\ell}(r),s_{\ell}(r))}}
\newcommand{\Phases}{{\normalfont \textsc{Phases}}}
\newcommand{\remove}[1]{}
\newcommand{\TV}[1]{\|#1\|_{\normalfont\text{TV}}}
\newcommand{\ttau}{\tilde{\tau}}
\newcommand{\Lsigpop}{{\normalfont \tilde{L}_{\text{pop}}}}
\DeclareMathOperator{\pitoracle}{\pi_t^{\text{oracle}}}

\hypersetup{backref=true,
    pagebackref=true,
    hyperindex=true,
    colorlinks=true,
    breaklinks=true,
	citecolor=blue,
	filecolor=black,
	linkcolor=blue,
	urlcolor=red,
	linkbordercolor=blue
}

\title{Tracking Most Significant Shifts\\
in Nonparametric Contextual Bandits}

% The \author macro works with any number of authors. There are two commands
% used to separate the names and addresses of multiple authors: \And and \AND.
%
% Using \And between authors leaves it to LaTeX to determine where to break the
% lines. Using \AND forces a line break at that point. So, if LaTeX puts 3 of 4
% authors names on the first line, and the last on the second line, try using
% \AND instead of \And before the third author name.

\author{%
  Joe Suk\\
  Columbia University\\
  \texttt{joe.suk@columbia.edu} \\
  % examples of more authors
   \And
   Samory Kpotufe\\
Columbia University\\
\texttt{samory@columbia.edu}
  % Coauthor \\
  % Affiliation \\
  % Address \\
  % \texttt{email} \\
  % \AND
  % Coauthor \\
  % Affiliation \\
  % Address \\
  % \texttt{email} \\
  % \And
  % Coauthor \\
  % Affiliation \\
  % Address \\
  % \texttt{email} \\
  % \And
  % Coauthor \\
  % Affiliation \\
  % Address \\
  % \texttt{email} \\
}

\begin{document}
\maketitle

\begin{abstract}
We study nonparametric contextual bandits where Lipschitz mean reward functions
%$f_t^a:[0,1]^d\to\mathbb{R}$ associated to each arm $a\in[K]$
may change over time.
%While recent works on non-stationary contextual bandits have mainly focused on parametric settings, the nonparametric problem has received little attention in this regard with
We first establish the minimax dynamic regret rate in this less understood setting in terms of number of changes $L$ and total-variation $V$, both capturing all changes in distribution over context space, and argue that state-of-the-art procedures are suboptimal in this setting.

% As the non-parametirc setting is less understood, we first

% We first show the current state-of-the-art results on non-stationary contextual bandits (Wei and Luo, 2021) do not achieve optimal dynamic regret %(over $T$ rounds) $L^{\frac{1}{2+d}} T^{\frac{1+d}{2+d}} \land V^{\frac{1}{3+d}} T^{\frac{2+d}{3+d}}$ in terms of number of changes $L$ or total-variation $V$, when applied to this setting.

%We first derive the minimax dynamic regret rates in this setting, expressed in terms of the number of changes in distribution $L$ or total-variation $V$ of changes over time, and show that the known theoretical state-of-the-art results on dynamic regret minimization in nonstationary contextual bandits (Wei and Luo, 2021) fail to achieve optimal dependence on $L$ and $V$.

Next, we tend to the question of an {\em adaptivity} for this setting, i.e. achieving the minimax rate without knowledge of $L$ or $V$. Quite importantly, we posit that the bandit  problem, viewed locally at a given context $X_t$, should not be affected by reward changes in other parts of context space $\cal X$. We therefore propose a notion of {\it change}, which we term {\em experienced significant shifts}, that better accounts for locality, and thus counts considerably less changes than $L$ and $V$. Furthermore, similar to recent work on non-stationary MAB \citep{suk22}, {\em experienced significant shifts} only count the most {\em significant} changes in mean rewards, e.g., severe best-arm changes relevant to observed contexts.

Our main result is to show that this more tolerant notion of change can in fact be adapted to.

% capture changes at the experienced contexts $x$, but as in previous work on non-stationary MAB \citep{suk22}, they also capture the most {\em significant} changes in mean rewards (e.g., only count severe best-arm changes at $x$).

%At the same time, it is known that in the ordinary multi-armed bandit setting, tighter dynamic regret rates are possible which only scale with significant distributions shifts, those truly necessitating re-exploration (Suk and Kpotufe, 2022). We propose a generalized notion of {\em experienced significant shifts} which capture severe changes in the reward function $f_t(X_t)$ only at the observed contexts $X_t$. We provide an adaptive algorithm, requiring no prior knowledge of non-stationarity, which achieves optimal dynamic regret in terms of experienced significant shifts. This result not only recovers (and may drastically improve) the minimax regret rates in terms of $L$ and $V$, but also
%traditional non-stationarity measures (e.g. number of changes in distribution or total-variation of changes), but also
%provides the first performance guarantees which degrade only with shifts experienced in observed contexts, a new consideration which is missing from current works on nonstationary sequential learning.
\end{abstract}

\section{Introduction}\label{sec:intro}

Contextual bandits model sequential decision making problems where the reward of a chosen action depends on an observed context $X_t$ at time $t$, e.g., a consumer's profile, a medical patient's history. The goal is to maximize the total rewards over time of chosen actions, as informed by seen contexts. As such, one suitable measure of performance is that of {\em dynamic regret}, which compares earned rewards to a time-varying oracle maximizing mean rewards at $X_t$. While it is often assumed in the bulk of works in this setting that rewards distributions remain stationary over time, it is understood that in practice, environmental changes induce nontrivial changes in rewards.

In fact, the problem of non-stationary environments has received a surge of attention in the simpler non-contextual Multi-Arm-Bandits (MAB) setting, while the more challenging contextual case remains ill-understood. In particular in the contextual case, some recent works \citep{wu2018learning,luo2018efficient,chen2019,wei2021} consider {\it parametric settings}, i.e. where reward functions belong to fixed parametric family, and show that one may achieve rates adaptive to an unknown number of $L$ of shifts in rewards or to a notion of total-variation $V$, both acccounting for all changes over time and context space. Instead here, we consider a much larger class of reward functions, namely Lipschitz rewards, corresponding to the natural assumption that closeby contexts have similar rewards even as reward distributions change.

As a first result for this nonparametric setting, we establish some minimax lower-bounds as a baseline in terms of either $L$ or $V$, and argue that state-of-the-art procedures for the parametric case---extended to the class of Lipschitz functions---do not achieve these baselines.

We then turn attention to whether such baselines may be achieved {\it adaptively}, i.e., without knowledge of $L$ or $V$. The answer as we show is affirmative, and more importantly, some much weaker notions of change may be adapted to; for intuition, while $L$ or $V$ accounts for any change at any time over the context space (say $\cal X$), it may be that all changes are relegated to parts of the space irrelevant to observed contexts $X_t$ at the time they are played. For instance, suppose at time $t$, we observe $X_t=x_0$, then it may not make sense to count changes
that happen at some other $x_1$ far from $x_0$, or changes that happened at $x_0$ itself but far back in time.

We therefore propose a new parameterization of change, termed {\it experienced significant shifts} that better accounts for the locality of changes in time and space, and as such may register much less changes than either $L$ or $V$. As a sanity check, we show that an oracle policy which restarts only at experienced significant shifts can attain enhanced regret rates in terms of the number $\Lsig = \Lsig(X_1,\ldots,X_T)$ of such experienced shifts (\Cref{prop:sanity}), a rate always no worse that the baseline we first established in terms of $L$ and $V$.

Our main result is to show that {\it experienced significant shifts} can be adapted to (\Cref{thm}), i.e., with no prior knowledge of such shifts. Importantly, the result holds in both stochastic environments, and in (oblivious) adversarial ones with no change to our notion, algorithmic approach, nor analysis. Furthermore, similar to recent advances in the non-contextual case \citep{abbasi-yadkori2022,suk22}, an {\it experienced shift} is only triggered under {\it severe changes} such as changes of best arms locally at a context $X_t$. An added difficulty in the contextual case is that we cannot hope to observe rewards for a given arm (action) repeatedly at $X_t$ as the context may only appear once, and have to rely on carefuly chosen nearby points to identify unknown shifts in reward at $X_t$.

\subsection{Other Related Work}\label{subsec:related-work} %on Nonparametric Contextual Bandits}

\paragraph{Nonparametric Contextual Bandits.}
%The literature on this nonparametric contextual bandit problem has largely focused on the stationary environment with a fixed reward and context distribution.
The stationary bandits with covariates (where rewards and contexts follow a joint distribution) was first introduced in a one-armed bandit problem \citep{woodroofe1979one,sarkar1991one}, with the nonparametric model first studied by \citet{yang2002randomized}. Minimax regret rates, based on a margin condition, were first established for the two-armed bandit in \citet{rigollet-zeevi} and generalized to any finite number of arms in \citet{perchet-rigollet}, with further insights thereafter \citep{qian16a,qian16b,reeve,guan-jiang,hu20,arya20,suk21,gur-momeni-wager,cai22}. However, the mentioned works all assume a stationary distribution of rewards over contexts. \citet{blanchard23} studies non-stationary nonparametric contextual bandits, but in the much-different context of universal learning, concerning when sublinear regret is achievable asymptotically.
Lipschitz contextual bandits also appears as part of studies on broader infinite-armed settings \citep{lu2009showing,krishnamurthy19a}. Related, \citet{slivkins2014contextual} allows for non-stationary (i.e., obliviously adversarial) environments, but only studies regret to the (per-context) best arm in hindsight.
{\em Realizable contextual bandits} posits that the regression function capturing mean rewards in contexts lies in some known class of regressors $\mc{F}$, over which one can do empirical risk minimization \citep{foster18,foster20b,simchi21}. %Key in this is a realizability assumption that $\mc{F}$ contains the true regression functions capturing mean reward functions in contexts.
While this setting recovers Lipschitz contextual bandits, the only applicable non-stationary guarantee to our knowledge is \citet{wei2021}, which yields suboptimal dynamic regret (see \Cref{table}).

%\citet{rakhlin15,cesa-bianchi17} study a subclass of Lipschitz contextual bandits where one aims to minimize regret to the best {\em Lipschitz policy}.

\paragraph{Non-Stationary Bandits and RL.} In the simpler non-contextual bandits, changing reward distributions (a.k.a. {\em switching bandits}) %have seen a recent surge of attention. This setting,
%where performance is measured via the {\em dynamic regret} against a time-varying benchmark of optimal arms,
was introduced in \citet{garivier2011} and further explored with various assumptions and formulations \citep{karnin2016multi,allesiardo2017,liu2018,wei-srivastava,besbes2014,cao2019,mukherjee2019,besson2019}. While these earlier works focused on algorithmic design assuming knowledge of non-stationarity, such a strong assumption was removed via the {\em adaptive} procedures of \citet{auer2019,chen2019}. In followup works, \citet{abbasi-yadkori2022,suk22} show that tighter dynamic regret rates are possible, scaling only with severe changes in best arm. %The latter of these works introduces a notion of {\em significant shift}, occuring when there is no safe arm $a$ left to play in the sense that its regret is $\Omega(\sqrt{|I|})$ over some interval $I$, to capture what is meant by severe.
The ideas from non-stationary MAB were extended to various contextual bandit settings by  \citet{wu2018learning} (for linear mean rewards in contexts), \citet{luo2018efficient,chen2019} (for finite policy classes), and \citet{wei2021} (for realizable mean reward functions).
%Of the above mentioned works, the only results which can recover the nonparametric contextual bandit setting are those with finite policy class \citep{chen2019} and realizable mean rewards \citep{wei2021}.
%In fact, the latter two settings can recover Lipschitz contextual bandits, but the state-of-the-art guarantees on dynamic regret in said works are suboptimal in our setting (see \Cref{table}). A first goal is to design an algorithm which can attain optimal dynamic regret without, as in previous works \citep{chen2019,wei2021} on adaptivity, requiring prior knowledge of the environment's non-stationarity.
%In much of these works, the non-stationarity is typically quantified by the number $L$ of changes in the joint distribution over contexts and rewards or by the total-variation $V$ quantifying magnitude of changes in mean rewards.
There have also been extensions to various reinforcement learning setups \citep{jaksch2010,gajane2018sliding,cheung2019drift,pmlr-v115-ortner20a,fei2020dynamic,cheung2020reinforcement,touati2020efficient,domingues2021kernel,mao2020model,domingues2021kernel,wei2021,zhou2022nonstationary,lykouris2021corruption,wei2022model,chen2022goal,ding2022provably}. Of these works, only \citet{domingues2021kernel} can recover Lipschitz contextaul bandits, whereupon we find their dynamic regret bounds are suboptimal (see \Cref{table}).
Again, the typical aim of the aforementioned works on contextual bandits or RL is to minimize a notion of dynamic regret in terms of the number of changes $L$ or total-variation $V$. As such, the guarantees of such works don't involve tighter notions of experienced non-stationarity, such as those studied in this work. %This is, of course, an interesting future direction to explore.

%To contrast, our regret bound (\Cref{thm}) has optimal dynamic regret.

%\paragraph{Other Works on Nonstationary Sequential Learning.} linear bandits \citep{cheung2019drift,kim20,zhao2020},  and generalized linear bandits \citet{russac2020}.

%At the same time, there is a recently growing industry of works tackling non-stationarity in online learning scenarios, including bandit convex optimization \citep{zhao2021bco}, linear bandits \citep{russac2020algorithms,pmlr-v108-zhao20a}, generalized linear bandits \citep{faury2021regret}, cascading bandits \citep{wang21}, combinatorial semi-bandits \citep{chen2021combinatorial},
%\paragraph{Nonparametric Contextual Bandits.}

%\citet{domingues2021kernel} study non-stationary {\em continuous MDPs}, which broadly generalizes the Lipschitz contextual bandit setting to RL. They derive dynamic regret rates (properly defined) in terms of variation quantities in the rewards or transitions. However, their rates are suboptimal in our setting even under a stationary environment, and for non-stationary environments their algorithm requires knowledge of the amount of change.

\section{Problem Formulation}\label{sec:setup}

\subsection{Contextual Bandits with Changing Rewards}

{\bf Preliminaries.} We assume a finite set of arms $[K] \doteq \{1, 2 \ldots, K\}$. Let $Y_t \in [0, 1]^K$ denote the vector of rewards for arms $a \in [K]$ at round $t\in [T]$ (horizon $T$), and $X_t$ the observed context at that round, lying in ${\cal X} \doteq [0, 1]^{d}$, which have joint distribution $(X_t,Y_t)\sim \mc{D}_t$. We let $\textbf{X}_t \doteq \{X_s\}_{s\leq t}, \textbf{Y}_t \doteq \{Y_s\}_{s\leq t}$ denote the observed contexts and (observed and unobserved) rewards from rounds $1$ to $t$. In our setting, an oblivious adversary decides a sequence of (independent) distributions on $\{(X_t,Y_t)\}_{t\in[T]}$ before the first round.

\begin{note*}
	The {\bf reward function} $f_t: {\cal X} \to [0, 1]^K$ is $f_t^{a}(x)\doteq\mathbb{E}[Y_t^{a}|X_t=x], \, a \in [K]$, and captures the mean rewards of arm $a$ at context $x$ and time $t$.
\end{note*}

A \emph{policy} chooses actions at each round $t$, based on observed contexts (up to round $t$) and passed rewards, whereby at each round $t$ only the reward $Y_t^a$ of the chosen action $a$ is revealed. Formally: %We say an arm $a$ is pulled if action $a$ is chosen by the policy. We adopt the following formalism.

\begin{defn}[Policy]
A policy $\pi \doteq \{ \pi_t\}_{t \in \sk{\mathbb{N}}}$ is a random sequence of functions
$\pi_t: {\cal X}^t \times [K]^{t-1} \times [0, 1]^{t-1} \to [K] $.
%In an abuse of notation, in the context of a sequence of observations till round $t$, we will let $\pi_t \in [K]$ also denote the action chosen at round $t$.
A {\bf randomized} policy $\pi_t$ maps to distributions on $[K]$,
In an abuse of notation, in the context of a sequence of observations till round $t$, we'll let $\pi_t \in [K]$ denote the (possibly random) action chosen at round $t$.
%we will still let $\pi_t \in [K]$ denote the (random) action chosen at round $t$.

% A {\bf randomized} policy $\pi \doteq \{ \pi_t\}_{t\in\mathbb{N}}$ is  a sequence of maps $\pi_t$ from ${\cal X}^t \times [K]^{t-1} \times [0, 1]^{t-1}$ to the space of distributions on $[K]$. In other words, the output $\pi_t \in [K]$ will then denote a \emph{random} action chosen at round $t$.% using a random mechanism independent of $\{(X_t,Y_t)\}_{t\in\mathbb{N}}$.
\end{defn}

The performance of a policy is evaluated using the {\em dynamic regret}, defined as follows:

\begin{defn}\label{defn:dynamic-regret}
	Fix a context sequence $\bld{X}_T$. Define the {\bf dynamic regret} of a policy $\pi$, as
	\[
		R_T(\pi,\bld{X}_T) \doteq
	\sum_{t = 1}^{T} \max_{a \in [K]} f_t^a(X_t) -  f_t^{\pi_t}(X_t) .
    \]
\end{defn}

So, we %seek a policy $\pi$ minimizing
aim to minimize $\mathbb{E}[R_T(\pi,\bld{X}_T)]$ where the expectation is over $\bld{X}_T$, $\bld{Y}_T$, and randomness in $\pi$. %(in expectation over $\textbf{X}_n,\textbf{Y}_n$).

\begin{note*}
	%The {\bf best-arm oracle policy} $\pi_t^*$ refers to the strategy that maximizes the expected reward at any round $t$, and is given by
%$	\pi_t^*(X_t) \in {\argmax}_{a \in [K]} \, f^{a}(X_t) $.
%The regret of a policy $\pi$ is therefore the excess expected reward of $\pi^*$ relative to $\pi$ over ${\bf X}_T$.
	As much of our analysis focuses on the gaps in mean rewards between arms at observed contexts $X_t$, the following notation will serve useful. Let $\delta_t(a',a) \doteq f_t^{a'}(X_t) - f_t^a(X_t)$ denote the {\bf relative gap} of arms $a$ to $a'$ at round $t$ at context $X_t$. Define the {\bf worst gap} of arm $a$ as $\delta_t(a) \doteq \max_{a'\in[K]} \delta_t(a',a)$, corresponding to the instantaneous regret of playing $a$ at round $t$ and context $X_t$. Thus, the dynamic regret can be written as $\sum_{t\in [T]} \mb{E}[\delta_t(\pi_t)]$. Additionally, we will occasionally talk about the {\bf gap functions} in context $x\in \mc{X}$. In an abuse of notation, let $\delta_t^{a',a}(x) \doteq f_t^{a'}(x) - f_t^a(x)$ and $\delta_t^a(x) \doteq \max_{a'\in [K]} \delta_t^{a',a}(x)$. Generally, when a context $x$ is not specified, as in the quantities $\delta_t(a',a),\delta_t(a)$, it should be assumed that the context in question is $X_t$.
	%be the {\bf gap functions} mapping $\mc{X} \to [0,1]$.
\end{note*}

\subsection{Nonparametric Setting}\label{sec:nonparametric}

%Our main assumptions below are stated under the $\ell_\infty$ norm on $[0, 1]^{d}$ for convenience, as we build our procedures $\pi$ over regular grids of $[0, 1]^{d}$. It should be clear however that the relevant conditions hold under any norm (e.g., any $\ell_p$, $p\geq 1$) when they hold under $\ell_\infty$, by the equivalence of $\mathbb{R}^d$ norms. % in $\mathbb{R}^d$.

We assume, as in prior work on nonparametric contextual bandits \citep{rigollet-zeevi, perchet-rigollet, slivkins2014contextual, reeve, guan-jiang,suk21}, that the reward function is $1$-Lipschitz.
%As is typical in works on nonparametrics, this can straightforwardly be generalized to the setting with reward functions with H\"{o}lder exponent $\alpha\leq 1$.

\begin{assumption}[Lipschitz $f_t$]\label{assumption:lipschitz}
	For all rounds $t\in\mb{N}$, $a\in[K]$ and $x,x'\in\mathcal{X}$,
\begin{equation}\label{eq:smoothness}
	|f_t^{a}(x)-f_t^{a}(x')|\leq \|x-x'\|_\sk{\infty}.
\end{equation}
%where $\|\cdot\|$ denotes the Euclidean norm on $[0, 1]^d$.
\end{assumption}

%The next assumption, called the {\em strong density assumption}, ensures that $\mu_X$ has good coverage of $[0,1]^d$. It holds for instance if $\mu_X$ has lower-bounded Lebesgue density on $[0,1]^d$.

For ease of presentation, we assume the contextual marginal distribution $\mu_X$ remains the same across rounds. Furthermore, we make a standard {\em strong density assumption} on $\mu_X$, which is typical in this nonparametric setting \citep{audibert-tsybakov,perchet-rigollet,qian16a,qian16b,gur-momeni-wager,hu20,arya20,cai22}. This holds, e.g. if $\mu_{X}$ has a continuous Lebesgue density on $[0,1]^d$, and ensures good coverage of the context space.

\begin{assumption}[Strong Density Condition]\label{assumption:strong-density}
	There exist $C_{d},c_{d}>0$ s.t. $\forall \ell_{\infty}$ balls $B\subset [0,1]^d$ of diameter $r\in (0,1]$:
	\begin{equation}\label{eq:strong-density}
		C_{d} \cdot r^d \geq \mu_{X}(B) \geq c_{d}\cdot r^d.
	\end{equation}
\end{assumption}

\begin{rmk}
	We can in fact relax \Cref{assumption:strong-density}
	%on context marginals
	so that $\mu_{X,t}(\cdot)$ is changing in $t$ and \Cref{eq:strong-density} is satisfied with different constants $C_{d,t},c_{d,t}$.
%that we do {\bf not} require the context marginal distribution $\mu_{X,t}$ to be the same between rounds and we allow flexibility in \Cref{assumption:strong-density} as the bound constants $C_{d,t},c_{d,t}$ can change from round to round.
Our procedures in the end will not require knowledge of any $C_{d,t},c_{d,t}$.
\end{rmk}

%\subsection{Dynamic Regret}

\subsection{Model Selection}\label{subsec:model-selection}

A common algorithmic approach in nonparametric contextual bandits, starting from earlier work \citep{rigollet-zeevi, perchet-rigollet}, is to discretize or partition the context space $\mc{X}$ into {\em bins} where we can maintain local reward estimates. These bins have a natural hierarchical tree structure which we first elaborate.

\begin{defn}[Partition Tree]\label{defn:partition-tree}
Let ${\cal R} \doteq \{2^{-i}: i \in \mathbb{N}\cup\{0\}\}$, and let $\mc{T}_r, r\in \cal R$ denote a
regular partition of $[0, 1]^{d}$ into hypercubes (which we refer to as {\bf bins}) of \sk{side length} (a.k.a. bin size) $r$. We then define the dyadic {\bf tree}
$\mc{T} \doteq \{ \mc{T}_r\}_{r \in {\cal R}}$, i.e., a hierarchy of nested partitions of $[0, 1]^{d}$.
We will refer to the {\bf level $r$ of $\mc{T}$} as the collection of bins in partition $\mc{T}_r$. The {\bf parent} of a bin $B \in \mc{T}_r, r < 1$ is the bin $B' \in \mc{T}_{2r}$ containing $B$; {\bf child}, {\bf ancestor} and {\bf descendant} relations follow naturally. We'll use $T_r(x)$ to refer to the bin at level $r$ containing $x$, while $r(B)$ is the side length of bin $B$.
\end{defn}

Note that, while in the above definition, $\mc{T}$ has infinite levels $r \in \cal R$, at any round $t$ in a procedure, we implicitly only operate on the subset of $\mc{T}$ containing data.

Key in securing good regret is then finding the optimal level $r\in \mc{R}$ of discretization (balancing local regression bias and variance), which over $T$ stationary rounds is known to be $\propto (K/T)^{\frac{1}{2+d}}$ \citep{rigollet-zeevi}. The intuition for this choice is that a bias of $T^{-\frac{1}{2+d}}$ is safe to pay for $T$ rounds to maintain a minimax regret rate of $T^{\frac{1+d}{2+d}}$ (see rates in \Cref{thm:lower-bound}). We introduce the following general notation, useful later in the non-stationary problem, for associating a level with an intervals of rounds.

\begin{note}[Level]\label{note:level}
	For $n\in\mb{N}\cup\{0\}$, let $r_n$ be the largest $2^{-m}\in\mc{R}$ such that $(K/n)^{\frac{1}{2+d}} \geq 2^{-m}$. When $I$ is an interval of rounds, we use $r_I$ as shorthand for $r_{|I|}$, where $|I|$ is the length of $I$.

	We use $\mc{T}_m, T_m(x)$ as shorthand to denote (respectively) the tree $\mc{T}_r$ of level $r=r_m$ and the (unique) bin at level $r_m$ containing $x$.
\end{note}

%Note that $r_{s_2-s_1}$ is the optimal choice of level yielding optimal estimation and, in turn, the best provable regret rates over the interval $[s_1,s_2]$, as established in earlier works \citep{rigollet-zeevi}. As such, we use the following notation for associating the size or diameter of a level to an interval of $n$ rounds.

%Key in estimation rates will be the {\em covariate counts}, or the context sample size within a bin.

%\begin{defn}[Covariate Counts]\label{defn:bin-covariate-count}
	%%Given an observed covariate $X_t$, define for a bin size $r$,
	%For a bin $B \in \mc{T}$ and an interval $I$ of rounds, let $ n_B(I) \doteq \sum_{s=s_1}^{s_2} \pmb{1}\{X_s \in I\}$ denote the covariate count in bin $B$ during the rounds of $I$. Let $r(B)$ denote the level of $B$.
%\end{defn}

\section{Results Overview}
\subsection{Minimax Lower Bounds Under Global Shifts }\label{sec:determine}

\begin{table}[h]
  \centering
  \begin{tabular}{|l|l|l|}
    %\multicolumn{2}{c}{Part}                   \\
    %\cmidrule(r){1-2}
	  \hline
         & Dynamic Regret Upper Bound    \\
    \hline
    \ADAILTCB \citep{chen2019}  & $ \left( L^{1/2}\cdot T^{\frac{1+d}{2+d}} \right) \land \left( V_T^{1/3} \cdot T^{\frac{2+d}{3+d} + \frac{d}{3(2+d)(3+d)}}\right) $ \\
    \hline
    MASTER with FALCON \citep{wei2021}  & $ \left( L^{1/2}\cdot T^{\frac{1+d}{2+d}} \right) \land \left( V_T^{1/3} \cdot T^{\frac{2+d}{3+d} + \frac{d}{3(2+d)(3+d)}} \right)$ \\
    \hline
    KeRNS \citep{domingues2021kernel} (non-adaptive)  & $ V_T^{1/3} T^{\frac{2+d}{3+d} + O(1/d)}$ \\
	    \hline
    {\bf Minimax Lower-Bound} & $\left( L^{\frac{1}{2+d}} T^{\frac{1+d}{2+d}} \right) \land \left( V_T^{\frac{1}{3+d}} T^{\frac{2+d}{3+d}} \right) $ \\
    %\hline
    %\cmeta ({\bf our work}; \Cref{cor:sig-shift-num}) & $ \left( \Lsig(\bld{X}_T)^{\frac{1}{2+d}}\cdot T^{\frac{1+d}{2+d}} \right) \land \left( V_T^{\frac{1}{3+d}} \cdot T^{\frac{2+d}{3+d}} \right) $ \\
    \hline
  \end{tabular}
  \vspace{0.1cm}
  \caption{\footnotesize
Existing dynamic regret upper-bounds are suboptimal in the Lipschitz setting.
\vspace{-1cm}
  }\label{table}
  %\caption{\footnotesize Comparison of dynamic regret upper bounds between previous works and our result. Note we have superior dependence on horizon $T$ and all non-stationarity measures, including our new notion of experienced significant shifts $\Lsig(\bld{X}_T) \ll L$.}
\end{table}

\vspace{-0.5cm}
As a baseline, we start with some basic lower-bounds under the simplest parametrizations of changes in rewards which have appeared in the literature, namely a {\it global number of shifts}, and {\it total variation}. %{\color{blue} For simplicity, as we are mostly interested in understanding the hardness of the problem as rewards change, we assume here that the marginal measure $\mu_X(\cdot) \equiv \mu_{X,t}(\cdot)$ does not change.}

\begin{defn}[Global Number of Shifts]\label{defn:global-shifts}
    Let $ L \doteq \sum_{t=2}^T \pmb{1}\{ \exists x\in\mc{X}, a\in [K]: f_t^a(x) \neq f_{t-1}^a(x)\}$ be the number of {\em global shifts}, i.e., it counts every change in mean-reward overtime and over $\mc{X}$ space.
\end{defn}

\begin{defn}[Total Variation]\label{defn:total-variaton}
	Define $V_T \doteq \sum_{t=2}^T \|\mc{D}_t - \mc{D}_{t-1}\|_{\text{TV}}$ where recall $\mc{D}_t \in \mc{X}\times [0,1]^K$ is the joint distribution on context and rewards at time $t$.
\end{defn}

We have the following initial result (for two-armed bandits) to serve as baseline for this study.

\begin{thm}[Dynamic Regret Lower Bound]\label{thm:lower-bound}
	Suppose there are $K=2$ arms. For $V, L\in [0,T]$, let $\mc{P}(V,L,T)$ be the family of joint distributions $\mc{D} \doteq \{\mc{D}_t\}_{t\in [T]}$ with either total variation $V_T \leq V$ or at most $L$ global shifts. Then, there exists a constant $c>0$ such that for any policy $\pi$:
\begin{align}
	\sup_{\mc{D} \in \mc{P}(V,L,T)} \mb{E}_{\mc{D}}[R(\pi,\bld{X}_T)] \geq  c \left(T^{\frac{1+d}{2+d}} + T^{\frac{2+d}{3+d}}\cdot V^{\frac{1}{3+d}}\right) \land \left(  (L+1)^{\frac{1}{2+d}} T^{\frac{1+d}{2+d}} \right). \label{eq:lower-bound}
\end{align}
%Similarly, for $L\in [T]$, let $\mc{P}(L,T)$ be the family of distributions $\mc{D}$ with $\leq L$ global shifts. Then,
%\begin{align*}
	%\sup_{\mc{D} \in \mc{P}(S,T)} \mb{E}_{\mc{D}}[R(\pi,\bld{X}_T)] \geq c\cdot L^{\frac{1}{2+d}} T^{\frac{1+d}{2+d}}
%\end{align*}
\end{thm}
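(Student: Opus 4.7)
The plan is to prove the two branches of the minimum separately via reductions to a stationary Lipschitz two-armed contextual bandit lower bound, and to combine them by noting that $\mc{P}(V,L,T)$ is the union of the $V$-constrained and $L$-constrained families, so the supremum over it dominates each branch (and hence their minimum). The key building block is the baseline stationary bound $\Omega(T^{(1+d)/(2+d)})$, which I would prove by partitioning $\mc{X}=[0,1]^d$ into $N=r^{-d}$ hypercubes of side length $r\asymp T^{-1/(2+d)}$ and, within each bin $B$, drawing a sign $\sigma_B\in\{\pm 1\}$ i.i.d.\ uniformly to set Bernoulli reward means $f^1(x) = 1/2 + \sigma_B \epsilon/2$ and $f^2(x) = 1/2 - \sigma_B \epsilon/2$ with gap $\epsilon\asymp r$ (smoothed over $O(\epsilon)$ margins at bin boundaries to keep the instance $1$-Lipschitz). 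By \Cref{assumption:strong-density}, each bin sees $n_B \asymp Tr^d$ contexts in expectation; the calibrated choice $\epsilon\asymp(Tr^d)^{-1/2}\asymp r$ saturates a per-bin two-point Le Cam test, giving per-bin regret $\Omega(\sqrt{n_B})$, which sums across the $N$ bins to $\Omega(r^{-d/2}\sqrt{T}) = \Omega(T^{(1+d)/(2+d)})$.

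\textbf{Extension to $L$ shifts and to $V$.} For the $L$-branch, I would concatenate $L+1$ independent copies of the baseline on consecutive epochs of length $T/(L+1)$, each with a fresh sign assignment; between epochs the best arm may flip in every bin, which accrues at most $L$ global shifts. Independence across epochs makes dynamic regret additive, giving $\Omega((L+1)\cdot (T/(L+1))^{(1+d)/(2+d)}) = \Omega((L+1)^{1/(2+d)} T^{(1+d)/(2+d)})$. For the $V$-branch, I would instead run $M$ epochs of length $\Delta_T=T/M$, now using gap and bin side length both equal to $\epsilon_M = r_M = \Delta_T^{-1/(2+d)}$, so that per-epoch regret $\Omega(\Delta_T^{(1+d)/(2+d)})$ sums to $\Omega(M^{1/(2+d)} T^{(1+d)/(2+d)})$. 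Since $\mu_X$ is held fixed, the between-epoch $\|\mc{D}_{t}-\mc{D}_{t-1}\|_{\text{TV}}$ collapses to an $L^1$-distance on reward means, so each of the $M-1$ shifts costs $\lesssim K\epsilon_M$; saturating the budget $V \gtrsim K M \epsilon_M$ yields $M \asymp V^{(2+d)/(3+d)} T^{1/(3+d)}$ and, after substitution, $\Omega(V^{1/(3+d)} T^{(2+d)/(3+d)})$, which combined with the stationary baseline gives the $V$-branch.

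\textbf{Main obstacle.} The delicate step is the per-bin Le Cam argument in the stationary baseline: I need to show that information gathered from pulls outside bin $B$ cannot distinguish the two hypotheses for $\sigma_B$, uniformly over all adaptive policies. This follows from independence of the signs across bins together with a standard KL tensorization/coupling (as in \citet{rigollet-zeevi,perchet-rigollet}), while the random sample size $n_B$ is controlled by a Bernstein-type concentration inequality under \Cref{assumption:strong-density}. Lifting to the non-stationary constructions is then routine because the pieces across epochs are independent, and the TV calculation is clean precisely because the context marginal $\mu_X$ does not change, so the joint TV equals the integrated reward-distribution TV.
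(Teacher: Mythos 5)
Your proposal is correct and follows essentially the same route as the paper: establish a stationary $\Omega(n^{(1+d)/(2+d)})$ lower bound over a family of bin-localized Lipschitz perturbations (the paper invokes the Rigollet--Zeevi Theorem~4.1 construction, augmented with exogenous input randomness $U$ to handle information carried over between epochs), then concatenate $L+1$ independent copies for the $L$-branch, and choose $M \asymp V^{(2+d)/(3+d)}T^{1/(3+d)}$ epochs with the matching TV-budget calculation for the $V$-branch. The only small presentational differences are that the paper keeps one arm constant at $1/2$ rather than perturbing both arms symmetrically, and it makes the ``exogenous randomness'' device explicit in the stationary proposition so that later epochs can condition on earlier observations without loss.
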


\begin{rmk}
	Note setting $d=0$ in \Cref{thm:lower-bound} recovers the minimax rate $( \sqrt{T} + T^{2/3}V_T^{1/3} )\land \sqrt{(L+1)\cdot T}$ for non-contextual bandits \citep{besbes2014}.
\end{rmk}

%\vspace{-2em}
\paragraph{Achievability of Minimax Lower-Bound \Cref{eq:lower-bound}.}
We are interested in whether the rates of \Cref{eq:lower-bound} are achievable, with, or without knowledge of relevant parameters. First, we note that no existing algorithm currently guarantees a rate that matches \Cref{eq:lower-bound}. See \Cref{table} for a rate comparison (details for specializing to our setting found in \Cref{app:compare}).

In particular, the prior adaptive works \citep{chen2019,wei2021} both rely on the approach of randomly scheduling {\em replays} of stationary algorithms to detect unknown non-stationarity. However, the scheduling rate is designed to safeguard against the parametric $\sqrt{LT} \land V_T^{1/3}T^{2/3}$ regret rates and thus lead to suboptimal dependence on $L$ and $V_T$.
%To contrast, we design a scheduling rate for our setting which leads to optimal regret (see discussion in \Cref{sec:alg})
%are too conservative for our setting. As a result, they suffer the parametric-style $L^{1/2}$ and $V_T^{1/3}$ dependence on $L$ and $V_T$, respectively, which are suboptimal in light of \Cref{thm:lower-bound}.

However, a simple back of the envelope calculation indicates that the rate in \Cref{eq:lower-bound} may be attainable, at least given some distributional knowledge: a minimax-optimal stationary procedure restarted at each shift will incur regret, over $L$ equally spaced shifts, $(L+1)\cdot \left(\frac{T}{L+1}\right)^{\frac{1+d}{2+d}} \approx (L+1)^{\frac{1}{2+d}}\cdot T^{\frac{1+d}{2+d}}$.

As it turns out as we will show in the next section, \Cref{eq:lower-bound} is indeed attainable, even adaptively; in fact, this is shown via a more optimistic problem parametrization as described next.

\subsection{A New Problem Parametrization: Experienced Significant Shifts.}\label{subsec:sig-shift}

As discussed in \Cref{subsec:model-selection}, typical approaches \citep{rigollet-zeevi,perchet-rigollet} in our setting discretize the context space $\mc{X}$ into bins, each of which is treated as an MAB instance. At a high level, our new measure of non-stationarity will trigger an {\bf experienced significant shift} when the observed context $X_t$ arrives in a bin $B \in \mc{T}$ where there has been a severe change in local best arm, {\em w.r.t. the observed data in that bin}.

%Similar to \citet{suk22},
We first define a notion of {\bf significant regret} for an arm $a \in [K]$ {\em locally} within a bin $B\in \mc{T}$. We say arm $a$ incurs \bld{significant regret} in bin $B$ on interval $I$ if:
\begin{equation}\label{eq:bad-arm}
	\sum_{s \in I} \delta_{s}(a)\cdot\pmb{1}\{X_s\in B\} \geq \sqrt{K\cdot n_B(I)} + r(B)\cdot n_B(I)\tag{$\star$},
\end{equation}
where $ n_B(I) \doteq \sum_{s\in I} \pmb{1}\{X_s \in I\}$ and recall $r(B)$ is the side length of bin $B$.
%This notion captures the aggregate badness of an arm $a$ within bin $B$.
The intuition for \Cref{eq:bad-arm} is as follows: suppose that, over $n$ separate rounds, we observe the same context $X_s=x_0$ in bin $B$. Then,
%via the notion of significant regret in \citet{suk22},
arm $a$ would be considered unsafe in the local bandit problem at context $x_0$ if its regret exceeds $\sqrt{K\cdot n}$ (i.e., the first term on the above R.H.S.), which is a safe regret to pay for the non-contextual problem. Our broader notion \Cref{eq:bad-arm} extends this over the bin $B$ by also accounting for the bias (i.e., the second term on the above R.H.S.) of observing $X_s$ near a given context $x_0\in B$.

\begin{rmk}[Significant Regret Occurs at Critical Levels]\label{rmk:critical-level}
	If \Cref{eq:bad-arm} holds for some bin $B$, then in fact it also roughly holds for the bin $B'$, with $B' \cap B \neq \emptyset$, at the critical level $r_{|I|}$ (see \Cref{note:level}) w.r.t. interval $I$ (\Cref{lem:relate-levels} and \Cref{prop:behavior}). Thus, it suffices to only check \Cref{eq:bad-arm} for the critical levels $r_{|I|}$, which will be crucial in the analysis.
	Additionally, all such critical levels $r_{|I|}$ are above the optimal level $T^{-\frac{1}{2+d}}$ for $T$ stationary rounds (see \Cref{subsec:model-selection}). Thus, only $O(\log(T))$ levels play a role in in checking \Cref{eq:bad-arm} for all intervals of time $I$ and bins $B$.
\end{rmk}

We then propose to record an {\em experienced significant shift} when we experience a context $X_t$, for which there is no safe arm to play in the sense of \Cref{eq:bad-arm}. The following notation will be useful.

\begin{note*}
	For the sake of succinctly identifying regions of time with experienced significant shifts, we will conflate the closed, open, and half-closed intervals of real numbers $[a,b]$, $(a,b)$, and $[a,b)$, respectively, with the corresponding rounds contained therein, i.e. $[a,b]\equiv [a,b]\cap \mb{N}$.
\end{note*}

%\com{suppose we were to localize the bandit problem to bin $B$ in interval $I$. Then,
%we have regret we want to pay in that bin plus a bias in that bin. Suppose we observe $X_s=x_0$ $n_B$ times during $I$ in $B$. Then, we would incur high regret in that problem if the LHS exceeds the sqrt. Then, \Cref{eq:bad-arm} accounts for the bias, that we observe pints near $x_0$.}

 \begin{defn}\label{defn:sig-shift}
     Fix the context sequence $X_1,X_2,\ldots,X_T$.

     $\bullet$ We say an arm $a\in [K]$ is {\bf unsafe at context $x\in \mc{X}$ on $I$} if there exists a bin $B \in \mc{T}$ containing $x$ such that arm $a$ incurs significant regret \Cref{eq:bad-arm} in bin $B$ on $I$.

     We then have the following recursive definition:

$\bullet$ Let $\tau_0=1$. Define the $(i+1)$-th {\bf experienced significant shift} as the earliest time $\tau_{i+1} \in (\tau_i,T]$ such that every arm $a\in [K]$ is unsafe at $X_t$ on some interval $I\subset [\tau_i,\tau_{i+1}]$.
	%Fix the context sequence $X_1,X_2,\ldots$. Let $\tau_0=1$. Then, {recursively for $i \geq 0$}, the $(i+1)$-th {\bf experienced significant shift} {is recorded at time} $\tau_{i+1}$, {which denotes} the earliest time $\tau \in (\tau_i, T]$ such that \emph{for every arm} $a\in [K]$, there exists rounds $s_1<s_2, [s_1,s_2] \subseteq [\tau_i,\tau]$, such that {arm} $a$ experiences {significant} regret \Cref{eq:bad-arm} in the bin $T_r(X_{\tau})$ at level $r$ containing $X_{\tau}$ on $[s_1,s_2]$ where $r=r_{s_2-s_1}$.
	{We refer to intervals $[\tau_i, \tau_{i+1}), i\geq 0,$ as {\bf experienced significant phases}. The unknown number of such shifts (by time $T$) is denoted $\Lsig$, whereby $[\tau_{\Lsig}, \tau_{\Lsig+1})$, for $\tau_{\Lsig+1} \doteq T+1,$ is the last phase.}
\end{defn}

\begin{figure}[h]
    \centering
    \vspace{-5mm}
    \includegraphics[scale=0.7]{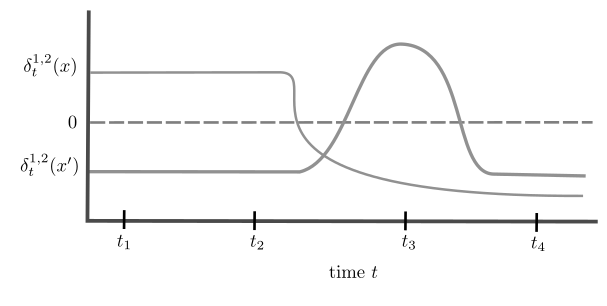}
    \vspace{-3mm}
    \caption{Shown are the gap functions $\delta_t^{1,2}(\cdot)$ ($K=2$ arms) in time $t$ at two contexts $x, x'$ assumed to be far apart.
	    Suppose at times $t_1$ and $t_3$, we observe context $x$, and at times $t_2$ and $t_4$, we observe $x'$. Then, the change in best arm at context $x$ is experienced. To contrast, the change in best arm at $x'$ is {\em not} experienced and hence not counted in \Cref{defn:sig-shift}.
	    %Unobserved changes in best arm, as occur above, don't affect performance and hence are not counted in \Cref{defn:sig-shift}.
    }
	    %at time t_1>t_0, we observe x_{t-1}=x_0, Then, later on we observe x_1... obvliously this does not affect even though there were changes we don't count them. However, depsite all changes in the system, we don't care.
    \label{fig:sig-shifts}
    %\vspace{-5mm}
\end{figure}

%\com{refer to figure; say in what wasy it is captured by figure (like referring back to abstract)}

\begin{rmk}[\Cref{defn:sig-shift} Only Counts Most Essential Changes]
	It's clear from \Cref{defn:sig-shift} that only changes in mean rewards $f_t^a(X_t)$ at experienced contexts $X_t$ are counted, and only counted when experienced (see example in \Cref{fig:sig-shifts}).
%It should be understood that the experienced significant shifts $\tau_i$ and $\Lsig$ only depend on the observed contexts $\bld{X}_T$ and the mean rewards at those observed contexts $\{f_t^a(X_t)\}_{t\in[T],a\in[K]}$ \com{shifts happening in other places don't matter}.
 %but not the realized rewards $\bld{Y}_T$.
 %For simplicity of presentation, we will not make the dependence of $\tau_i, \Lsig$ on $\bld{X}_T$ explicit in most mentions.
	%In particular, changes in the context distribution do {\bf not} enter into the notion of experienced significant shifts and, as a result, do not affect oracle regret rates (\Cref{prop:sanity}).
\end{rmk}

An experienced significant shift $\tau_i$ in fact implies a best-arm change at $X_{\tau_i}$ since, by smoothness (\Cref{assumption:lipschitz}) and \Cref{eq:bad-arm}, we have
	\begin{align*}
		\sum_{s \in I} \delta_s^a(X_{\tau_i}) \cdot \pmb{1}\{X_s\in B\} \geq \sum_{s \in I} \delta_s(a) \cdot \pmb{1}\{X_s\in B\} - r(B)\sum_{s \in I} \pmb{1}\{X_s\in B\} > 0.
	\end{align*}
	Thus, $\Lsig  \leq L+1$, the global count of shifts, and can in fact be much smaller.
%We first discuss why this is the right generalization for our setting of the notion of significant regret from non-stationary MAB \citep[Section 2]{suk22}.
	On the other hand, so long as an experienced significant shift does not occur, there will be arms safe to play at each context $X_t$. Thus, procedures need not restart exploration so long as unsafe arms can be quickly ruled out. %This is because, on the significant phase $[\tau_i,\tau_{i+1})$, there are {\em local safe arms} $a$ which are still safe at the current context $X_t$ on all intervals $I$.

	As a warmup to presenting our main regret bounds and algorithms, we'll first consider an oracle procedure  which knows the experienced significant shift times $\tau_i$. The strategy will be to mimic a successive elimination algorithm, restarted at each experienced significant shift.

	%, an {\em oracle procedure} which uses level $r_{\tau_{i+1}-\tau_i}$ and plays the safe arms in each bin $B \in \mc{T}_{\tau_{i+1}-\tau_i}$ in the sense of \Cref{eq:bad-arm} will maintain the stationary regret rate $K^{\frac{1}{2+d}} (\tau_{i+1}-\tau_i)^{\frac{1+d}{2+d}}$. This is formalized by the sanity check \Cref{prop:sanity}.

\begin{defn}[Oracle Procedure]\label{defn:oracle}
	For each round $t$ in phase $[\tau_i,\tau_{i+1})$, define a good arm set $\mc{G}_t$ as the set of \bld{safe} arms, i.e., arms which do not yet satisfy \Cref{eq:bad-arm} in the bin $T_r(X_t)$ at level $r=r_{\tau_{i+1}-\tau_i}$ containing $X_t$. Here, recall from \Cref{subsec:model-selection} that $r_{\tau_{i+1}-\tau_i}$ is the oracle choice of level over phase $[\tau_i,\tau_{i+1})$). %Note that in this environment,
	Then, define an \bld{oracle procedure} $\pi$: playing a uniformly random arm $a\in \mc{G}_t$ at time $t$. %w.p. $1/|\mc{G}_t|$.
\end{defn}

We then claim such an oracle procedure attains an enhanced dynamic regret rate in terms of the significant shifts $\{\tau_i\}_i$ which recovers the minimax lower bound in terms of global shifts $L$ and total variation $V_T$ from before. The following proposition (see \Cref{app:oracle} for proof) captures this.

\begin{prop}[Sanity Check]\label{prop:sanity}
	We have the oracle procedure $\pi$ of \Cref{defn:oracle} satisfies with probability at least $1-1/T^2$ w.r.t. the randomness of $\bld{X}_T$: for some $C>0$
	\[
		\mb{E}_{\pi}[ R_T(\pi,\bld{X}_T) \mid \bld{X}_T ] \leq C\log(K) \log(T) \sum_{i=0}^{\tilde{L}(\bld{X}_T)}  \left(\tau_{i+1}(\bld{X}_T)-\tau_{i}(\bld{X}_T)\right)^{\frac{1+d}{2+d}}\cdot K^{\frac{1}{2+d}}  .% \leq \log(K)\sqrt{(\tilde{L}+1)KT}.
	\]
	%Note that the second inequality above immediately follows from the first inequality by Jensen.
\end{prop}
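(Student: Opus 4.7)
\textbf{Proof plan for \Cref{prop:sanity}.} The plan is to decompose the regret over the phases $[\tau_i, \tau_{i+1})$, and within each phase further decompose by the bins $B$ of the partition $\mc{T}_r$ at the oracle level $r = r_{\tau_{i+1}-\tau_i}$. The goal is to show that within each bin $B$, the expected regret contributed by the oracle procedure is $\tilde{O}(\sqrt{K\,n_B} + r\,n_B)$, where $n_B \doteq n_B([\tau_i, \tau_{i+1}))$ is the number of visits to $B$ during the phase. Summing these bounds over bins at level $r$ (there are at most $r^{-d}$ of them) via Cauchy--Schwarz, and invoking the strong density condition (\Cref{assumption:strong-density}) to concentrate each $n_B$ around its mean, yields a per-phase regret of order $K^{1/(2+d)}(\tau_{i+1}-\tau_i)^{(1+d)/(2+d)}$ (the two terms balance exactly at the oracle level $r=(K/n)^{1/(2+d)}$). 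Summing over $i=0,\dots,\Lsig$ then gives the stated bound.

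\textbf{Within-bin regret bound.} Fix a phase $[\tau_i,\tau_{i+1})$, the level $r=r_{\tau_{i+1}-\tau_i}$, and a bin $B \in \mc{T}_r$. The first observation is that, by the definition of $\tau_{i+1}$ as an experienced significant shift together with \Cref{rmk:critical-level} (significant regret at any bin containing a context is witnessed at a bin of the critical level), for every $t\in[\tau_i,\tau_{i+1})$ and every context $X_t\in B$, at least one arm still satisfies the safety threshold in $B$, i.e.\ $\mc{G}_t\neq\emptyset$. For any arm $a$ that is eliminated from $\mc{G}_t$ at some time within the phase, its cumulative contribution to the local regret in bin $B$ before elimination is at most the threshold $\sqrt{K\,n_B}+r\,n_B+1$ (the ``$+1$'' absorbing the final single-round overshoot, since $\delta_t(a)\leq 1$). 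Since the oracle plays uniformly on $\mc{G}_t$, conditioning on $\bld{X}_T$ the expected regret in $B$ equals
\[
	\sum_{t:\,X_t\in B}\frac{1}{|\mc{G}_t|}\sum_{a\in\mc{G}_t}\delta_t(a) \;\leq\; \sum_{a\in[K]} \sum_{t:\,X_t\in B,\,a\in\mc{G}_t}\delta_t(a)/|\mc{G}_t|,
\]
which via a standard successive-elimination argument (tracking $|\mc{G}_t|$ as it decreases in $K$ phases of elimination) is bounded by $C\log(K)(\sqrt{K\,n_B}+r\,n_B)$ for an absolute constant $C$.

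\textbf{Aggregation over bins and phases.} The total expected regret during phase $[\tau_i,\tau_{i+1})$ is at most $\sum_{B\in\mc{T}_r} C\log(K)(\sqrt{K\,n_B}+r\,n_B)$. Cauchy--Schwarz gives $\sum_B \sqrt{n_B}\leq \sqrt{|\mc{T}_r|\cdot\sum_B n_B}\leq \sqrt{r^{-d}\,(\tau_{i+1}-\tau_i)}$, and the bias term telescopes to $r\cdot(\tau_{i+1}-\tau_i)$. Substituting $r=(K/(\tau_{i+1}-\tau_i))^{1/(2+d)}$ both terms equal $K^{1/(2+d)}(\tau_{i+1}-\tau_i)^{(1+d)/(2+d)}$ (up to constants), yielding the per-phase rate. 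Summing over $0\leq i\leq \Lsig$ gives the claimed bound, with a multiplicative $\log(T)$ arising from a union bound over levels and intervals when we use the strong density condition to control all $n_B$'s with probability $1-1/T^2$ (this is where the probabilistic claim in the statement enters: a Bernstein/Chernoff bound applied to each of the $\tilde O(T)$ bin-phase pairs, noting only $O(\log T)$ critical levels matter by \Cref{rmk:critical-level}).

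\textbf{Main obstacle.} The delicate step is the first one: justifying that $\mc{G}_t\neq\emptyset$ throughout the phase when safety is checked at the single level $r=r_{\tau_{i+1}-\tau_i}$ and on the single interval $[\tau_i,t]$, whereas \Cref{defn:sig-shift} quantifies unsafety over \emph{all} bins and all intervals in $[\tau_i,\tau_{i+1}]$. This is where \Cref{rmk:critical-level} and its supporting lemma (bridging between nearby levels and between sub-intervals) are crucial: they show that had all arms been unsafe at some $X_t$ in the oracle bin on $[\tau_i,t]$, then significant regret would already be witnessed at the critical level, contradicting that $\tau_{i+1}$ is the earliest experienced shift. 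Modulo this reduction, the rest of the argument is the standard successive-elimination plus bias-variance trade-off in nonparametric contextual bandits.
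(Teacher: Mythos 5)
Your proof reaches the right conclusion via a genuinely simpler route than the paper's, and the structure (decompose by phase, then by bin at the oracle level $r_{\tau_{i+1}-\tau_i}$, then apply the $\sum_a (K+1-a)^{-1}\le\log K$ successive-elimination trick) agrees with the paper. The key difference is in the per-bin step. The paper applies the safety inequality \eqref{eq:bad-arm} at a \emph{different} bin $B'\supseteq B$, at the critical level $r_{\tau_i^a-\tau_i}$ for the sub-interval $[\tau_i,\tau_i^a]$, and then invokes \Cref{lem:relate-levels} (which needs the concentration event $\mathcal{E}_2$ and the strong density condition) to convert that into a regret bound over the finer bin $B$; this is why the paper's statement carries a $\log T$ factor and is probabilistic. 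You instead apply the safety criterion \emph{directly} at $B$: since $a\in\mathcal G_{\tau_i^a}$ means $a$ has not yet triggered \eqref{eq:bad-arm} in $B$ on any sub-interval of $[\tau_i,\tau_i^a]$, the cumulative in-bin regret is at most $\sqrt{K n_B}+r\,n_B$, and you then aggregate over bins via Cauchy--Schwarz ($\sum_B\sqrt{n_B}\le\sqrt{r^{-d}(\tau_{i+1}-\tau_i)}$, $\sum_B n_B=\tau_{i+1}-\tau_i$), which is deterministic and needs neither \Cref{lem:relate-levels} nor $\mathcal{E}_2$. This is a cleaner argument; the only thing it costs you is that the intermediate bound \eqref{eq:multi-arm-bin} of the paper is what gets reused verbatim in the proof of \Cref{thm}, so the paper's version serves double duty.

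Two small loose ends in your write-up. First, the justification that $\mathcal G_t\neq\emptyset$ does not need \Cref{rmk:critical-level}: it follows directly from \Cref{defn:sig-shift}, because for $t<\tau_{i+1}$ some arm $a$ is not unsafe at $X_t$ on any $I\subseteq[\tau_i,t]$, hence $a$ fails \eqref{eq:bad-arm} in \emph{every} bin containing $X_t$, in particular in $B=T_r(X_t)$; the critical-level reduction runs in the wrong direction for what you want here. Second, your Cauchy--Schwarz balancing $\sqrt{K r^{-d}(\tau_{i+1}-\tau_i)}\lesssim K^{1/(2+d)}(\tau_{i+1}-\tau_i)^{(1+d)/(2+d)}$ silently uses $\tau_{i+1}-\tau_i\ge K$ (so that $r<1$); the paper explicitly derives this from \eqref{eq:bad-arm} for non-final phases, and for a short final phase the trivial bound $\tau_{i+1}-\tau_i\le K^{1/(2+d)}(\tau_{i+1}-\tau_i)^{(1+d)/(2+d)}$ covers it. Finally, since your route is deterministic, the $\log T$ factor and the $1-1/T^2$ probability you gesture at are actually unnecessary slack; that is fine (the claimed bound is weaker than what you get), but it is worth being aware that the appeal to ``concentration of $n_B$'' is not doing any work in your version of the argument.
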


%\begin{proof}
    %See \Cref{app:oracle}.
%\end{proof}

By Jensen's inequality
%on the concave function $z\mapsto z^{\frac{1+d}{2+d}}$,
the above regret rate is at most $(\Lsig(\bld{X}_T)+1)^{\frac{1}{2+d}}\cdot T^{\frac{1+d}{2+d}} \ll (L+1)^{\frac{1}{2+d}}\cdot T^{\frac{1+d}{2+d}}$. At the same time, the rate is also faster than $V_T^{\frac{1}{3+d}} T^{\frac{2+d}{3+d}}$ (see \Cref{cor:tv}). Thus, the oracle procedure above attains the dynamic regret lower bound of \Cref{thm:lower-bound}.
We next aim to design an algorithm which can attain same order regret without knowledge of $\tau_i$ or $\Lsig$.

\subsection{Main Results: Adaptive Upper-bounds}\label{subsec:result}

Our main result is a dynamic regret upper bound of similar order to \Cref{prop:sanity} {\em without knowledge of the environment, e.g., the significant shift times, or the number of significant phases}. It is stated for our algorithm \cmeta (\Cref{meta-alg} of \Cref{sec:alg}), which, for simplicity, requires knowledge of the time horizon $T$ (knowledge of $T$ removable using doubling tricks).

\begin{thm}\label{thm}
	Let $\{\tau_i(\bld{X}_T)\}_{i =0}^{\tilde L+1}$ denote the unknown experienced significant shifts (\Cref{defn:sig-shift}). We then have w.p. at least $1-1/T^2$ w.r.t. the randomness of $\bld{X}_T$, for some $C>0$:
    \[
	    \mb{E}[ R_T(\cmeta,\bld{X}_T) \mid \bld{X}_T] \leq C \log(K)\log^{3}(T) \sum_{i=1}^{\Lsig(\bld{X}_T)} (\tau_{i}(\bld{X}_T) - \tau_{i-1}(\bld{X}_T))^{\frac{1+d}{2+d}}\cdot K^{\frac{1}{2+d}}.
    \]
\end{thm}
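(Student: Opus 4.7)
The plan is to decompose the dynamic regret across the experienced significant phases $[\tau_i, \tau_{i+1})$ and to show that within each phase \cmeta essentially mimics the oracle of \Cref{defn:oracle}, so that summing recovers the claimed bound. Following the MASTER-style template of \citet{chen2019,wei2021}, I expect \cmeta to run a base successive-elimination procedure over the partition-tree bins while randomly scheduling shorter ``replays'' of the same base procedure at each round, using these replays both (i) to detect unknown shifts by comparing their local reward estimates against the base's, and (ii) to guarantee oracle-level regret on every sub-interval within a phase. The key novelty, as flagged in \Cref{sec:determine}, is to retune the replay-scheduling probabilities to the nonparametric exponent $(1+d)/(2+d)$ rather than the parametric $1/2$, so that the cost of replays plus shift-detection delay balances against the oracle's per-phase regret rather than against $\sqrt{LT}$.

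First I would isolate a clean high-probability concentration event: for every bin $B$ at a critical level $r_{|I|}$ (see \Cref{rmk:critical-level}), every arm $a$, and every sub-interval $I' \subseteq I$, the empirical reward estimate pooled over $\{s \in I' : X_s \in B\}$ lies within $O(\sqrt{K/n_B(I')} + r(B))$ of the time-averaged true mean, with the extra $r(B)$ term absorbing the Lipschitz bias from pooling contexts across $B$. By \Cref{rmk:critical-level} only $O(\log T)$ levels matter, so a union bound over polynomially many dyadic sub-intervals, bins, and arms keeps the failure probability below $1/T^2$. On this event, inside a single phase $[\tau_i, \tau_{i+1})$ no arm satisfies \Cref{eq:bad-arm} on a sub-interval fully contained in the phase, so the base procedure, launched from a checkpoint at or before $\tau_i$, provably never permanently discards an arm that remains in the oracle safe set $\mc{G}_t$ throughout the phase, while any arm whose local regret approaches the significance threshold in some bin is eliminated within logarithmic slack. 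This yields an oracle-matching per-phase regret of order $\log(K)\log(T)(\tau_{i+1}-\tau_i)^{(1+d)/(2+d)} K^{1/(2+d)}$, conditional on the base not being corrupted by out-of-phase data.

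The central technical step is to handle the interaction between replays and phase boundaries. Two bad scenarios must be ruled out: (a) base-procedure state from before $\tau_i$ biases estimates after the shift, and (b) a replay launched during $[\tau_i, \tau_{i+1})$ persists into a future phase and must be invalidated. For (a), \cmeta must include an explicit shift test comparing replay and base estimates at shared bin/level; by the concentration lemma, if an experienced significant shift has occurred, such a test fires within $O((\tau_{i+1}-\tau_i)^{(1+d)/(2+d)})$ rounds of the shift and triggers a global restart. For (b), replays of target length $m$ must be scheduled at each round with probability tuned so that every dyadic sub-interval of length $m$ is covered in expectation, while the aggregated expected regret from misfires, summed over dyadic length scales $m$, remains of order $T^{(1+d)/(2+d)} K^{1/(2+d)}$; this is precisely where the nonparametric retuning of the scheduling rate is essential.

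Summing the per-phase bounds over $i = 0, \dots, \Lsig$ and aggregating across bins using Jensen's inequality with \Cref{assumption:strong-density}, which passes from $\sum_B n_B^{(1+d)/(2+d)}$ to $(\tau_{i+1}-\tau_i)^{(1+d)/(2+d)}$ up to logarithmic factors, yields the stated theorem. The hardest part will be the simultaneous calibration of replay probabilities and critical-level selection: the relevant scale $r_{|I|}$ is data-dependent and unknown in advance, so the scheduling must hedge across all $O(\log T)$ candidate levels tightly enough to preserve the $(1+d)/(2+d)$ exponent, and the shift-detection threshold must scale with bin size so as not to spuriously fire inside a stationary phase while still reliably catching any true best-arm change at an experienced context $X_t$.
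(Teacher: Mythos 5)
Your high-level plan — decompose across phases, retune the replay scheduling to the exponent $(1+d)/(2+d)$, and restrict attention to critical levels $r_{|I|}$ — captures several of the paper's key ideas, including the central algorithmic retuning flagged in \Cref{sec:overview}. But there are two genuine gaps. First, your shift-detection mechanism is not \cmeta's: you posit ``an explicit shift test comparing replay and base estimates'' in the MASTER style of \citet{wei2021}, whereas \cmeta instead shares a global master candidate-arm set $\Amaster(B)$ across all base-algorithm instances, each of which may evict arms via \Cref{eq:elim}; a restart fires only when $\Amaster(B)$ is emptied for some experienced bin (the restart criterion of \Cref{base-alg}, justified by \Cref{lem:counting-eps}). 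This shared-elimination design is what ties restarts to \emph{experienced significant shifts} (\Cref{defn:sig-shift}), which is a local, \emph{all-arms} criterion within a bin, and a two-window estimate-disagreement test is not obviously calibrated to it.

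Second, ``within each phase \cmeta essentially mimics the oracle'' elides the heart of the argument: \cmeta's $\mc{A}_t$ can retain unsafe arms that the oracle's $\mc{G}_t$ has already excluded. The paper therefore splits $\delta_t(a) = \delta_t(a_t^\sharp) + \delta_t(a_t^\sharp,a)$, where $a_t^\sharp$ is the \emph{local last safe arm} (\Cref{defn:last-safe-arm}); the first term is oracle-like (\Cref{app:bounding-last-safe}), but bounding the second requires the proper/sub-replay and active-interval disjointness accounting (\Cref{prop:active-disjoint}), the bad-segment construction (\Cref{defn:bad-segment}), and the perfect-replay eviction guarantee (\Cref{prop:behavior}) to control how long a lingering arm's relative regret accumulates before a well-timed replay removes it. You also do not engage with the ``missing data'' obstruction the paper explicitly flags: an arm may survive in $\mc{A}(B)$ after being evicted from sibling bins inside the critical-level parent $B' \supset B$, which complicates uniform gap estimation over $B'$ and is precisely why the bad-segment/perfect-replay machinery exists. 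Without these pieces, the per-phase bound and the final summation over phases do not go through.
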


%Note that, while the rates above are similar to the \emph{oracle} rates on \Cref{prop:sanity}, \cmeta may not actually achieve regret $\sqrt{\abs{I}}$ over \emph{each} phase $I$, but only guarantees $\sum_I\sqrt{\abs{I}}$ on aggregate.

%By Jensen's inequality, since the function $z\mapsto z^{\frac{1+d}{2+d}}$ is concave,
%As mentioned earlier, the above regret rate is upper bounded by $\tilde{O}((\Lsig(\bld{X}_T)+1)^{\frac{1}{2+d}}\cdot T^{\frac{1+d}{2+d}}\cdot K^{\frac{1}{2+d}})$.

\begin{cor}[Adapting to Experienced Significant Shifts]\label{cor:sig-shift-num}
	Under the conditions of \Cref{thm}, with probability at least $1-1/T^2$ w.r.t. the randomness in $\bld{X}_T$:
    \[
	    \mb{E}[ R_T(\cmeta,\bld{X}_T) \mid \bld{X}_T] \leq C\log(K) \log^{3}(T)\cdot (K\cdot ( \Lsig(\bld{X}_T)+1))^{\frac{1}{2+d}}\cdot T^{\frac{1+d}{2+d}}.
    \]
\end{cor}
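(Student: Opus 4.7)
The plan is to derive \Cref{cor:sig-shift-num} directly from \Cref{thm} by a single application of Jensen's inequality to the phase-length sum already appearing in the theorem statement. Concretely, \Cref{thm} bounds the expected dynamic regret (up to the $\log(K)\log^3(T)$ prefactor) by
\[
\sum_{i=1}^{\Lsig} (\tau_i - \tau_{i-1})^{\frac{1+d}{2+d}} \cdot K^{\frac{1}{2+d}},
\]
and the goal is to upper bound this by $(K(\Lsig+1))^{\frac{1}{2+d}} T^{\frac{1+d}{2+d}}$. The only structural constraint to exploit is that the phase lengths telescope to the horizon: with $\tau_0 = 1$ and $\tau_{\Lsig + 1} = T+1$ from \Cref{defn:sig-shift}, one has $\sum_{i=0}^{\Lsig}(\tau_{i+1} - \tau_i) = T$.

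The execution uses that the exponent $\alpha \doteq \frac{1+d}{2+d}$ lies in $(0,1)$, so $x \mapsto x^{\alpha}$ is concave on $[0,\infty)$. Applying Jensen's inequality with uniform weights over the $\Lsig + 1$ phases,
\[
\frac{1}{\Lsig+1} \sum_{i=0}^{\Lsig} (\tau_{i+1} - \tau_i)^{\alpha} \;\leq\; \left( \frac{1}{\Lsig+1} \sum_{i=0}^{\Lsig} (\tau_{i+1} - \tau_i) \right)^{\alpha} = \left( \frac{T}{\Lsig+1} \right)^{\alpha}.
\]
Multiplying through by $\Lsig + 1$ gives $\sum_{i=0}^{\Lsig} (\tau_{i+1} - \tau_i)^{\alpha} \leq (\Lsig+1)^{1-\alpha} T^{\alpha} = (\Lsig+1)^{\frac{1}{2+d}} T^{\frac{1+d}{2+d}}$. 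Multiplying by $K^{\frac{1}{2+d}}$ and reinstating the $\log(K)\log^{3}(T)$ factor from \Cref{thm} yields the claimed bound. The only bookkeeping point is that the sum in \Cref{thm} runs from $i=1$ to $\Lsig$, missing the last phase $[\tau_{\Lsig}, T+1)$; including this term adds at most one extra summand $(\tau_{\Lsig+1} - \tau_{\Lsig})^{\alpha}$, which is harmlessly absorbed into the constant $C$ (or equivalently, the indexing can be shifted so that Jensen is applied over $i = 0, \ldots, \Lsig$).

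There is no substantive obstacle in this step: the corollary is just the concavity-based worst-case envelope of the sharper, phase-adaptive bound of \Cref{thm}. All of the hard work, namely detecting unknown experienced significant shifts, choosing the correct level of discretization per phase, and scheduling replays to catch local best-arm changes, has already been performed in proving \Cref{thm}. The present derivation is purely analytic and amounts to observing that the worst configuration of phase lengths against a concave summand is the equal split $\tau_{i+1} - \tau_i = T/(\Lsig+1)$, which recovers the stated rate.
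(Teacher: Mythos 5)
Your proof is correct and takes essentially the same route the paper intends: the paper states immediately after \Cref{prop:sanity} that ``By Jensen's inequality the above regret rate is at most $(\Lsig(\bld{X}_T)+1)^{\frac{1}{2+d}}\cdot T^{\frac{1+d}{2+d}}$,'' which is exactly the concavity argument you carried out, and \Cref{cor:sig-shift-num} is left as an implicit one-line consequence of \Cref{thm}. Your remark on the indexing of the sum in \Cref{thm} is also sound, though arguably unnecessary: applying Jensen over only the $\Lsig$ terms that do appear already yields $\Lsig^{\frac{1}{2+d}}(\tau_{\Lsig}-\tau_0)^{\frac{1+d}{2+d}}\le (\Lsig+1)^{\frac{1}{2+d}}T^{\frac{1+d}{2+d}}$ without needing to reintroduce the final phase.
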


Note, this is tighter than the earlier mentioned $(L+1)^{\frac{1}{2+d}} T^{\frac{1+d}{2+d}}$ rate. The next corollary asserts that \Cref{thm} also recovers the optimal rate in terms of total-variation $V_T$.

\begin{cor}[Adapting to Total Variation]\label{cor:tv}
	Under the conditions of \Cref{thm}:
	%taking expectation over $\bld{X}_T$:
    \[
	    \mb{E}[ R_T(\cmeta,\bld{X}_T) ] \leq C\log(K) \log^{3}(T) \left( T^{\frac{1+d}{2+d}}\cdot K^{\frac{1}{2+d}} + (V_T\cdot K)^{\frac{1}{3+d}}\cdot T^{\frac{2+d}{3+d}} \right).
    \]
\end{cor}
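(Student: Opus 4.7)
The plan is to reduce \Cref{cor:tv} to \Cref{cor:sig-shift-num} by upper-bounding $\Lsig(\bld{X}_T)$ in terms of $V_T$, via a per-phase total-variation lower bound.

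\textbf{Step 1: per-phase TV lower bound.} For each experienced significant phase $i \in [1,\Lsig]$ of duration $\Delta_i := \tau_i(\bld{X}_T) - \tau_{i-1}(\bld{X}_T)$, I would show that the local total variation
\[
V_i \;:=\; \sum_{t=\tau_{i-1}+1}^{\tau_i} \|\mc{D}_t - \mc{D}_{t-1}\|_{\text{TV}} \;\gtrsim\; (K/\Delta_i)^{1/(2+d)}.
\]
By \Cref{defn:sig-shift}, the arm that was optimal near $X_{\tau_i}$ at time $\tau_{i-1}$ must become unsafe by $\tau_i$, i.e., satisfy \Cref{eq:bad-arm} in some bin $B$ on some sub-interval $I \subseteq [\tau_{i-1},\tau_i]$. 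By \Cref{rmk:critical-level} we may take $B$ at the critical level $r_{|I|} \asymp (K/|I|)^{1/(2+d)}$, and then \Cref{assumption:strong-density} gives $n_B(I) \asymp r(B)^d |I|$, so both $\sqrt{K n_B}$ and $r(B) n_B$ in \Cref{eq:bad-arm} are of order $K^{(1+d)/(2+d)} |I|^{1/(2+d)}$. Combined with the Lipschitz bound (\Cref{assumption:lipschitz}) of $2r(B)$ on the pre-phase gap within $B$, this forces a temporal mean-reward drift over $B$ of order $(K/\Delta_i)^{1/(2+d)}$, which is then lower-bounded by $V_i$ via the integrated control $\int_B |f_t^a(x) - f_{t-1}^a(x)|\,\mu_X(dx) \leq \|\mc{D}_t - \mc{D}_{t-1}\|_{\text{TV}}$ (valid because the marginal $\mu_X$ is stationary) and telescoping across the phase.

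\textbf{Step 2: Jensen/Hölder.} Summing across phases gives $V_T \geq c\, K^{1/(2+d)} \sum_i \Delta_i^{-1/(2+d)}$, and convexity of $x \mapsto x^{-1/(2+d)}$ yields $\sum_i \Delta_i^{-1/(2+d)} \geq \Lsig \cdot (T/\Lsig)^{-1/(2+d)} = \Lsig^{(3+d)/(2+d)} T^{-1/(2+d)}$. Rearranging,
\[
\Lsig \;\lesssim\; \bigl(V_T/K^{1/(2+d)}\bigr)^{(2+d)/(3+d)} T^{1/(3+d)}.
\]
Plugging this into \Cref{cor:sig-shift-num}, whose bound is of order $\log(K)\log^3(T)\cdot (K(\Lsig+1))^{1/(2+d)} T^{(1+d)/(2+d)}$, and simplifying exponents ($K$-exponent: $1/(2+d) - 1/((2+d)(3+d)) = 1/(3+d)$; $T$-exponent: $(1+d)/(2+d) + 1/((2+d)(3+d)) = (2+d)/(3+d)$) produces the $(V_T K)^{1/(3+d)} T^{(2+d)/(3+d)}$ contribution. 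The additional $K^{1/(2+d)} T^{(1+d)/(2+d)}$ term in the corollary covers the regime $\Lsig = 0$ (no significant shifts, or $V_T$ too small for the TV-driven bound to bind), where \Cref{cor:sig-shift-num} applied to a single phase of length $T$ gives exactly this rate.

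\textbf{Main obstacle.} The technical crux is Step 1. At the critical level, the variance $\sqrt{K n_B}$, bias $r(B) n_B$, and pre-phase Lipschitz gap $2r(B) n_B$ in \Cref{eq:bad-arm} are all of the same order, so disentangling genuine temporal reward drift (which costs TV) from spatial bias (which does not) requires a careful concentration argument from the empirical gap sum $\sum_s \delta_s(a)\pmb{1}\{X_s \in B\}$ to its $\mu_X$-integrated expectation, a judicious choice of the witness interval $I$, and leveraging that \emph{every} arm (not just one) becomes unsafe—forcing pointwise drift on a nontrivial $\mu_X$-measure subset of $B$. A loss of order $K$ or $\Delta_i^{d/(2+d)}$ here would degrade the $V_T$ exponent away from $1/(3+d)$ and break the match with the minimax lower bound of \Cref{thm:lower-bound}.
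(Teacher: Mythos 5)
Your high-level plan shares the paper's key inequality ($\sum_i (K/\Delta_i)^{1/(2+d)} \lesssim V_T$ over significant phases, followed by a convexity argument), but it misses the paper's central structural device and leaves the crucial technical claim of Step 1 unestablished.

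The paper does \emph{not} work directly with the experienced significant shifts $\tau_i(\bld{X}_T)$. Instead it introduces an intermediary notion, the \emph{worst-case significant shifts} $\ttau_i$ (\Cref{defn:pop-sig-shift}), which depend only on the mean-reward sequence $\{f_t\}$ and not on the realized contexts $\bld{X}_T$. The proof then (i) shows the experienced phases interleave the worst-case phases, so that $\sum_i (\tau_{i+1}-\tau_i)^{\frac{1+d}{2+d}} \lesssim \sum_j (\ttau_{j+1}-\ttau_j)^{\frac{1+d}{2+d}}$, and (ii) relates the worst-case phase lengths to $V_T$ via the variational representation of TV applied to test functions $H_i(X,Y) = Y^{\tilde a_i} - Y^{a_i}$ whose defining arms $\tilde a_i, a_i$ are \emph{deterministic given the reward functions}. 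The paper includes an explicit remark immediately after \Cref{eq:tv-final} warning that this step ``does not follow if $a_i, \tilde a_i$ were defined in terms of the experienced significant shifts $\tau_i(\bld{X}_T)$''---precisely the route you propose. In your approach the witness bin, witness context $X_{\tau_i}$, and witness arms are all $\bld{X}_T$-dependent random objects, and the interleaving/deterministic-test-function machinery is exactly what the paper invents to avoid this entanglement.

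Beyond the structural mismatch, Step 1 as sketched does not yield $V_i \gtrsim (K/\Delta_i)^{1/(2+d)}$. The argument you outline---extract a pointwise drift $g \asymp (K/\Delta_i)^{1/(2+d)}$ at $X_{\tau_i}$, localize via Lipschitzness to a ball of radius $\Theta(g)$ around $X_{\tau_i}$, integrate against $\mu_X$ to get a TV lower bound---produces $V_i \gtrsim g^{1+d} \asymp (K/\Delta_i)^{(1+d)/(2+d)}$, since integrating a pointwise drift against a ball of radius $\Theta(g)$ costs a factor $\mu_X(\text{ball}) \asymp g^d$. This is exactly the $\Delta_i^{d/(2+d)}$ loss you flag as fatal in your ``Main obstacle'' paragraph, and your proposal neither resolves it nor explains how ``leveraging that every arm becomes unsafe'' repairs the exponent. (For reference, the per-phase inequality $\sum_i (K/\Delta_i)^{1/(2+d)} \leq 2V_T$ is exactly \Cref{eq:tv-final} in the paper; the paper arrives at it through the worst-case phases and the $H_i$ construction, not through a localized integration.) There is also a second, milder divergence: the paper applies Hölder's inequality directly to $\sum_i (\ttau_{i+1}-\ttau_i)^{(1+d)/(2+d)}$ subject to \Cref{eq:tv-final}, whereas you first extract a bound on $\Lsig$ via Jensen and then re-apply Jensen inside \Cref{cor:sig-shift-num}; the exponents check out in the end, but it is a detour. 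The core gap is Step 1 together with the omission of the worst-case intermediary.
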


\begin{rmk}
	%As is typical in works on nonparametrics,
	Our regret bound can straightforwardly be generalized to $\min\{ (\Lsig+1)^{\frac{\beta}{2\beta +d}}\cdot T^{\frac{\beta+d}{2\beta+d}}\cdot K^{\frac{\beta}{2\beta+d}}, (V_T\cdot K)^{\frac{\beta}{3\beta+d}}\cdot T^{\frac{2\beta+d}{3\beta+d}} + T^{\frac{\beta+d}{2\beta+d}}\cdot K^{\frac{\beta}{2\beta+d}} \}$ for $\beta$-H\"{o}lder reward functions with $\beta \leq 1$, provided the notion \Cref{eq:bad-arm} is modified to take into account a bias of $r^{\beta}(B)$.
\end{rmk}

\section{Algorithm}\label{sec:alg}

We take a similar algorithmic approach to \citet{suk22}, with several important modifications for our setting. The high-level strategy is to schedule multiple copies of a {\em base algorithm} (\Cref{base-alg}) at random times and durations, in order to ensure updated and reliable estimation of the gaps in \Cref{eq:bad-arm}. This allows fast enough detection of unknown experienced significant shifts.

%A meta algorithm (\Cref{meta-alg}) manages the information from the scheduled base algorithms and proceeds in episodes, each of which is concluded upon detecting an experienced significant shift.

\paragraph{Overview of Algorithm Hierarchy.} Our main algorithm \cmeta (\Cref{meta-alg}) proceeds in episodes, each of which begins by playing according to an initially scheduled base algorithm of possible duration equal to the number of rounds left till $T$. Base algorithms occasionally activate their own base algorithms of varying durations (\Cref{line:replay} of Algorithm~\ref{base-alg}), called {\em replays}, according to a random schedule (set via variables $\{Z_{m,s}\}$). We refer to the currently playing base algorithm as the {\em active base algorithm}. This induces a hierarchy of base algorithms, from \emph{parent} to \emph{child} instances. %of \base.

\paragraph{Choice of Level.}
Focusing on a single base algorithm now, each \base manages its own discretization of the context space $\mc{X}=[0,1]^d$, corresponding to a level $r\in \mc{R}$ (see \Cref{defn:partition-tree}). Within each bin $B\in \mc{T}_r$ at the level $r$, candidate arms, maintained in a set $\mc{A}(B)$, are evicted according to importance-weighted estimates \Cref{eq:hat-delta} of local gaps.

As discussed in \Cref{subsec:model-selection}, key in algorithmic design is determining the optimal level $r\in\mc{R}$. An immediate difficulty is that the oracle procedure's choice of level (see \Cref{defn:oracle}) depends on the unknown significant phase length $\tau_{i+1} - \tau_i$. To circumvent this, and as in previous works on Lipschitz contextual bandits \citep{perchet-rigollet,slivkins2014contextual}, we rely on an adaptive time-varying choice of level $r_t$. Specifically, each base algorithm uses the level $r_{t-\tstart}$ based on the time elapsed since the time $\tstart$ it was first activated.

\paragraph{Managing Multiple Base Algorithms.}
Instances of \base\ and \cmeta share information, in the form of
{\em global variables} as listed below:
\begin{itemize}[leftmargin=*]
\item All variables defined in \cmeta:  $t_{\ell},t,\{\mc{A}_{\text{master}}(B)\}_{B\in\mc{T}},\{Z_{m,t}\}$ (see Lines~\ref{line:ep-start}--\ref{line:add-replay} of Algorithm~\ref{meta-alg}).
\item The choice of arm played at each round $t$, along with observed rewards $Y_t^a$, and the candidate arm set $\mc{A}_t$ which takes value the set $\mc{A}(B)$ of the active \base at round $t$ and bin $B=T_r(X_t)$ used.
\end{itemize}

By sharing these global variables, any \base can trigger a new episode: once an arm is evicted from a $\base$'s $\mc{A}(B)$, it is also evicted from $\Amaster(B)$, which is essentially the candidate arm set for the current episode. A new episode is triggered at time $t$ when $\Amaster(B)$ becomes empty for some bin $B$ (necessarily a currently experienced bin), i.e., there is no \emph{safe} arm left to play at the context $X_t$ in the sense of \Cref{defn:sig-shift}.
Note that $\mc{A}(B)$ are {\em local variables} internal to each \base (the owner of which will be clear from context in usage).

To ensure consistent behavior while using a time-varying choice of level, we enforce further regularity in arm evictions across $\mc{X}$: arms evicted from $\mc{A}(B')$ are also evicted from child bins $B\subseteq B'$ to ensure $\mc{A}(B) \subseteq \mc{A}(B')$.

\begin{algorithm2e}[h] % consider switching to [t]
%\setstretch{1}
	\caption{{\bld{C}ontextual \bld{M}eta-\bld{E}limination while \bld{T}racking \bld{A}rms (\cmeta)}}
\label{meta-alg}
%\begin{algorithm2e}[1]
{\nonl \bld{Input:} horizon $T$, set of arms $[K]$, tree $\mc{T}$ with levels $r\in\mc{R}$.}\\
	  \bld{Initialize:} round count $t \leftarrow 1$.\\
	%\FOR[doubling horizon $2^n$]{$n=0,1,2,\ldots$}
	  \textbf{Episode Initialization (setting global variables)}:\\
	%\begin{ALC@g}
%	  \Indp $M \leftarrow 0$. \tcp*[f]{Replay length.}
	  \Indp $t_{\ell} \leftarrow t$. \label{line:ep-start}  \tcp*[f]{$t_{\ell}$ indicates start of $\ell$-th episode.}\\
  For each bin $B\in \mc{T}$, set $\Amaster(B) \leftarrow [K]$. \tcp*[f]{Initialize master candidate arm sets}\\
    For each $m=2,4,\ldots,2^{\lceil\log(T)\rceil}$ and $s=t_{\ell}+1,\ldots,T$:\\
    \Indp Sample and store $Z_{m,s} \sim \text{Bernoulli}\left( \left(\frac{1}{m}\right)^{\frac{1}{2+d}}\cdot \left(\frac{1}{s-t_{\ell}}\right)^{\frac{1+d}{2+d}}\right)$. \label{line:add-replay} \tcp*[f]{Set replay schedule.}\\
        \Indm
	\Indm
	 %\Indm
	 \vspace{0.2cm}
	 Run $\base(t_{\ell},T + 1 - t_{\ell})$. \label{line:ongoing-base} \\
  \lIf{$t < T$}{restart from Line 2 (i.e. start a new episode). \label{line:restart}}
%}
\end{algorithm2e}

\begin{algorithm2e}[h]
 %\setstretch{1}
 	\caption{{\base$(\tstart,m_0)$: Adaptively Binned Elimination with randomized arm-pulls}}
 \label{base-alg}
 %\begin{algorithmic}[1]
 {\nonl \textbf{Input}: starting round $\tstart$, scheduled duration $m_0$.}\\
	 \textbf{Initialize}: $t \leftarrow \tstart$ For each bin $B$ at any level in $\mc{T}$, set $\mc{A}(B) \leftarrow [K]$ \label{line:define-end}\\ %\tcp*[f]{Candidate arm set for bin $B$.}
 	  \While{$t \leq \tstart+m_0$}{
		  {\bf Choose level in $\mc{R}$:} $r \leftarrow r_{t-\tstart}$. \label{line:choose-level}\\
		  Let $\mc{A}_t \leftarrow \mc{A}(B)$ and let $B \leftarrow T_r(X_t)$.\\
 		  Play a random arm $a\in \mc{A}_t$ selected with probability $1/|\mc{A}_t|$. \label{line:play}\\
		  %Let $\mc{A}_{\text{current}} \leftarrow \mc{A}_{t}$ \label{line:current-arm-set} \tcp*[f]{Save current candidate arm set.}
		  Increment $t \leftarrow t+1$.\\
 		  \uIf{$\exists m\text{{\normalfont\,such that }} Z_{m,t}>0$}{
                 Let $m \doteq \max\{m \in \{2,4,\ldots,2^{\lceil \log(T)\rceil}\}:Z_{m,t}>0\}$. \tcp*[f]{Set maximum replay length.}\\
                 Run $\base(t,m)$.\label{line:replay} \tcp*[f]{Replay interrupts.}\\
 		   }
		   %\lIf{$t > \tstart + m_0$}{
		   %RETURN.
			%}
		   \bld{Evict bad arms in bin $B$:}\\
		 %\Indp $\Amaster \leftarrow \Amaster \bs \{a\in[K]:\text{$\exists $ rounds $[s_1,s_2]\subseteq [t_{\ell},t)$ s.t. \Cref{eq:elim} holds}\}$. \label{line:evict-master}\\
		   \Indp $\mc{A}(B) \leftarrow \mc{A}(B) \bs \{a\in [K]:\text{$\exists $ rounds $[s_1,s_2]\subseteq [\tstart,t)$ s.t. \Cref{eq:elim} holds for bin $T_{s_2-s_1}(X_t)$}\}$. \label{line:evict-At}\\
		   $\Amaster(B) \leftarrow \Amaster(B) \bs \{a\in [K]:\text{$\exists $ rounds $[s_1,s_2]\subseteq [t_{\ell},t)$ s.t. \Cref{eq:elim} holds for bin $T_{s_2-s_1}(X_t)$}\}$. \label{line:evict-Amaster}\\
		   \vspace{0.2cm}
		 {\bf Refine candidate arms:} \tcp*[f]{Discard arms previously discarded in ancestor bins.}\\
		 \Indp $\mc{A}(B) \leftarrow \cap_{B'\in \mc{T},B\subseteq B'} \mc{A}(B')$. \label{line:ancestor=discard}\\
		 $\Amaster(B) \leftarrow \cap_{B'\in \mc{T},B\subseteq B'} \Amaster(B')$.\\
		 \Indm
 		\Indm
		\bld{Restart criterion:} \lIf{{\normalfont $\Amaster(B)=\emptyset$ for some bin $B$}}{RETURN.}
 	}
	RETURN.
\end{algorithm2e}

\paragraph{Estimating Aggregate Local Gaps.}

\ifx
\paragraph{Terminology and Overview.}
\com{still to be filled in...

Writing notes:
1. discuss arm set variables and levels

2. Need to emphasize that $\mc{A}(B)$ is a {\bf local} variable internal to $\base(s,m)$. Note that computing (4) in the arm pruning step of the algorithm does not entail accessing $\mc{A}(B)$'s of other base algorithms, and only the global variables.

3. Need to also say how Lines 11 and 12 ensure there is regularity in arm eliminations: interrupting child eliminates implies parent eliminates.
}
\fi
%

% insert figure
%\begin{figure}
%
%\centering
%\includegraphics[height=4.5cm]{diagram.png}
%\caption{\noindent \small
%Shown are two values of replay durations $m\approx 1/\delta^2$, for $\delta=\delta_1$ or $\delta_2$, and corresponding replays are shown (gray segments) at intervals of roughly $\sqrt{T}/\delta$ rounds. Each replay aims to detect a $\delta$ magnitude \emph{change}, i.e., an average gap $\frac{1}{n}\sum_{t=1}^n \delta_t(a)$ of order $\delta$. As a recursive procedure, the replays of \base\, form a \emph{parent-child} relationship as depicted.}
%\label{fig}
%\end{figure}
%

%\paragraph{Eviction Criteria and Critical Estimates.}

$\sum_{s=s_1}^{s_2} \delta_s(a', a)\cdot\pmb{1}\{X_s\in B\}$ is estimated by
$\sum_{s=s_1}^{s_2} \hat \delta_s^B(a',a)$, where $\delta_s(a',a)\cdot \pmb{1}\{X_s\in B\}$ is estimated by importance weighting as:
\begin{equation}\label{eq:hat-delta}
	\hat{\delta}_s^B(a',a) \doteq |\mc{A}_{t}|\cdot \left( Y_s^{a'}\cdot \pmb{1}\{\pi_{s}=a'\} - Y_s^{a}\cdot \pmb{1}\{\pi_s=a\}\right) \cdot \pmb{1}\{a\in\mc{A}_s\}\cdot \pmb{1}\{X_s\in B\}.
\end{equation}

Note that the above is an unbiased estimate of $\delta_s(a', a) \cdot \pmb{1}\{X_s\in B\}$ whenever $a'$ and $a$ are both in $\mc{A}_s$ at time $s$, conditional on the context $X_s$. It then follows that, conditional on $\bld{X}_T$, the difference $\sum_{s=s_1}^{s_2} \left (\hat \delta_s^B(a', a)\cdot \pmb{1}\{X_s\in B\} - \delta_s(a', a)\right)$ is a martingale that concentrates at a rate of order roughly $\sqrt{K\cdot n_B([s_1,s_2])}$, where recall from earlier that $n_B(I) \doteq \sum_{s\in I} \pmb{1}\{X_s \in I\}$ is the context count in bin $B$ over interval $I$.
%where, in the above, $\pi_{s}$ refers to the action of \cmeta at round $s$. Note that, in the above, $\hat{\delta}_t(a',a)$ is observable for all arms $a,a'$ even if $a,a'$ are not played at round $t$. We also note that for all rounds $s<t$, \cmeta's played arm $\pi_s$, observed reward $Y_s(\pi_s)$, and the arm-set $\mc{A}_s$ are available to any base algorithm running at round $t$.

An arm $a$ is then evicted at round $t$ if, for some fixed $C_0 > 0$\footnote{$C_0 > 0$ needs to be sufficiently large, but is a universal constant free of the horizon $T$ or any distributional parameters.}, $\exists$ rounds $s_1 < s_2\leq t$ such that at level $r_{s_2-s_1}$ and letting $B \doteq T_{s_2-s_1}(X_t)$ (i.e., the bin at level $r_{s_2-s_1}$ containing $X_t$)
\begin{equation}\label{eq:elim}
	\max_{a'\in [K]}	\sum_{s=s_1}^{s_2} \hat{\delta}_s^B(a',a) > \sqrt{C_0\cdot K\log(T) \cdot \left( n_B([s_1,s_2]) \vee K\log(T) \right)} + r_{s_2-s_1}\cdot n_B([s_1,s_2]).
\end{equation}

\section{Key Technical Highlights of Analysis}\label{sec:overview}

While a full analysis is deferred to \Cref{app:full-proof}, we highlight key novelties and intuitive calculations. %behind the analysis.

\paragraph{$\bullet$ Local Safety in Bins implies Safe Total Regret.}

We first argue that the notion of significant regret \Cref{eq:bad-arm} within a bin $B$ captures the total allowable regret rates $T^{\frac{1+d}{2+d}}$ we wish to compete with over $T$ ``safe'' rounds. If \Cref{eq:bad-arm} holds for no intervals $[s_1,s_2]$ in all bins $B$, arm $a$ would be safe and incur little regret over any $[s_1,s_2]$. As it turns out, bounding the per-bin regret by \Cref{eq:bad-arm} implies a total regret of $T^{\frac{1+d}{2+d}}$ as seen from the following rough calculation: via concentration and the strong density assumption (\Cref{assumption:strong-density}) to conflate $n_B([1,T]) \approx r(B)^{d}\cdot T$ and the fact that there are $\approx r^{-d}$ bins at level $r$, we have:
\begin{equation}\label{eq:safe-regret}
	\sum_{B \in T_r} \sqrt{K\cdot n_B([1,T])} + r \cdot n_B([1,T]) \leq K^{1/2} \cdot T^{1/2} \cdot r^{-d/2} + T\cdot r.
\end{equation}
In particular taking $r \propto (K/T)^{\frac{1}{2+d}}$ makes the above R.H.S. the desired rate $K^{\frac{1}{2+d}} T^{\frac{1+d}{2+d}}$.

\paragraph{$\bullet$ Significant Regret Threshold is Estimation Error.} At the same time, the R.H.S. of
%the definition of significant regret
\Cref{eq:bad-arm} is a standard variance and bias  bound on the regression error of estimating the cumulative regret $\sum_{s=s_1}^{s_2} \delta_s^a(x)\cdot \pmb{1}\{X_s\in B\}$ at any context $x\in B$, conditional on $\bld{X}_T$ (see \Cref{lem:martingale-concentration}). Thus, intuitively, large gaps of magnitude above the threshold $\sqrt{K\cdot n_B(I)} + r(B)\cdot n_B(I)$ in \Cref{eq:bad-arm} are detectable via the estimates of \Cref{eq:hat-delta}.

Combining the above two points, we conclude that the notion of significant regret \Cref{eq:bad-arm} balances both (1) detection of unsafe arms and (2) regret of playing non-evicted arms. We next argue the randomized scheduling of multiple base algorithms is suitable for detecting experienced significant shifts.

\paragraph{$\bullet$ A New Balanced Replay Scheduling.} As mentioned earlier in \Cref{sec:determine}, previous adaptive works on contextual bandits fail to attain the optimal regret in this setting due to an inappropriate frequency of scheduling re-exploration. We introduce a novel scheduling (\Cref{line:add-replay} of \Cref{meta-alg}) of replays which carefully balances exploration and fast detection of significant regret in the sense of \Cref{eq:bad-arm}. In particular, the determined probability $(1/m)^{\frac{1}{2+d}} (1/t)^{\frac{1+d}{2+d}}$ of scheduling a new $\base(t,m)$ comes from the following intuitive calculation: a single replay of duration $m$ will, if scheduled, incur an additional regret of about $m^{\frac{1+d}{2+d}}$. Then, summing over all possible replays, the total extra regret incurred due to scheduled replays is roughly upper bounded by
\[
	\sum_{t=1}^T \sum_{m=2,4,\ldots,T} \left(\frac{1}{m}\right)^{\frac{1}{2+d}} \left(\frac{1}{t}\right)^{\frac{1+d}{2+d}}\cdot m^{\frac{1+d}{2+d}} \lesssim \sum_{t=1}^T T^{\frac{d}{2+d}}\cdot (1/t)^{\frac{1+d}{2+d}} \lesssim T^{\frac{1+d}{2+d}}.
\]
In other words, the cost of replays only incurs extra constants in the regret. Surprisingly, we find this scheduling rate is also sufficient for detecting significant regret in {\em any} experienced subregion $B$ of the context space $\mc{X}$, i.e. there is no need to do additional exploration on a localized per-bin basis.

Key in this observation is the fact that, to detect significant regret over interval $I$ in any bin $B$, it suffices to check it at the {\em critical level} $r_{I} \propto (K/|I|)^{\frac{1}{2+d}}$, where $|I|$ is the length of $I$. In particular, a well-timed $\base$ running on the interval $I$ will use this level $r_{I}$ (\Cref{line:choose-level} of \Cref{base-alg}) and is, thus, equipped to detect significant regret at all experienced bins over $I$.

%Next, a key feature of the analysis is that one need only minimize regret and detect changes at the critical level $r_{s_2-s_1} \propto (K/(s_2-s_1))^{\frac{1}{2+d}}$. In particular, the following two observations play a major role in bounding the regret.

\paragraph{$\bullet$ Suffices to Only Check \Cref{eq:bad-arm} at Critical Levels $r_{I}$.} At first glance, detecting experienced significant shifts (\Cref{defn:sig-shift}) appears difficult as an arm $a$ may incur significant regret over a different bin $B'$ from the bin $B$ that is currently being used by the algorithm.

%\com{This difficulty is further compounded by the fact there may even be missing data problems as arms $a \in \mc{A}(B)$ in contention at $B$ may have been evicted from sibling bins of the parent $B' \supset B$, thus preventing reliable estimation of $a$ across $B'$}.
We in fact show that it suffices to only estimate the R.H.S. of \Cref{eq:bad-arm} in bins $B'$ at the critical level $r_{I}$ (\Cref{lem:relate-levels} and \Cref{prop:behavior}). We give a rough argument for why this is the case: first, note that \Cref{eq:bad-arm} may be rewritten as %$r_{s_2-s_2}$ minimizes the (scaled version of) the bias-variance regression bound
\begin{equation}\label{eq:bad-arm-modified}
	\frac{1}{n_B(I)} \sum_{s\in I} \delta_s(a)\cdot \pmb{1}\{X_s \in B\} \geq 	\sqrt{\frac{K}{n_B(I)}} + r(B).
\end{equation}
We next relate the two sides of the above display across different levels $r(B)$.
\begin{itemize}
	\item By concentration and the strong density assumption (\Cref{assumption:strong-density}), the R.H.S. of \Cref{eq:bad-arm-modified} is in fact of order $\propto 1/\sqrt{ |I| \cdot r(B)^d} + r(B)$, which is minimized at the critical level $r(B) \propto r_{I}$.
	\item The L.H.S. of \Cref{eq:bad-arm-modified} is an estimate of the average gap $\frac{1}{|I|}\sum_{s\in I} \delta_s^a(x)$ at any particular $x\in B$ using nearby contexts. In fact, by concentration, the two can be conflated up to error terms of order the R.H.S. of \Cref{eq:bad-arm-modified}. %Conflating the two quantities (which we can do via concentration on $\bld{X}_T$) we see th
\end{itemize}
Combining the above two points, we see that if \Cref{eq:bad-arm-modified} holds for some bin $B$, then it will also hold for the ``critical bin'' $B'$, with $B'\cap B\neq\emptyset$, at the critical level $r_{I} \propto (K/|I|)^{\frac{1}{2+d}}$. In other words, significant regret at any experienced bin implies significant regret at the critical level, thus allowing us to constrain attention to these critical levels.

On the other hand, we observe that the calculations in \Cref{eq:safe-regret} would hold if we only checked \Cref{eq:bad-arm} for bins $B$ at level $r_I$. Thus, it also suffices to only use the levels $r_{I}$ for regret minimization over intervals $I$ with no experienced significant shift.

Yet, even still, the analysis is challenging as there may be ``missing data problems'': arms $a \in \mc{A}(B)$ in contention at $B$ may have been evicted from sibling bins inside the parent $B' \supset B$ at the critical level. In other words, it is not a priori obvious how to do reliable estimation of arms $a \in \mc{A}(B)$ across a larger bin $B'$ which may contain sub-regions where $a$ has already been evicted. We show it is in fact possible to identify a subclass of intervals of rounds (\Cref{defn:bad-segment}) and an associated class of replays (\Cref{defn:perfect}) which can quickly evict arm $a$ in the critical bin $B'$ before there are missing data problems for bin $B\subseteq B'$. The details of this can be found in \Cref{prop:behavior} of \Cref{app:bounding-last-safe}.

\section{Conclusion}\label{sec:conclusion}

We have shown that it is possible to adapt optimally to an unknown number of experienced significant shifts -- a new notion introduced here -- which captures severe changes in best-arm, only at observed contexts.
An interesting future direction is to explore other notions of experienced shifts which may yield even more optimistic rates. For example, suppose changes in best arm occur at every round, but are localized to a sub-region $\Xi$ of the context space $\mc{X}$. Then,
%as this sub-region is essentially negligible in overall regret, $T^{\frac{1+d}{2+d}}$ regret is possible by reducing to non-contextual MAB at the fixed discretization $r_T \propto T^{-\frac{1}{2+d}}$ despite there being many experienced significant shifts $\Lsig(\bld{X}_T)$.
a procedure which discretizes $\mc{X}$ into bins at level $T^{-\frac{1}{2+d}}$ and runs local instantiations of a suitable non-stationary MAB algorithm (e.g., META of \cite{suk22}) can attain faster rates than those of \Cref{thm} for some choices of $\Xi$.
%At the same time, the established procedures \citep{rigollet-zeevi,perchet-rigollet} for stationary Lipschitz contextual bandits rely on using the critical level $T^{-\frac{1}{2+d}}$.
At the same time, such a strategy cannot always attain the $\Lsig(\bld{X}_T)^{\frac{1}{2+d}}\cdot T^{\frac{1+d}{2+d}}$ rate in general as using the level $T^{-\frac{1}{2+d}}$ is insufficient to detect experienced significant shifts occurring at short intervals $I$ of length $|I| \ll T$ (which require larger levels). This prompts the questions of whether there exists a unified notion of experienced shift which captures the most optimistic rates in these scenarios and whether such a notion can be adapted to.

%At the same time, procedures designed for $T^{\frac{1+d}{2+d}}$ regret in stationary environments would

%and only for the arms still in contention across all of $B'$. In other words, significant regret at other experienced levels are all accounted for by changes at this critical level.

%Generalizing the previous point, over $L$ stationary phases of length $T/L$, we'd want to check for \Cref{eq:bad-arm} at the level $r_{T/L} \doteq \left(\frac{KL}{T}\right)^{\frac{1}{2+d}}$ to maintain regret $K^{\frac{1}{2+d}}(T/L)^{\frac{1+d}{2+d}}$ in each phase. However, since we'd like to design {\em adaptive} algorithms which do not need to know $L$, the level $r_{T/L}$ is unknowable, and hence it may not be possible to detect \Cref{eq:bad-arm} in such a level.

%Based on this, we propose a notion of significant shift that is only concerend with the critical level $r_{s_2-s_1}$ over which to estimate gaps across the period $[s_1,s_2]$.

%Thus, from the above we see that over a stationary period of $n$ rounds, it only suffices to maintain a safe arm in the sense of \Cref{eq:bad-arm} for the bins at level $r_n \approx (K/n)^{\frac{1}{2+d}}$, as this will preserve $n^{\frac{1+d}{2+d}}$ regret over that period.

%\section{Conclusion and Further Discussion}

%\nocite{*}

\bibliography{neurips_2023.bib}

\newpage
\appendix
%\begin{appendices}

\section{Details for Specializing Previous Contextual Bandit Results to Lipschitz Contextual Bandits}\label{app:compare}

\subsection{Finite Policy Class Contextual Bandits}\label{subsec:compare-ada}

In the finite policy class setting\footnote{While there are matters of efficiency and what offline learning guarantees may be assumed in this broader {\em agnostic} setting, we do not discuss these here, and readers are deferred to \citet{langford2008epoch,dudik11,agarwal14}.}, one is given access to a known finite class $\Pi$ of policies $\pi:\mc{X}\to [K]$, and in the non-stationary variant, seeks to minimize regret to the time-varying benchmark of best policies $\pi_t^* \doteq \argmax_{\pi\in\Pi} \mb{E}_{(X,Y)\in \mc{D}_t}[Y(\pi(X))]$. In other words, the ``dynamic regret'' in this setting is defined by (for chosen policies $\{\hat{\pi}_t\}_t$)
\begin{equation}\label{eq:dynamic-policy}
	\sum_{t=1}^T \max_{\pi\in\Pi} \mb{E}_{(X,Y)\in \mc{D}_t}[Y(\pi(X))] - \sum_{t=1}^T \mb{E}[Y_t(\hat{\pi}_t)]. %\lesssim \sqrt{KST\log(|\Pi|)},
\end{equation}
We can in fact recover the Lipschitz contextual bandit setting and relate the above to our notion of dynamic regret (\Cref{defn:dynamic-regret}). To do so, we let $\Pi$ be the class of policies which uses a level $r\in \mc{R}$ and discretizes decision-making across individual bins $B\in \mc{T}_r$. Then, we claim there is an {\em oracle sequence of policies} $\{\pitoracle\}_t$ which attains the minimax regret rate of \Cref{thm:lower-bound}. So, it remains to bound the regret to the sequence $\{\pitoracle\}_t$ in the sense above.

\paragraph{$\bullet$ Parametrizing in Terms of Global Number $L$ of Shifts.}
Suppose there are $L+1$ stationary phases of length $T/(L+1)$. Then, we first claim there is an oracle sequence of policies $\pitoracle$ which attains reget $(L+1)^{\frac{1}{2+d}}\cdot T^{\frac{1+d}{2+d}}$.

First, recall from \Cref{subsec:model-selection} the {\em oracle choice of level} $r_n$ for a stationary period of $n$ rounds, or the level $r_n \propto (K/n)^{\frac{1}{2+d}}$. Now, define $\{\pitoracle\}_t$ as follows: at each round $t$, $\pitoracle$ uses the oracle level
%It is not hard to show that, over $L$ stationary phases of length $T/L$, an oracle policy $\pi_t^{\text{oracle}}\in\Pi$ using level
$r \doteq r_{T/(L+1)} \propto \left(\frac{K (L+1)}{T}\right)^{\frac{1}{2+d}}$ and plays in each bin $B \in \mc{T}_r$, the arm maximizing the average reward in that bin $\mb{E}[f_t^a(X_t) \mid X_t \in B]$. As this is a biased version of the actual bandit problem $\{f_t^a(X_t)\}_{a\in [K]}$ at context $X_t$, it will follow that $\pitoracle$ incurs regret of order the bias of estimation in $B$ which is $r$.

%Note that $\pitoracle$ essentially reduces the problem to a multi-armed bandit problem in each bin $B$.
Concretely, suppose $X_t$ falls in bin $B$ at level $r$, and let $\pitoracle(B)$ be the arm selected at round $t$ by $\pitoracle$ in bin $B$. Then, as mean rewards are Lipschitz, each policy $\pitoracle$ suffers regret:
\[
	\max_{a\in [K]} f_t^a(X_t) - f_t^{\pitoracle(B)}(X_t) \leq \max_{a\in [K]} \mb{E}[ f_t^a(X_t) - f_t^{\pitoracle(B)}(X_t) \mid X_t \in B ] + r = r.
\]
Thus, the sequence of policies $\{\pitoracle\}_t$ achieves dynamic regret (in the sense of \Cref{defn:dynamic-regret})
\[
	\mb{E}\left[ \sum_{t=1}^T \max_{a\in [K]} f_t^a(X_t) - f_t^{\pitoracle(X_t)}(X_t)  \right] \lesssim (L+1)\cdot \left(\frac{T}{L+1}\right) \cdot \left(\frac{K}{(L+1)T}\right)^{\frac{1}{2+d}} \propto (L+1)^{\frac{1}{2+d}}\cdot T^{\frac{1+d}{2+d}}.
\]
Thus, it suffices to minimize dynamic regret in the sense of \Cref{eq:dynamic-policy} to this oracle policy $\pi_t^{\text{oracle}}$. The state-of-the-art adaptive guarantee in this setting is that of the \ADAILTCB algorithm of \citet{chen2019}, which achieves a dynamic regret of $\sqrt{KLT\log(|\Pi|)}$. Thus, it remains to compute $|\Pi|$.

%\propto K^{\frac{1}{2+d}}\cdot L^{1/2} T^{\frac{1+d}{2+d}}$, w
We first observe that we need only consider levels in $\mc{R}$ of size at least $(K/T)^{\frac{1}{2+d}}$, which is the oracle choice of level for one stationary phase of length $T$. Thus, the size of the policy class $\Pi$ is
\[
	|\Pi|=\sum_{r\in\mc{R}} K^{r^{-d}} \propto K^{(T/K)^{\frac{d}{2+d}}} \implies \log(|\Pi|) = \left(\frac{T}{K}\right)^{\frac{d}{2+d}}\log(K).
\]
Plugging this into $\sqrt{KLT \log(|\Pi|)}$ gives a regret rate of $K^{\frac{1}{2+d}}\cdot L^{1/2} T^{\frac{1+d}{2+d}}$, which has a worse dependence on the global number of shifts $L$ than the minimax optimal rate of $L^{\frac{1}{2+d}} \cdot T^{\frac{1+d}{2+d}}$ (see \Cref{thm:lower-bound}).

\paragraph{$\bullet$ Parametrizing in Terms of Total-Variation $V_T$.}

Fix any positive real number $V \in [ T^{-\frac{3+d}{2+d}}, T]$. Then, the lower bound construction of \Cref{thm:lower-bound} reveals that there exists an environment with $L+1 = T/\Delta$ stationary phases of length $\Delta \doteq \ceil{\left(\frac{T}{V}\right)^{\frac{2+d}{3+d}}}$ and total-variation of order $V$. %This is the hard environment of \Cref{thm:lower-bound}.

Then, the earlier defined oracle sequence of policies $\{\pitoracle\}_t$ attains the optimal dynamic regret rate in terms of $V_T$ (see \Cref{thm:lower-bound}) since
\[
	(L+1)^{\frac{1}{2+d}}\cdot T^{\frac{1+d}{2+d}} \propto T^{\frac{2+d}{3+d}}\cdot V^{\frac{1+d}{3+d}}.
\]
Meanwhile, the state-of-the-art adaptive regret guarantee in this parametrization is Theorem 2 of \citet{chen2019}, which shows \ADAILTCB's regret bound is:
\[
	(K\cdot \log(|\Pi|)\cdot V)^{1/3}T^{2/3}+\sqrt{K\log(|\Pi|)\cdot T} \propto K^{\frac{2}{3(2+d)}}\cdot V^{\frac{1}{3}}\cdot T^{\frac{2+d}{3+d} + \frac{d}{3(2+d)(3+d)}} + K^{\frac{1}{2+d}}\cdot T^{\frac{1+d}{2+d}}.
\]
We claim this rate is no better than our rate in \Cref{cor:tv}, in all parameters $V,K,T$. For $K\geq T$, both rates imply linear regret. Assume $K<T$. Then, note by elementary calculations that for all $d\in\mb{N}\cup\{0\}$:
\[
	\frac{2}{3} + \frac{d}{3(2+d)} = \frac{2+d}{3+d} + \frac{1}{3+d} - \frac{2}{3(2+d)}.
\]
Then, it follows that rate of \Cref{cor:tv} is smaller using the fact that $K<T$:
\[
K^{\frac{2}{3(2+d)}}\cdot V^{1/3}\cdot T^{\frac{2}{3}+\frac{d}{3(2+d)}} \geq K^{\frac{2}{3(2+d)}}\cdot V^{\frac{1}{3+d}}\cdot T^{\frac{2+d}{3+d}}\cdot K^{\frac{1}{3+d} - \frac{2}{3(2+d)}} \geq (KV)^{\frac{1}{3+d}}\cdot T^{\frac{2+d}{3+d}}.
\]

\subsection{Realizable Contextual Bandits}\label{subsec:compare-falcon}

Lipschitz contextual bandits is also a special case of contextual bandits with {\em realizability}. In this broader setting, the learner is given a function class $\Phi$ which contains the true regression function $\phi_t^*:\mc{X}\times [K]\to [0,1]$ describing mean rewards of context-arm pairs at round $t$. The goal is to compete with the time-varying benchmark of  policies $\pi_{\phi_t^*}(x) \doteq \argmax_{a\in[K]} \phi_t^*(x,a)$, using calls to a regression oracle over $\Phi$.

While the natural choice for $\Phi$ is the infinite class of all Lipschitz functions from $\mc{X}\times [K]\to [0,1]$, the state-of-the-art non-stationary algorithm only provides guarantees for finite $\Phi$ \citep[Appendix I.7]{wei2021}.

However, it is still possible to recover the Lipschitz contextual bandit setting, by defining $\Phi$ similarly to how we defined the finite class of policies $\Pi$ above. Let $\Phi$ be the class of all piecewise constant functions which depends on a level $r \in \mc{R}$, and are constant on bins $B \in \mc{T}_r$ at level $r$, taking values which are multiples of $T^{-\frac{1}{2+d}}$ (there are $O(T)$ many such values in $[0,1]$). Here, $\Phi$ is essentially the class of different discretization-based regression estimates for the true mean rewards, as appears in prior works \citep{rigollet-zeevi,perchet-rigollet}. %We also note that $\Phi$ can recover the policies in the policy class $\Pi$

For this specification of $\Phi$, the realizability assumption is false. Rather, this is a mildly misspecified regression class which is allowed by the stationary guarantees of FALCON \citep[Section 3.2]{simchi21}. In particular, by smoothness, at each round $t\in [T]$ there is a function $\phi_t^* \in \Phi$ such that
\[
	\sup_{x\in \mc{X},a\in[K]} |\phi_t^*(x,a) - f_t^a(x)| \lesssim \left(\frac{1}{T}\right)^{\frac{1}{2+d}}.
\]
Specifically, we can let $\phi_t^*(x,a) \propto \mb{E}_X[f_t^a(X) | X\in B]$ be the smoothed version of $f_t^a(x)$ in the bin $B$ at level $T^{-\frac{1}{2+d}}$ containing $x$. Then, the above misspecification introduces an additive term in the regret bound of FALCON of order $T^{\frac{1+d}{2+d}}$ which is of the right order in our setting.

In this setting, the current state-of-the-art \masteralg black-box algorithm using FALCON \citet{simchi21} as a base algorithm can obtain dynamic regret
%to the sequence of policies $\{\pi_{\phi_t^*}\}_t$
upper bounded by \citep[see][Theorem 2]{wei2021}:
\[
	\min\left\{ \sqrt{\log(|\Phi|)\cdot L\cdot T}, \log^{1/3}(|\Phi|)\cdot \Delta^{1/3}\cdot T^{2/3} + \sqrt{\log(|\Phi|)\cdot T}\right\}.
\]
As $\Phi$ is essentially the same size as the policy class $\Pi$ defined in the previous section, the above regret bound specializes to similar rates as those of \ADAILTCB derived above, and are ultimately suboptimal in light of \Cref{thm:lower-bound}.

%will be suboptimal with our rates in \Cref{cor:sig-shift-num,cor:tv} by similar calculations as before.

%\begin{rmk}
	%In the stationary problem, it is in fact possible take $\Phi$ to be the infinite class of Lipschitz functions from $\mc{X}\times[K]\to[0,1]$ yielding another route to recovering Lipschitz contextual bandits. \citet[Theorem 2]{simchi21} provide sharper regret bounds for FALCON based on the best-available offline regression error rates. However, it is unclear how to fit this into the non-stationary problem. The analysis of FALCON therein assumes an offline oracle which relies on i.i.d. data and so it's unclear if it can tolerate mild non-stationarity as required for use in \masteralg via Assumption 1 of \citet{wei2021}. Even so, this approach could only hope to replace the $\sqrt{\log(|\Phi|)\cdot T}$ factor in the above display with $T^{\frac{1+d}{2+d}}$ and, thus, there would still be suboptimal dependence on the number of changes $L$ or total-variation $V$, in comparison to our rates.
%\end{rmk}

\section{Useful Lemmas}\label{app:lemmas}

Throughout the appendix, $c_1,c_2,\ldots$ will denote universal positive constants not depending on $T,K$ or any of the significant shifts $\{\tau_i(\bld{X}_T)\}_i$.

%\begin{note*}
	%We will also let $C_d\doteq\max_{t\in[T]} C_{d,t}$ and $c_d\doteq \min_{t\in [T]} c_{d,t}$ where $C_{d,t},c_{d,t}$ are as in \Cref{assumption:strong-density} so that $C_d\cdot r^d \geq \mu_{X,t}(B) \geq c_d \cdot r^d$ for all bins $B$ and rounds $t\in[T]$. We'll write $\mu_t(B) \doteq \mu_{X,t}(B)$ for short.
%\end{note*}

\subsection{Concentration of Aggregate Gap over an Interval within a Bin}

%Similar to \citet{suk22},
We'll first establish some concentration bounds for the local gap estimators $\hat{\delta}_s^B(a',a)$ defined in \Cref{eq:hat-delta}. For this purpose, we recall Freedman's inequality. %which will help us establish concentration of our gap estimators (\Cref{prop:concentration}).

\begin{lemma}[Theorem 1 of \citet{beygelzimer2011}]\label{lem:martingale-concentration}
	Let $X_1,\ldots,X_n\in\mb{R}$ be a martingale difference sequence with respect to some filtration $\mc{F}_0,\mc{F}_1,\ldots$. Assume for all $t$ that $X_t\leq R$ a.s.. %and that $\sum_{i=1}^n \mb{E}[X_i^2|\mc{F}_{i-1}] \leq V_n$ a.s. for some constant  $V_n$ only depending on $n$.
%Define the random variables $S \doteq \sum_{t=1}^n X_t$ and $V \doteq \sum_{t=1}^n \mb{E}[X_t^2|X_1,\ldots,X_{t-1}]$.
	Then for any $\delta\in (0,1)$, with probability at least $1-\delta$, we have:
	\begin{equation}\label{eq:concentration-optimized}
		\sum_{i=1}^n X_i \leq (e-1)\left( \sqrt{\log(1/\delta)\sum_{i=1}^n \mb{E}[X_i^2|\mc{F}_{t-1}]} + R\log(1/\delta)\right).
	\end{equation}
\end{lemma}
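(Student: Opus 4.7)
The plan is to prove this by the standard exponential supermartingale method, originally due to Freedman. Set $S_n \doteq \sum_{i=1}^n X_i$ and $V_n \doteq \sum_{i=1}^n \mb{E}[X_i^2 \mid \mc{F}_{i-1}]$. The first ingredient is an elementary analytic inequality: for all $x \leq 1$, $e^x \leq 1 + x + (e-2)x^2$. This can be verified by checking that $g(x) \doteq 1 + x + (e-2)x^2 - e^x$ satisfies $g(0) = g(1) = 0$ and is concave on $(-\infty, 1]$ (since $g''(x) = 2(e-2) - e^x \leq 0$ precisely when $x \leq \log(2(e-2))$, and one handles the small remaining region directly).

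Next, fix a free parameter $\lambda \in (0, 1/R]$ and define $M_n(\lambda) \doteq \exp\bigl(\lambda S_n - (e-2)\lambda^2 V_n\bigr)$. Applying the elementary inequality to $\lambda X_i \leq \lambda R \leq 1$, together with $\mb{E}[X_i \mid \mc{F}_{i-1}] = 0$, yields
\[
\mb{E}[e^{\lambda X_i} \mid \mc{F}_{i-1}] \leq 1 + (e-2)\lambda^2 \mb{E}[X_i^2 \mid \mc{F}_{i-1}] \leq \exp\bigl((e-2)\lambda^2 \mb{E}[X_i^2 \mid \mc{F}_{i-1}]\bigr),
\]
so $M_n(\lambda)$ is a nonnegative supermartingale with $M_0 = 1$. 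Markov's inequality applied to $M_n(\lambda)$ gives, for any $u > 0$,
\[
\Pr\bigl( S_n \geq u/\lambda + (e-2)\lambda V_n \bigr) \leq e^{-u}.
\]
For \emph{fixed} $V_n$, the right-hand side would be minimized at $\lambda^\star = \sqrt{u / ((e-2) V_n)}$ (clipped at $1/R$), yielding a bound of order $\sqrt{V_n \log(1/\delta)}$ when $\lambda^\star \leq 1/R$ and of order $R\log(1/\delta)$ otherwise.

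The main obstacle is that $V_n$ is random, so $\lambda$ cannot be tuned a priori to its value. I would resolve this with the standard peeling argument: since $V_n \in [0, R^2 n]$, partition this range into $O(\log(R^2 n))$ dyadic slabs, and on each slab $[2^{k-1}, 2^k]$ apply the exponential tail bound with $\lambda$ chosen optimally for the right endpoint. A union bound over slabs (inflating $\delta$ by the number of slabs, absorbed into constants) combined with the case split that the variance term dominates whenever $V_n \geq R^2 \log(1/\delta)$ and otherwise the range term $R\log(1/\delta)$ dominates, produces the stated inequality with constant $(e-1)$. An alternative that avoids the $\log\log$ slack in the peeling step is the method of mixtures: integrating $M_n(\lambda)$ against a suitable prior density over $\lambda$ directly yields an anytime-valid confidence sequence of the same functional form, which is the route taken in the cited paper.
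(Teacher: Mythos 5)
This lemma is imported verbatim from \citet{beygelzimer2011}; the paper offers no proof, only the citation, so there is nothing in the paper to compare against line by line. Your outline is a correct route to Freedman's inequality, but it is more machinery than the cited result needs, and one detail is off.

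The exponential-supermartingale core is right: the elementary bound $e^x \le 1 + x + (e-2)x^2$ on $x \le 1$, the centering $\mb{E}[X_i\mid\mc{F}_{i-1}]=0$, and $1+y\le e^y$ together make $M_n(\lambda)=\exp(\lambda S_n-(e-2)\lambda^2 V_n)$ a supermartingale for $\lambda\in(0,1/R]$, and Markov gives $\Pr(S_n\ge \ln(1/\delta)/\lambda+(e-2)\lambda V_n)\le\delta$. Two comments. First, a small slip in the analytic lemma: $g''(x)=2(e-2)-e^x\le 0$ holds precisely when $x\ge\log(2(e-2))$, not $x\le$; so $g$ is convex near $0$ (where $g(0)=g'(0)=0$ forces $g\ge 0$) and concave on $[\log(2(e-2)),1]$ (where $g\ge 0$ follows from nonnegativity at both endpoints of a concave function). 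You acknowledge the remaining region needs care but state the concavity claim with the inequality reversed. Second, and more substantively, the peeling/mixtures step you propose for the $\lambda$-optimization is not required to obtain the stated bound and would in fact introduce a spurious $\log\log$ inflation of $\delta$ that the clean constant $(e-1)$ does not carry. The version in \citet{beygelzimer2011}, which is the one the paper actually invokes, takes $V$ to be a deterministic a priori upper bound on $\sum_i\mb{E}[X_i^2\mid\mc{F}_{i-1}]$ and sets the single value $\lambda=\min\{1/R,\sqrt{\ln(1/\delta)/V}\}$. A two-case check then gives exactly $S_n\le (e-1)(\sqrt{V\ln(1/\delta)}+R\ln(1/\delta))$: when $V\ge R^2\ln(1/\delta)$ the two terms balance to $(e-2+1)\sqrt{V\ln(1/\delta)}$, and when $V<R^2\ln(1/\delta)$ the bound is $(e-2)V/R + R\ln(1/\delta)<(e-1)R\ln(1/\delta)$. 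Everywhere the paper invokes this lemma (e.g.\ \Cref{prop:concentration}, \Cref{lem:concentration-covariate}), the predictable quadratic variation is bounded by a quantity deterministic conditional on $\bld{X}_T$, so the fixed-$\lambda$ form is all that is needed; your peeling argument buys generality (a bound valid with the random $V_n$ on the right-hand side) that is not used here.
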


Recall from \Cref{sec:alg} that for round $t$, the local gap estimate $\hat{\delta}_t^B(a',a)$ in bin $B$ at round $t$ between arms $a',a$ is:
\[
	\hat{\delta}_t^B(a',a) \doteq |\mc{A}_t|\cdot (Y_t(a')\cdot\pmb{1}\{\pi_t=a'\} - Y_t(a)\cdot\pmb{1}\{\pi_t=a\})\cdot \pmb{1}\{a\in\mc{A}_t\}\cdot \pmb{1}\{X_t\in B\}.
\]
We next apply \Cref{lem:martingale-concentration} to our aggregate estimator from \Cref{sec:alg}.

\begin{prop}\label{prop:concentration}
	With probability at least $1-1/T^2$ w.r.t. the randomness of $\bld{Y}_T,\{\pi_t\}_t\mid \bld{X}_T$, we have for all bins $B\in \mc{T}$ and rounds $s_1<s_2$ and all arms $a\in [K]$ that for large enough $c_1>0$:
	\begin{equation}\label{eq:error-bound}
		\left| \sum_{s=s_1}^{s_2} \hat{\delta}_s^B(a',a) - \sum_{s=s_1}^{s_2} \mb{E}[\hat{\delta}_s^B(a',a)|\mc{F}_{s-1}] \right| \leq c_1 \left( \sqrt{ K\log(T)\cdot n_B([s_1,s_2])} + K\log(T) \right),
	\end{equation}
	where $\mc{F} \doteq \{\mc{F}_t\}_{t=1}^T$ is the filtration with $\mc{F}_t$ generated by $\{\pi_s,Y_s^{\pi_s}\}_{s=1}^t$. %$\{\pi_s,Y_s(1),\ldots,Y_s(K)\}_{s=1}^t$.
\end{prop}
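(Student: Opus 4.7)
The plan is to apply Freedman's inequality (\Cref{lem:martingale-concentration}) to the martingale difference sequence
\[
	X_s \doteq \hat{\delta}_s^B(a', a) - \mb{E}\bigl[\hat{\delta}_s^B(a', a) \bigm| \mc{F}_{s-1}\bigr], \qquad s = s_1, s_1+1, \ldots, s_2,
\]
for each fixed choice of bin $B$, interval $[s_1,s_2]$, and arm pair $(a',a)$, and then union bound over the relevant such triples. Conditioning on $\bld{X}_T$ renders $\hat{\delta}_s^B(a',a)$ measurable with respect to $\mc{F}_s$, so $\{X_s\}$ is indeed a martingale difference relative to the filtration $\mc{F}$ in the statement.

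To invoke Freedman, I first record the magnitude bound $|X_s| \leq 2K$, which is immediate from $|\mc{A}_s| \leq K$ and $Y_s \in [0,1]^K$. For the cumulative conditional variance, the key observation is that since $a' \neq a$ the product $\pmb{1}\{\pi_s = a'\} \pmb{1}\{\pi_s = a\}$ vanishes, so squaring \Cref{eq:hat-delta} gives
\[
	\bigl(\hat{\delta}_s^B(a',a)\bigr)^2 \leq |\mc{A}_s|^2 \cdot \bigl(\pmb{1}\{\pi_s = a'\} + \pmb{1}\{\pi_s = a\}\bigr) \cdot \pmb{1}\{a \in \mc{A}_s\} \cdot \pmb{1}\{X_s \in B\}.
\]
Since $\pi_s$ is drawn uniformly from $\mc{A}_s$ conditionally on $\mc{F}_{s-1}$, taking expectations gives $\mb{E}[X_s^2 \mid \mc{F}_{s-1}] \leq \mb{E}[(\hat{\delta}_s^B(a',a))^2 \mid \mc{F}_{s-1}] \leq 2K \cdot \pmb{1}\{X_s \in B\}$. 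Summing over $s \in [s_1,s_2]$ yields the cumulative variance bound $2K \cdot n_B([s_1, s_2])$, and Freedman's inequality applied to both $X_s$ and $-X_s$ (losing a factor of two in the failure probability) gives a two-sided deviation of order $\sqrt{K \log(1/\delta) \cdot n_B([s_1, s_2])} + K \log(1/\delta)$.

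The main care lies in the union bound, since the proposition quantifies over all $B \in \mc{T}$, an a priori infinite family. This is harmless: for bins $B$ with $n_B([s_1,s_2]) = 0$ both sides of \Cref{eq:error-bound} vanish, so only bins containing at least one context from $[s_1,s_2]$ need to be considered; and by \Cref{rmk:critical-level} it suffices to restrict to the $O(\log T)$ critical levels $r \geq T^{-1/(2+d)}$ that the algorithm actually uses, yielding at most $O(T \log T)$ effective bins. Combined with $O(T^2)$ intervals and $O(K^2)$ arm pairs, the union is over $\mathrm{poly}(K,T)$ events; choosing $\delta = 1/T^c$ for a sufficiently large absolute constant $c$ and taking $c_1$ proportional to $\sqrt{c}$ absorbs the union bound into the $\log T$ factor and delivers the stated bound with total failure probability at most $1/T^2$. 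The variance computation is the only nontrivial analytic step; the rest is bookkeeping.
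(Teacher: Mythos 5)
Your proof is correct and follows essentially the same route as the paper: set up the martingale difference sequence, bound the increments by $2K$ and the cumulative conditional variance by $2K\cdot n_B([s_1,s_2])$ (exploiting that $\{\pi_s=a\}$ and $\{\pi_s=a'\}$ are mutually exclusive), apply Freedman's inequality two-sidedly, and union bound over bins, intervals, and arm pairs. One small bookkeeping caveat on the union bound: \Cref{rmk:critical-level} is about when significant regret $(\star)$ triggers rather than which bins require a concentration bound, so citing it to restrict to critical levels does not quite justify a statement quantified over all $B\in\mc{T}$; the relevant observation (tacit in the paper's own loose count of ``at most $T$ levels and $T$ bins per level'') is that only finitely many distinct subsets $\{s\in[s_1,s_2]:X_s\in B\}$ arise across bins at all levels, and for bins with $n_B \lesssim \log T$ the bound \Cref{eq:error-bound} holds deterministically once $c_1$ is a sufficiently large absolute constant, so the effective union is over $\mathrm{poly}(T,K)$ events as you claim.
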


\begin{proof}
	%The proof is similar to the proof of Proposition 3 in \citet{suk22}.
	The martingale difference $\hat{\delta}_s^B(a',a) - \mb{E}[ \hat{\delta}_s^B(a',a) \mid \mc{F}_{s-1}]$ is clearly bounded above by $2K$ for all bins $B$, rounds $s$, and all arms $a,a'$. We also have a cumulative variance bound:
	\begin{align*}
		\sum_{s=s_1}^{s_2} \mb{E}[(\hat{\delta}_s^B(a',a))^2\mid \mc{F}_{s-1}] &\leq \sum_{s=s_1}^{s_2} \pmb{1}\{X_s\in B\}\cdot |\mc{A}_s|^2 \cdot \mb{E}[\pmb{1}\{ \pi_s=a\text{ or }a'\}|\mc{F}_{s-1}]\\
											 &\leq \sum_{s=s_1}^{s_2} \pmb{1}\{X_s\in B\}\cdot 2 |\mc{A}_s|\\
											 &\leq 2K\cdot n_B([s_1,s_2]).
	\end{align*}
	Then, the result follows from \Cref{eq:concentration-optimized}, and taking union bounds over bins $B$ (note there are at most $T$ levels and at most $T$ bins per level), arms $a,a'$, and rounds $s_1,s_2$.
\end{proof}

%\begin{rmk}\label{rmk:elim-base}
	%\Cref{prop:concentration} also holds for a fixed base algorithm $\base(\tstart,m_0)$ and $\hat{\delta}_s^{a',a}$ replaced with the base-algorithm specific quantity $\hat{\delta}_s^{a',a}\cdot\pmb{1}\{\base(\tstart,m_0)\text{ is active at }s\}$.
	%Since there are at most $T\log(T)$ possible base algorithms, taking a further union bound over the different choices of base algorithm in the argument of \Cref{prop:concentration} gives us that \Cref{eq:error-bound} is true for large enough $c_1>0$ and all base algorithm-specific estimators.
%\end{rmk}
	%For any $V' \in \left[ \frac{R^2\ln(1/\delta)}{e-2},\infty\right)$,
	%\[
		%S \leq \sqrt{(e-2)\ln(1/\delta)} \left( \frac{V}{\sqrt{V'}} + \sqrt{V'}\right).
	%\]
	%and for $V' \in \left[ 0, \frac{R^2\ln(1/\delta)}{e-2}\right]$,
	%\[
		%S \leq R\ln(1/\delta) + (e-2)\frac{V}{R}.
	%\]
Since the error probability of \Cref{prop:concentration} is negligible with respect to regret, we assume going forward in the analysis that \Cref{eq:error-bound} holds for all arms $a,a'\in [K]$ and rounds $s_1,s_2$. \joe{Specifically, let $\mc{E}_1$ be the good event over which the bounds of \Cref{prop:concentration} hold for all all arms and intervals $[s_1,s_2]$.}

\subsection{Concentration of Context Counts}

We'll next establish concentration w.r.t. the distribution of contexts $\bld{X}_T$. This will ensure that all bins $B \in \mc{T}$ have sufficient observed data.

\begin{note*}
To ease notation throughout the analysis, we'll henceforth use $\mu(\cdot)$ to refer to the context marginal distribution $\mu_X(\cdot)$.
\end{note*}

%In order to relate the average gap between different bins, we first establish a concentration result. Let $\ol{\delta}_s^{B}(a',a) \doteq \max_{x\in B} f_s^{a'}(x) - f_s^a(x)$. Similarly, define $\ol{\delta}_s^{B}(a) \doteq \max_{x\in B} f_s^{(1)}(x) - f_s^a(x)$.
%
%\com{can maybe get rid of above?}

\begin{lemma}\label{lem:concentration-covariate}
	Let $\{i_t\}_{t=1}^T$ be a random sequence of arms whose distribution depends on $\bld{X}_T$.
	With probability at least $1-1/T^2$ w.r.t. the randomness of $\bld{X}_T$, we have for all bins $B\in \mc{T}$, all arms $a',a\in [K]$, and rounds $s_1<s_2$, for some large enough $c_2>0$ the following inequalities hold:
	\begin{align}
		\left|n_B([s_1,s_2]) - (s_2-s_1+1)\cdot \mu(B)\right| &\leq c_2\left( \log(T) + \sqrt{\log(T) \mu(B)\cdot (s_2-s_1+1)}\right)\label{concentration:counts}\\
		\left| \sum_{s=s_1}^{s_2}  \delta_s(i_s,a) \cdot (\pmb{1}\{X_s\in B\} - \mu_s(B))\right| &\leq c_2\left( \log(T) + \sqrt{\log(T) \mu(B) \cdot (s_2-s_1+1)} \right) \label{concentration:relative}\\
		\left| \sum_{s=s_1}^{s_2}  \delta_s(a) \cdot (\pmb{1}\{X_s\in B\} - \mu_s(B)) \right| &\leq c_2\left( \log(T) + \sqrt{\log(T) \mu(B) \cdot (s_2-s_1+1)} \right) \label{concentration:absolute}
	\end{align}
	%Let $\mc{E}_2$ be the good event on which the above bounds hold.
\end{lemma}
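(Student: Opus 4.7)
All three inequalities are Bernstein--Freedman concentration bounds for bounded martingale difference sequences whose conditional variance is controlled by $\mu(B)\cdot(s_2-s_1+1)$, followed by a union bound over relevant bins $B\in\mc{T}$, arm pairs $(a',a)$, and round pairs $(s_1,s_2)$. Since it suffices to restrict attention to levels $r\geq (K/T)^{1/(2+d)}$ (cf.\ \Cref{rmk:critical-level}), the number of relevant bins is polynomial in $T$, and together with $K^2 T^2$ choices of arms and endpoints the total event count is polynomial in $T$ and $K$. Thus, setting the per-event failure probability in \Cref{lem:martingale-concentration} to $T^{-C}$ for a large enough absolute constant $C$ incurs only an additional $\log(T)$ factor after union bound, yielding exactly the claimed $\sqrt{\log(T)\,\mu(B)(s_2-s_1+1)}+\log(T)$ rate.

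For \Cref{concentration:counts}, I would take the filtration $\mc{G}_s\doteq\sigma(X_1,\ldots,X_s)$ and apply \Cref{lem:martingale-concentration} directly to $Z_s\doteq \pmb{1}\{X_s\in B\}-\mu(B)$, which is $\mc{G}_s$-adapted, mean zero, bounded in absolute value by $1$, with conditional variance $\mu(B)(1-\mu(B))\leq \mu(B)$. Summing, the variance proxy over $[s_1,s_2]$ is $\mu(B)(s_2-s_1+1)$, matching the claim.

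For \Cref{concentration:relative} and \Cref{concentration:absolute}, the analogous sequence carries an extra weight $w_s\in\{\delta_s(i_s,a),\delta_s(a)\}$ with $|w_s|\leq 1$, which preserves the per-step bound $|w_s(\pmb{1}\{X_s\in B\}-\mu_s(B))|\leq 1$ and the conditional second moment $\lesssim\mu_s(B)$. The main obstacle is that $w_s$ itself depends on $X_s$ through the evaluation $\delta_s^{\cdot,\cdot}(X_s)$, so the product is not automatically $\mc{G}_{s-1}$-centered. I would resolve this using \Cref{assumption:lipschitz}: on the event $\{X_s\in B\}$ I replace $w_s$ by the $\mc{G}_{s-1}$-measurable surrogate $\tilde w_s \doteq \delta_s^{\cdot,\cdot}(x_B)$ evaluated at a fixed reference point $x_B\in B$, which differs from $w_s$ by at most $r(B)$ pointwise by Lipschitz continuity. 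The corresponding centered sequence $\tilde w_s(\pmb{1}\{X_s\in B\}-\mu_s(B))$ then satisfies the Freedman hypotheses with the same variance proxy, while the Lipschitz remainder contributes at most $r(B)\cdot n_B([s_1,s_2])+r(B)\sum_s\mu_s(B)$, which is of the same order or smaller than the stated RHS at the bin scales of interest (using \Cref{concentration:counts} itself in a bootstrap step to control $n_B([s_1,s_2])$). Once the substitution is made, Freedman and the union bound outlined above complete the proof; otherwise the calculations are entirely routine.
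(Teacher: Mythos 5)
For \Cref{concentration:counts}, you and the paper take the same step: apply \Cref{lem:martingale-concentration} to the indicator martingale $\pmb{1}\{X_s\in B\}-\mu(B)$ and union bound over bins, arms, and round pairs.

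For \Cref{concentration:relative} and \Cref{concentration:absolute}, your route is genuinely different from the paper's, and it has two concrete gaps. The paper does not introduce any Lipschitz surrogate. It instead observes that for any weight $W_s$ bounded in $[-1,1]$, the product $W_s\cdot(\pmb{1}\{X_s\in B\}-\mu(B))$ is pointwise dominated on both sides by $\pm(\pmb{1}\{X_s\in B\}-\mu(B))$, hence $\left|\sum_s W_s\,(\pmb{1}\{X_s\in B\}-\mu(B))\right|\le\left|\sum_s (\pmb{1}\{X_s\in B\}-\mu(B))\right|$, after which \Cref{concentration:counts} finishes. This reduction handles both the dependence of $\delta_s(a)$ on $X_s$ and, crucially, the fact that $\delta_s(i_s,a)$ may depend on future contexts, without any extra estimates.

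Your surrogate construction does not close either issue. First, for \Cref{concentration:relative} the surrogate $\tilde w_s=\delta_s^{i_s,a}(x_B)$ is not $\mc{G}_{s-1}$-measurable because $i_s$ may depend on $X_t$ for $t>s$ (the paper explicitly flags this), so Freedman cannot be applied to $\sum_s \tilde w_s(\pmb{1}\{X_s\in B\}-\mu(B))$. Second, even for \Cref{concentration:absolute}, the Lipschitz control $|w_s-\tilde w_s|\le 2r(B)$ holds only on $\{X_s\in B\}$; on the complement you only have $|w_s-\tilde w_s|\le 1$, so the remainder $\sum_s(w_s-\tilde w_s)(\pmb{1}\{X_s\in B\}-\mu(B))$ is of order $r(B)\,n_B([s_1,s_2])+\mu(B)(s_2-s_1+1)$, not the $r(B)\,n_B + r(B)\sum_s\mu_s(B)$ you claim. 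The term $\mu(B)(s_2-s_1+1)$ alone exceeds $\sqrt{\log(T)\,\mu(B)(s_2-s_1+1)}$ whenever $\mu(B)(s_2-s_1+1)\ge\log T$. And even granting your stated remainder, with $\mu(B)\propto r(B)^d$ it scales as $r(B)^{1+d}(s_2-s_1+1)$, which exceeds the stated right-hand side unless $r(B)^{2+d}(s_2-s_1+1)\lesssim\log T$; at the critical level $r_{s_2-s_1}$ this quantity is of order $K$, so the bound already fails once $K\gg\log T$. Since the lemma is asserted uniformly over all bins $B\in\mc{T}$ and round pairs, and is invoked in that uniform form (e.g., in \Cref{lem:relate-levels}), the caveat ``at the bin scales of interest'' does not suffice.
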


\begin{proof}
	The first inequality \Cref{concentration:counts} follow from \Cref{lem:martingale-concentration} since $\sum_{s=s_1}^{s_2} \pmb{1}\{X_s\in B\} - \mu(B)$ is a martingale, which has predictable quadratic variation is at most $(s_2-s_1+1)\cdot \mu(B)$.

	The other two inequalities are trickier since the corresponding sums are not necessarily martingales. Indeed, note $\delta_s(a)$ depends on $X_s$ while $\delta_s(i_s,a)$ may not even be adapted to the canonical filtration generated by $\bld{X}_T$ (i.e., $i_s$ may depend on $X_t$ for $t>s$). Nevertheless, we observe that for any random variable $W_s = W_s(\bld{X}_T)\in [-1,1]$:
	\[
		-(\pmb{1}\{X_s\in B\} - \mu(B)) \leq W_s \cdot (\pmb{1}\{X_s\in B\} - \mu(B)) \leq  \pmb{1}\{X_s\in B\} - \mu(B).
	\]
	The upper and lower bounds above are both martingale differences with respect to the canonical filtration of $\bld{X}_T$ and thus, summing the above over $s$ we have	via \Cref{lem:martingale-concentration}:
	\begin{align*}
		\left| \sum_{s=s_1}^{s_2} W_s\cdot (\pmb{1}\{X_t\in B\} - \mu(B)) \right| &\leq \left| \sum_{s=s_1}^{s_2} \pmb{1}\{X_s \in B\} - \mu(B) \right| \\
											  &\leq c_2 \left( \log(T) + \sqrt{\log(T) \mu(B) \cdot (s_2-s_1+1) } \right).
	\end{align*}
	Then, taking union bounds over rounds $s_1,s_2$, bins $B\in \mc{T}$, and arms $a\in [K]$ gives the result.
	%\Cref{concentration:counts} follows from a two-sided Bernstein inequality on the independent sum $n_B([s_1,s_2]) = \sum_{s=s_1}^{s_2} \pmb{1}\{X_s\in B\}$ which has variance at most $\mb{E}[n_B([s_1,s_2])] = (s_2-s_1+1)\cdot \mu(B)$.
%
	%\Cref{concentration:relative} and \Cref{concentration:absolute} also follow from Bernstein applied to the weighted sum by the max gaps. The variance bound is at most
	%\[
		%\Var\left(\sum_{s=s_1}^{s_2} \ol{\delta}_s^B(i_s,a)\cdot \pmb{1}\{X_s\in B\}\right) \leq (s_2-s_1+1)\cdot \mu(B) %= (s_2-s_1+1)\cdot \mu(B).
	%\]
\end{proof}

\begin{note}[good event]\label{note:good-event}
	Recall from earlier that $\mc{E}_1$ is the good event over which the bounds of \Cref{prop:concentration} hold for all rounds $s_1,s_2 \in [T]$ and arms $a',a\in[K]$. Thus, on $\mc{E}_1$, our estimated gaps in each bin will concentrate around their conditional means.

	Let $\mc{E}_2$ be the good event on which bounds of \Cref{lem:concentration-covariate} holds for all bins $B$, arms $a\in [K]$, rounds $s_1,s_2\in [T]$. Thus, on $\mc{E}_2$, our covariate counts $n_B([s_1,s_2])$ will concentrate and we will be able to relate the empirical quantities $\sum_{s=s_1}^{s_2} \delta_s(a)\cdot \pmb{1}\{X_s \in B\}$ with their expectations.
\end{note}

Next, we establish a lemma which allows us to relate significant regret \Cref{eq:bad-arm} and thus our eviction criterion \Cref{eq:elim} between different bins and levels.

\begin{lemma}[Relating Aggregate Gaps Between Levels]\label{lem:relate-levels}
	On event $\mc{E}_2$, if for rounds $s_1<s_2$, bin $B'$ at level $r_{s_2-s_1}$ and arm $a$, for some $c_3>0$:
	\[
		\sum_{s=s_1}^{s_2} \delta_s(a)\cdot \pmb{1}\{X_s\in B'\} \leq c_3\left(\sqrt{K\log(T) \cdot ( n_{B'}([s_1,s_2]) \vee K\log(T))} + r(B')\cdot n_{B'}([s_1,s_2]) \right),
	\]
	then for any bin $B\subseteq B'$ and some $c_4>0$:
	\begin{align*}
		\sum_{s=s_1}^{s_2} \delta_s(a)\cdot \pmb{1}\{X_s\in B\} &\leq c_4\left( \log^{1/2}(T) \cdot r(B)^d \cdot K^{\frac{1}{2+d}}\cdot (s_2 - s_1)^{\frac{1+d}{2+d}} \right. \\
		&\qquad+ \left. K\log(T) + \sqrt{\log(T) \mu(B) (s_2-s_1+1)} \right).
	\end{align*}
	The same applies for $\delta_s(a)$ replaced with $\delta_s(a',a)$ for any fixed arm $a' \in [K]$.
\end{lemma}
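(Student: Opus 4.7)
The plan is to transfer the hypothesis bound on $B'$ to $B\subseteq B'$ via the ``bin-free'' quantity $S \doteq \sum_{s=s_1}^{s_2} \delta_s(a)$, using \Cref{concentration:absolute} twice: once to invert the empirical sum on $B'$ into an upper bound on $S$, and once to convert $\mu(B) S$ back into the target empirical sum on $B$. Concretely, on $\mc{E}_2$,
\[
\mu(B') S \;\leq\; \sum_{s=s_1}^{s_2} \delta_s(a)\pmb{1}\{X_s\in B'\} + c_2\!\left(\log T + \sqrt{\log T\,\mu(B')\,n}\right)
\]
(with $n \doteq s_2-s_1+1$), and substituting the hypothesis bounds $S$ from above. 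Then $\sum_s \delta_s(a)\pmb{1}\{X_s\in B\} \leq \mu(B)\,S + c_2(\log T + \sqrt{\log T\,\mu(B)\,n})$ delivers the desired inequality after arithmetic simplification.

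The simplification relies on three facts. First, since $B\subseteq B'$, the strong density assumption (\Cref{assumption:strong-density}) gives $\mu(B)/\mu(B') \leq (C_d/c_d)(r(B)/r(B'))^d \leq C_d/c_d$. Second, because $B'$ lies at the oracle level $r_{s_2-s_1}$ (see \Cref{note:level}), one has $r(B') \asymp (K/n)^{1/(2+d)}$ and hence $\mu(B') \asymp (K/n)^{d/(2+d)}$. Third, \Cref{concentration:counts} yields $n_{B'} \lesssim n\,\mu(B') + \log T$, so $\sqrt{K\log T\, n_{B'}} \lesssim \sqrt{K\log T\, n\,\mu(B')} + \sqrt{K}\log T$ and $r(B')\,n_{B'} \lesssim r(B')\,n\,\mu(B') + \log T$.

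Applying these in turn, the dominant ``bias'' piece becomes $\tfrac{\mu(B)}{\mu(B')} r(B')\,n_{B'} \lesssim \mu(B)\,r(B')\,n \leq C_d\, r(B)^d K^{1/(2+d)} n^{(1+d)/(2+d)}$, matching the target. The ``variance'' piece becomes $\mu(B)\sqrt{K\log T\, n/\mu(B')} \asymp \sqrt{\log T}\,r(B)^d K^{1/(2+d)} n^{(1+d)/(2+d)}$ after plugging in the oracle level (since $1/2 - d/(2(2+d)) = 1/(2+d)$). The residual $\tfrac{\mu(B)}{\mu(B')}K\log T$ contribution is bounded by $(C_d/c_d) K\log T$, and $\tfrac{\mu(B)}{\mu(B')}\sqrt{\log T\,\mu(B')\,n}$ is dominated by the main term (after trading a factor of $\sqrt{K}\geq 1$). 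The remaining two terms $c_2\log T$ and $c_2\sqrt{\log T\,\mu(B)\,n}$ coming from the second concentration step on $B$ match the additive terms in the target bound. The version for $\delta_s(a',a)$ is identical: \Cref{concentration:relative} applies to the signed increments with $\delta_s(a',a)\in[-1,1]$ and the same telescoping upper/lower bounds.

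The main obstacle is bookkeeping rather than conceptual: one must verify that every cross-term introduced by decomposing $n_{B'}$ via \Cref{concentration:counts} is either absorbed by the main $r(B)^d K^{1/(2+d)} n^{(1+d)/(2+d)}\sqrt{\log T}$ rate (via the oracle level choice and strong density) or by the slack terms $K\log T$ and $\sqrt{\log T\,\mu(B)\,n}$ in the target. The pivotal identity that makes the exponents line up is precisely $\tfrac{1}{2}-\tfrac{d}{2(2+d)} = \tfrac{1}{2+d}$, which is what forces $r(B') \asymp r_{s_2-s_1}$ (not some other level) to appear in the statement.
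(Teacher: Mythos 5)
Your proposal is correct and follows essentially the same route as the paper's proof: two applications of \Cref{concentration:absolute} to pass between empirical bin sums and the bin-free population quantity, the strong density assumption to relate $\mu(B)$ to $\mu(B')$, \Cref{concentration:counts} to replace $n_{B'}$ by $n\,\mu(B')$, and the oracle-level identity $r(B')\asymp(K/(s_2-s_1))^{1/(2+d)}$ to make the exponents match. The only cosmetic difference is that you factor out the bin-free sum $S$ explicitly, whereas the paper keeps everything multiplied through by $\mu(B')$, but the algebra is the same.
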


\begin{proof}
	We have using \Cref{concentration:absolute} and the strong density assumption (\Cref{assumption:strong-density}):
	\begin{align*}
		\sum_{s=s_1}^{s_2} \delta_s(a)\cdot \pmb{1}\{X_s\in B\} &\leq %\sum_{s=s_1}^{s_2} \ol{\delta}_s^B(a)\cdot \pmb{1}\{X_s\in B\}\\
									\sum_{s=s_1}^{s_2} \delta_s(a)\cdot \mu(B) \nonumber\\
									&\qquad + c_2\left(\log(T) + \sqrt{\log(T) \mu(B) \cdot (s_2-s_1+1) }\right)\\
									&\leq \frac{C_d \cdot r(B)^d}{ c_d \cdot r(B')^d} \sum_{s=s_1}^{s_2} \delta_s(a)\cdot \mu(B') \nonumber\\
									&\qquad + c_2\left(\log(T) + \sqrt{\log(T) \mu(B) \cdot (s_2-s_1+1) }\right) \numberthis\label{eq:half-bound}
										    %&\leq c_2\left( \sqrt{K\cdot n_{B'}([\tau_i,\tau_i^a])} + r(B')\cdot n_{B'}([\tau_i,\tau_i^a]) + \log(T) +\sqrt{\mu(B')\cdot (\tau_i^a-\tau_i)\cdot \log(T)} \right)\\
	\end{align*}
	Again using \Cref{concentration:absolute} %and smoothness (\Cref{assumption:lipschitz}) via $\ol{\delta}_s^{B'}(a)\leq \delta_s(a) + r(B')$, we have
	\begin{align*}
		\sum_{s=s_1}^{s_2} \delta_s(a) \cdot \mu_s(B') &\leq \sum_{s=s_1}^{s_2} \delta_s(a)\cdot \pmb{1}\{X_s\in B'\} + c_2\left( \log(T) + \sqrt{\log(T) \mu(B') \cdot (s_2-s_1+1) }\right)\\
									  &\leq c_5\left( \sqrt{K\log(T)\cdot (n_{B'}([s_1,s_2]) \vee K\log(T)) } + r(B')\cdot n_{B'}([s_1,s_2]) \right.\\
									  &\qquad \left.+ \log(T) + \sqrt{\log(T) \mu(B')\cdot (s_2-s_1+1) }\right).
	\end{align*}
	Next, applying \Cref{concentration:counts} to $n_{B'}([s_1,s_2])$ and using the strong density assumption (\cref{assumption:strong-density}) to bound the mass $\mu(B')$ above by $C_d\cdot r(B')^d$, the above R.H.S. is further upper bounded by
	\begin{equation}\label{eq:intermediate}
		c_6\left( \log^{1/2}(T) K^{\frac{1+d}{2+d}} \cdot (s_2-s_1)^{\frac{1}{2+d}} + K\log(T) \right).
	\end{equation}
	%Then, we have $r(B')\cdot (s_2-s_1)\leq 2(s_2-s_1)^{\frac{1+d}{2+d}} K^{\frac{1}{2+d}}$ by virtue of $r(B')=r_{s_2-s_1}$ and by the definition of $r_{s_2-s_1}$ (\Cref{note:level}).
	Finally, plugging \Cref{eq:intermediate} into \Cref{eq:half-bound} and using the fact that $(r(B')/2)^d \geq (K/(s_2-s_1))^{\frac{d}{2+d}}$, we have that \Cref{eq:half-bound} is of the desired order. The proof of the same inequalities with $\delta_s(a',a)$ is analogous.
	%using the fact that $B\subseteq B' \implies \mu(B)\leq \mu(B') \implies \frac{\mu(B)}{\mu(B')}\sqrt{\mu(B')} \leq \sqrt{\mu(B)}$ with the above, \Cref{eq:half-bound} is of the desired order. The proof of the same inequalities with $\delta_s(a',a)$ is analogous.
	%\begin{align*} c_3\left( \sqrt{K\log(T)\cdot \mu(B)\cdot (s_2-s_1)} + \mu(B)\cdot (s_2-s_1)^{\frac{1+d}{2+d}} + K\log(T)\right).
	%\end{align*}
	%Applying \Cref{concentration:counts} again, the above becomes
	%\[
		%c_3\left( \sqrt{K\log(T)\cdot n_B([s_1',s_2'])} + r(B)\cdot n_B([s_1',s_2']) + \log(T)\right).
	%\]
\end{proof}

The following lemma relating the bias and variance terms in the notion of significant regret \Cref{eq:bad-arm} will serve useful many places in the analysis. They all follow from concentration and similar calculations via the strong density assumption (\Cref{assumption:strong-density}) as done previously.

\begin{lemma}[Relating Bias and Variance Error Terms via Strong Density Assumption]\label{lem:bias-variance}
	Let $r \doteq r_{s_2-s_1}$ for some $s_2>s_1$. Then, on event $\mc{E}_2$, for any bin $B\in T_r$:
	\begin{align*}
		%c_7 (s_2-s_1)^{\frac{1}{2+d}}\cdot K^{\frac{d/2}{2+d}} \leq
		\sqrt{(s_2-s_1+1)\cdot \mu(B)} &\leq c_7 (s_2-s_1)^{\frac{1}{2+d}}\cdot K^{\frac{d/2}{2+d}}\\
		\sqrt{n_B([s_1,s_2])} &\leq c_8 \left( (s_2-s_1)^{\frac{1}{2+d}}\cdot K^{\frac{d/2}{2+d}} + \log^{1/2}(T) \right. \\
				      &\qquad \left. + \log^{1/4}(T) \cdot K^{\frac{d/4}{2+d}} \cdot (s_2-s_1)^{\frac{1/2}{2+d}} \right)
		 %c_{10} (s_2-s_1)^{\frac{1}{2+d}} \cdot K^{\frac{1+d}{2+d}} \leq n_B([s_1,s_2])\cdot r \leq c_{11} (s_2-s_1)^{\frac{1}{2+d}} \cdot K^{\frac{1+d}{2+d}}
	\end{align*}
\end{lemma}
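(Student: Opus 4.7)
The plan is to derive both inequalities directly from the strong density assumption (\Cref{assumption:strong-density}) applied at the critical level $r = r_{s_2-s_1}$, combined with the concentration bound \Cref{concentration:counts} for the second inequality. Recall from \Cref{note:level} that $r_{s_2-s_1}$ is the largest dyadic scale for which $r \leq (K/(s_2-s_1))^{1/(2+d)}$, so $r \asymp (K/(s_2-s_1))^{1/(2+d)}$ up to a factor of $2$.

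For the first inequality, I would apply the upper bound $\mu(B) \leq C_d \cdot r^d$ from \Cref{eq:strong-density}, giving
\[
\sqrt{(s_2-s_1+1)\cdot \mu(B)} \leq \sqrt{2 C_d}\cdot\sqrt{s_2-s_1}\cdot r^{d/2},
\]
and then substitute $r \asymp (K/(s_2-s_1))^{1/(2+d)}$. A short exponent calculation, using $\tfrac{1}{2} - \tfrac{d}{2(2+d)} = \tfrac{1}{2+d}$, reduces the right-hand side to the claimed form $c_7 \cdot (s_2-s_1)^{1/(2+d)} K^{d/(2(2+d))}$.

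For the second inequality, I would start from the concentration bound \Cref{concentration:counts} which, on $\mc{E}_2$, yields
\[
n_B([s_1,s_2]) \leq (s_2-s_1+1)\mu(B) + c_2\log(T) + c_2\sqrt{\log(T)\cdot \mu(B)\cdot(s_2-s_1+1)}.
\]
Taking square roots and using $\sqrt{a+b+c} \leq \sqrt{a}+\sqrt{b}+\sqrt{c}$ decomposes the bound into three pieces. The first piece is exactly $\sqrt{(s_2-s_1+1)\mu(B)}$, which the already proven first inequality controls by $(s_2-s_1)^{1/(2+d)} K^{d/(2(2+d))}$. The second piece is $\sqrt{c_2 \log(T)}$, giving the $\log^{1/2}(T)$ term. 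The third piece factors as $\log^{1/4}(T)\cdot \bigl((s_2-s_1+1)\mu(B)\bigr)^{1/4}$; bounding the second factor by the square root of the first inequality gives exactly $\log^{1/4}(T)\cdot K^{d/(4(2+d))}\cdot (s_2-s_1)^{1/(2(2+d))}$, the remaining term. Collecting constants yields $c_8$.

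I don't anticipate any real obstacle here: both inequalities are essentially bookkeeping consequences of (i) the polynomial-in-$r$ mass bound from strong density, (ii) the fact that the critical level $r_{s_2-s_1}$ has been engineered so that the bias $r^d$ and the variance scale $1/(s_2-s_1)$ balance at the rate $(K/(s_2-s_1))^{d/(2+d)}$, and (iii) Bernstein-type concentration of the bin counts from \Cref{lem:concentration-covariate}. The only mild care needed is to absorb the $+1$ in $s_2-s_1+1$ and the factor $2$ in the definition of $r_n$ into the universal constants $c_7, c_8$.
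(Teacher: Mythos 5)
Your proof is correct and is precisely the "concentration plus strong density" argument that the paper invokes but does not write out (the paper only states that the lemma "follows from concentration and similar calculations via the strong density assumption"). Your exponent arithmetic $\tfrac{1}{2}-\tfrac{d}{2(2+d)}=\tfrac{1}{2+d}$, the bound $r_{s_2-s_1}\le (K/(s_2-s_1))^{1/(2+d)}$ from \Cref{note:level}, and the decomposition of \Cref{concentration:counts} via $\sqrt{a+b+c}\le\sqrt{a}+\sqrt{b}+\sqrt{c}$ all check out, and the three resulting terms match the three terms in the stated bound exactly.
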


\subsection{Useful Facts about Levels $r \in \mc{R}$ and their Durations in Play}

The following basic facts about the level selection procedure on \Cref{line:choose-level} of \Cref{base-alg} will be useful as we will decompose the analysis into the blocks, or different periods of rounds, where different levels are used.

\begin{defn}\label{defn:block}
	Namely, for $r\in \mc{R}$, let $s_{\ell}(r)$ and $e_{\ell}(r)$ denote the first and last rounds when level $r$ is used by the master $\base$ in episode $[t_{\ell},t_{\ell+1})$, i.e. rounds $t\in [t_{\ell},t_{\ell+1})$ such that $r_{t-t_{\ell}}=r$. Then, we call $[s_{\ell}(r),e_{\ell}(r)]$ a {\bf block}.
\end{defn}

The proofs of the following facts all follow from the definition of the level $r_n$ (see \Cref{note:level}) and basic calculations.

\begin{fact}[Relating Level to Interval Length]\label{fact:level-bound}
	The level $r_{s_2-s_1} = 2^{-m}$ satisfies for $s_2-s_1 \geq K$:
	\[
		2^{-(m-1)} > \left(\frac{K}{s_2-s_1}\right)^{\frac{1}{2+d}} \geq 2^{-m},
	\]
	and hence
	\[
		K\cdot 2^{(m-1)(2+d)} < s_2-s_1 \leq K\cdot 2^{m(2+d)}.
	\]
\end{fact}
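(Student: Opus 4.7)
The plan is to read off both inequalities directly from the definition of $r_n$ in \Cref{note:level}, then raise both inequalities to the power $2+d$ and rearrange.

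Recall from \Cref{note:level} that $r_n$ is defined as the largest $2^{-m} \in \mathcal{R}$ such that $(K/n)^{1/(2+d)} \geq 2^{-m}$. Setting $n = s_2 - s_1$, this definition gives directly the right-hand inequality $(K/(s_2-s_1))^{1/(2+d)} \geq 2^{-m}$. For the left-hand inequality, I would argue that $2^{-(m-1)}$ cannot satisfy the defining condition, since $2^{-(m-1)} > 2^{-m}$ and $2^{-m}$ is by definition the largest element of $\mathcal{R}$ meeting the constraint; hence $2^{-(m-1)} > (K/(s_2-s_1))^{1/(2+d)}$, as claimed. The hypothesis $s_2 - s_1 \geq K$ ensures $(K/(s_2-s_1))^{1/(2+d)} \leq 1 = 2^0$, so $m \geq 0$ and the expression $2^{-(m-1)}$ makes sense (even though $2^{-(m-1)}$ need not itself lie in $\mathcal{R}$, the strict inequality is all we use).

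For the second display, I would raise both sides of each inequality in the first display to the $(2+d)$-th power, yielding $2^{-(m-1)(2+d)} > K/(s_2-s_1) \geq 2^{-m(2+d)}$, and then invert (multiplying through by $(s_2-s_1) \cdot 2^{m(2+d)}$ and by $(s_2-s_1) \cdot 2^{(m-1)(2+d)}$ respectively) to obtain
\[
K \cdot 2^{(m-1)(2+d)} < s_2 - s_1 \leq K \cdot 2^{m(2+d)},
\]
which is the second display.

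There is no real obstacle here; the entire content of the fact is an unpacking of the definition of $r_n$ together with monotonicity of $x \mapsto x^{2+d}$ on $(0, \infty)$. The only mild care is in the boundary case $s_2 - s_1 = K$ (where $m = 0$ and $2^{-(m-1)} = 2$ is not itself in $\mathcal{R}$); this is handled by noting that the left-hand inequality is a statement about real numbers and does not require $2^{-(m-1)} \in \mathcal{R}$.
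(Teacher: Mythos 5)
Your proof is correct and is exactly the unpacking that the paper alludes to when it says these facts "follow from the definition of the level $r_n$ and basic calculations." The handling of the boundary case $m=0$ (where $2^{-(m-1)}=2\notin\mathcal{R}$) is the one place where your maximality-in-$\mathcal{R}$ argument needs the small patch you note at the end — when $m=0$, the left inequality $2 > (K/(s_2-s_1))^{1/(2+d)}$ follows directly from the hypothesis $s_2-s_1\geq K$ rather than from maximality — and with that observation the argument is complete.
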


\begin{fact}[First Block]\label{fact:first-block}
	The first block $[s_{\ell}(1),e_{\ell}(1)]$ consists of rounds $[t_{\ell},t_{\ell}+K]$.
\end{fact}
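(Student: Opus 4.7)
The plan is to unwind the definition of $r_n$ from \Cref{note:level} and directly compute the set of offsets $n = t - t_{\ell}$ for which $r_n = 1$. Recall that $r_n$ is defined as the largest element $2^{-m} \in \mc{R}$ satisfying $(K/n)^{1/(2+d)} \geq 2^{-m}$. Since $1 \in \mc{R}$, the value $r_n$ equals $1$ precisely when the inequality $(K/n)^{1/(2+d)} \geq 1$ holds, i.e., when $n \leq K$ (with the convention $r_0 = 1$, corresponding to the first round $t=t_{\ell}$ of the episode).

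The second step is to translate this back to the block definition (\Cref{defn:block}): the first block $[s_{\ell}(1), e_{\ell}(1)]$ consists of all rounds $t \in [t_{\ell}, t_{\ell+1})$ for which the level chosen on \Cref{line:choose-level} of \Cref{base-alg} is $r_{t - t_{\ell}} = 1$. By the computation above this is exactly the set of rounds $t$ with $0 \leq t - t_{\ell} \leq K$, namely $[t_{\ell}, t_{\ell} + K]$. One should also verify that $r_{K+1} < 1$, which follows from $(K/(K+1))^{1/(2+d)} < 1$ so the largest eligible dyadic level drops to $1/2$, confirming that $e_{\ell}(1) = t_{\ell} + K$ and not larger.

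There is no real obstacle here: the claim is essentially a restatement of the piecewise-constant level schedule evaluated at small offsets. The only minor subtlety is handling the boundary $n = 0$ (so that the block starts at $t_{\ell}$ itself) and confirming that $n = K$ still yields $r_n = 1$ while $n = K+1$ does not. Both follow immediately from the monotonicity of $n \mapsto (K/n)^{1/(2+d)}$ and the discrete structure of $\mc{R}$.
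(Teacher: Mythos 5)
Your proof is correct and matches the approach the paper alludes to (the paper states only that these facts "follow from the definition of the level $r_n$ ... and basic calculations" without spelling them out). You correctly identify that $r_n = 1$ iff $(K/n)^{1/(2+d)} \geq 1$, i.e., $n \in \{0,1,\ldots,K\}$, and that $r_{K+1} < 1$, which pins down $[s_\ell(1), e_\ell(1)] = [t_\ell, t_\ell + K]$.
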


\begin{fact}[Start and End Times of a Block]\label{fact:start-end-formula}
	For $r<1$, the {\bf start time} or first round $s_{\ell}(r)$ of the block corresponding to level $r$ in episode $[t_{\ell},t_{\ell+1})$ is $s_{\ell}(r) = t_{\ell} + \ceil{K\cdot (2r)^{-(2+d)}}$ and the {\bf anticipated end time}, or last round of the block if no new episode is triggered in said block, is $e_{\ell}(r) = t_{\ell} + \ceil{K\cdot r^{-(2+d)}} - 1$.
\end{fact}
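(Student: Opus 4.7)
The plan is to read off both formulas directly from the level-selection rule of \Cref{base-alg} together with \Cref{fact:level-bound}. In episode $\ell$ the master $\base$ is invoked on \Cref{line:ongoing-base} of \Cref{meta-alg} with $\tstart = t_\ell$, and at any round $t$ it uses level $r_{t - t_\ell}$ (\Cref{line:choose-level} of \Cref{base-alg}). Hence by \Cref{defn:block}, the block $[s_\ell(r), e_\ell(r)]$ is precisely the set of rounds $t \in [t_\ell, t_{\ell+1})$ satisfying $r_{t - t_\ell} = r$.

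Writing $r = 2^{-m}$ with $m \geq 1$ (the case $r < 1$), \Cref{fact:level-bound} applied to $n = t - t_\ell$ yields the equivalence
\[
r_{t - t_\ell} = 2^{-m} \iff K \cdot 2^{(m-1)(2+d)} \;<\; t - t_\ell \;\leq\; K \cdot 2^{m(2+d)}.
\]
Since $(2r)^{-(2+d)} = 2^{(m-1)(2+d)}$ and $r^{-(2+d)} = 2^{m(2+d)}$, this rewrites as $K \cdot (2r)^{-(2+d)} < t - t_\ell \leq K \cdot r^{-(2+d)}$. Reading off the smallest and largest integer values of $t - t_\ell$ in this range yields the claimed $s_\ell(r) = t_\ell + \lceil K \cdot (2r)^{-(2+d)} \rceil$ and $e_\ell(r) = t_\ell + \lceil K \cdot r^{-(2+d)} \rceil - 1$.

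The qualifier ``anticipated'' on $e_\ell(r)$ reflects the possibility that the block terminates early: a scheduled replay may interrupt via \Cref{line:replay} of \Cref{base-alg}, or the master restart criterion on \Cref{line:restart} of \Cref{meta-alg} may trigger a new episode before $t_\ell + \lceil K \cdot r^{-(2+d)} \rceil - 1$ is reached. No real obstacle is expected; this is a mechanical consequence of \Cref{fact:level-bound} and the definition of $r_n$, and the only mild care required is tracking the strict-vs.-non-strict boundary in \Cref{fact:level-bound} when translating the real range for $t - t_\ell$ into integer endpoints.
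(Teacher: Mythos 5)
Your approach is the right one — identify the block as the set of rounds $t\in[t_\ell,t_{\ell+1})$ with $r_{t-t_\ell}=r$, then invert the definition of $r_n$ (equivalently, \Cref{fact:level-bound}) to get the range $K(2r)^{-(2+d)}<t-t_\ell\le K r^{-(2+d)}$. The paper itself gives no proof beyond saying these facts ``follow from the definition of the level $r_n$ and basic calculations,'' so you are on the intended track.

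The slip is in the last step, which you flag as ``mild care'' about boundaries but then do not actually carry out. Since $K$ and $d$ are integers and $r=2^{-m}$, the bounds $K(2r)^{-(2+d)}=K\cdot 2^{(m-1)(2+d)}$ and $Kr^{-(2+d)}=K\cdot 2^{m(2+d)}$ are themselves integers, so the ceilings are no-ops; the smallest integer in the half-open range $(A,B]$ with $A$ an integer is $A+1$, not $\lceil A\rceil=A$, and the largest is $B=\lceil B\rceil$, not $\lceil B\rceil-1$. A careful reading of the derivation therefore gives
\[
s_\ell(r)=t_\ell+K(2r)^{-(2+d)}+1,\qquad e_\ell(r)=t_\ell+Kr^{-(2+d)},
\]
both one larger than the formulas in the statement. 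You can sanity-check this against \Cref{fact:first-block}: the first block is $[t_\ell,t_\ell+K]$, so the block at level $r=1/2$ must start at $t_\ell+K+1$, whereas the stated formula would give $s_\ell(1/2)=t_\ell+\lceil K\rceil=t_\ell+K$, which lies inside the first block. So the formulas as printed in the paper carry an off-by-one and conflict with \Cref{fact:first-block}. This is immaterial downstream (\Cref{fact:k-long} only uses the difference $e_\ell(r)-s_\ell(r)$, which is identical under either convention), but your ``reading off the smallest and largest integer values'' does not actually produce the stated formulas; it produces the corrected ones. You should have flagged the discrepancy rather than silently writing down expressions that match the claim but not your own derivation.

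One smaller point: \Cref{fact:level-bound} as stated is only the forward implication ($r_n=2^{-m}\Rightarrow$ range), whereas you invoke it as an equivalence. The converse does follow — either directly from \Cref{note:level}, or by observing that the ranges for distinct $m\ge 1$ partition the integers $>K$ — but the one-line appeal to \Cref{fact:level-bound} glosses over this.
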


\begin{fact}[Length of a Block]\label{fact:k-long}
	Each block $[s_{\ell}(r),e_{\ell}(r)]$ is at least $K$ rounds long. For the first block $[s_{\ell}(1),e_{\ell}(1)]$, this is already clear. Otherwise, suppose $r<1$ in which case:
	\begin{align*}
		e_{\ell}(r) - s_{\ell}(r) + 1 &= \ceil{K\cdot r^{-(2+d)}} - \ceil{K\cdot (2r)^{-(2+d)}} \\
		&\geq K\cdot r^{-(2+d)}(1 - 2^{-(2+d)}) - 1 \\
												       &\geq K\cdot 2^{2+d}(1-2^{-(2+d)})-1.
	\end{align*}
	In particular, since $2^{2+d}(1-2^{-(2+d)}) \geq 2$ for all $d\geq 0$, we have the above is at least $K$.

	We also have the above implies
	\[
		2\cdot (e_{\ell}(r) - s_{\ell}(r)) \geq \frac{K\cdot r^{-(2+d)}\cdot (1-2^{-(2+d)})}{2}.
	\]
	Rearranging, this becomes for some constant $c_{9} > 0$ depending only on $d$:
	\[
		c_{9}^{-1}\cdot r \leq \left(\frac{K}{e_{\ell}(r) - s_{\ell}(r)}\right)^{\frac{1}{2+d}} < c_{9}\cdot r.
	\]
	Note we can make $c_{9}$ large enough so that the above also holds for level $r=1$.

	The above along with the definition of $r_{t-t_{\ell}}$ (see \Cref{note:level}) implies that the block length $e_{\ell}(r) - s_{\ell}(r)$ and the episode length $e_{\ell}(r) - t_{\ell}(r)$ up to the end of block $[s_{\ell}(r),e_{\ell}(r)]$ can be conflated up to constants
	\[
		c_{10}^{-1} \cdot \left( e_{\ell}(r) - s_{\ell}(r) \right) \leq e_{\ell}(r) - t_{\ell} \leq c_{10}\cdot \left( e_{\ell}(r) - s_{\ell}(r) \right).
	\]
\end{fact}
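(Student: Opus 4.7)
The plan is to read everything off the closed-form expressions for $s_\ell(r)$ and $e_\ell(r)$ supplied by \Cref{fact:start-end-formula}, handling the boundary level $r=1$ separately via \Cref{fact:first-block} and then absorbing that case into the constants at the end. So I reduce to $r \leq 1/2$.

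For the lower bound on the block length, the first step is to compute
\[
e_\ell(r) - s_\ell(r) + 1 = \ceil{K r^{-(2+d)}} - \ceil{K (2r)^{-(2+d)}} \geq K r^{-(2+d)}(1 - 2^{-(2+d)}) - 1,
\]
with the $-1$ absorbing both ceiling corrections. Since $r \leq 1/2$ forces $r^{-(2+d)} \geq 2^{2+d}$, the right-hand side is bounded below by $K(2^{2+d}-1) - 1$, which exceeds $K$ for every $d \geq 0$ and $K \geq 1$. This is the tightest place where the ceilings bite, and once it is past, the rest is easy.

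For the two-sided bound on $(K/(e_\ell(r)-s_\ell(r)))^{1/(2+d)}$, I would combine the lower bound just derived with the matching upper bound $e_\ell(r) - s_\ell(r) \leq K r^{-(2+d)}$, which falls out of the same identity (the outer ceiling contributes $+1$ and the $-1$ in the definition of $e_\ell(r)$ cancels it). Because the previous step already showed the leading $K r^{-(2+d)}$ term dominates the $-1$ correction uniformly in $r \leq 1/2$, we obtain constants $c, C$ depending only on $d$ with $c K r^{-(2+d)} \leq e_\ell(r) - s_\ell(r) \leq C K r^{-(2+d)}$; inverting and taking $(2+d)$-th roots produces $c_9$, with $c_9$ enlarged to also absorb $r=1$ using \Cref{fact:first-block}.

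For the comparison between block length and elapsed episode length, I would decompose $e_\ell(r) - t_\ell = (e_\ell(r) - s_\ell(r)) + (s_\ell(r) - t_\ell)$ and read off from \Cref{fact:start-end-formula} that $s_\ell(r) - t_\ell = \ceil{K (2r)^{-(2+d)}}$ is itself of order $K r^{-(2+d)}$ with constants depending only on $d$. Both summands therefore scale like $K r^{-(2+d)}$, so their ratio is bounded above and below, which yields the desired $c_{10}$-equivalence. The only real obstacle throughout is the arithmetic hygiene with the ceilings — ensuring the $-1$ corrections never swallow the leading term — but the first step is precisely the uniform estimate needed to carry that through.
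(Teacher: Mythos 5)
Your proof is correct and follows essentially the paper's own argument: the same ceiling estimate $\ceil{Kr^{-(2+d)}}-\ceil{K(2r)^{-(2+d)}}\ge Kr^{-(2+d)}(1-2^{-(2+d)})-1$, the same observation that $r\le 1/2$ forces $r^{-(2+d)}\ge 2^{2+d}$ to deliver the lower bound of $K$, the same two-sided comparison of $\left(K/(e_\ell(r)-s_\ell(r))\right)^{1/(2+d)}$ to $r$, and the same enlargement of $c_9$ to cover $r=1$. The one small divergence is in the final claim: you obtain the $c_{10}$-equivalence by decomposing $e_\ell(r)-t_\ell=(e_\ell(r)-s_\ell(r))+(s_\ell(r)-t_\ell)$ and reading off from \Cref{fact:start-end-formula} that both summands are of order $Kr^{-(2+d)}$, whereas the paper instead pairs the $c_9$ bound with the defining property of $r_n$ from \Cref{note:level}, which gives $r\le\left(K/(e_\ell(r)-t_\ell)\right)^{1/(2+d)}<2r$ and hence the conflation. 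Both routes are sound and yield constants of the same order; yours is marginally more self-contained since it never re-invokes the level notation, while the paper's phrasing makes explicit the link between the block and the level actually in force at round $e_\ell(r)$.
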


\section{Proof of Oracle Regret Bound (\Cref{prop:sanity})}\label{app:oracle}

Recall that $\mc{E}_2$ is the good event on which our covariate counts concentrate by \Cref{lem:concentration-covariate}. It suffices to show our desired regret bound for any fixed context sequence $\bld{X}_T$ on this event.

Fix a phase $[\tau_i,\tau_{i+1})$ and let $r \doteq r_{\tau_{i+1}-\tau_i}$. Fix a bin $B\in \mc{T}_r$ and let $\tau_i^a$ be the last round $t \in [\tau_i,\tau_{i+1})$ such that $X_t\in B$ and arm $a$ is included in $\mc{G}_t$. If $a$ is never excluded from $\mc{G}_t$ for all such $t$, let $\tau_i^a \doteq \tau_{i+1}-1$. WLOG suppose $\tau_i^1 \leq \tau_i^2 \leq \cdots \leq \tau_i^K$. Then, letting $B'$ be the bin at level $r_{\tau_i^a - \tau_i}$ containing covariate $X_{\tau_i^a}$, we have by \Cref{eq:bad-arm} that:
	\begin{align*}
		%\sum_{t=s(r)}^{e(r)} \sum_{a\in \mc{G}_{t}} \frac{\delta_{t}(a)}{|\mc{G}_{t}|} \leq
		\sum_{t=\tau_i}^{\tau_i^a} \delta_{t}(a)\cdot \pmb{1}\{X_t\in B'\} &\leq  \sqrt{K\cdot n_{B'}([\tau_i,\tau_i^a])} + r(B')\cdot n_{B'}([\tau_i,\tau_i^a]).
	\end{align*}
	From \Cref{lem:relate-levels}, we conclude $\sum_{t=\tau_i}^{\tau_i^a} \frac{\delta_t(a) \pmb{1}\{X_t\in B\}}{|\mc{G}_t|} $ is at most
	\begin{align*}%\label{eq:single-arm-bin}
		\frac{c_4\left( \log^{1/2}(T) r^d  K^{\frac{1}{2+d}} (\tau_{i+1}^a - \tau_i)^{\frac{1+d}{2+d}} + K\log(T) + \sqrt{\log(T) \mu(B) (\tau_i^a - \tau_i + 1)}\right)}{K+1-a},
	\end{align*}
	where we use the fact that $|\mc{G}_{t}| \geq K+1-a$ for $t \leq \tau_i^a$ such that $X_t\in B$. %and since $B'\supseteq B$.
	%By the strong density assumption (\Cref{assumption:strong-density}), we have:
	%\[
		%\sqrt{\mu(B)\cdot (\tau_{i+1}-\tau_i)} \leq C_d^{-1}\cdot \mu(B)\cdot (\tau_{i+1}-\tau_i)^{\frac{1+d}{2+d}}\cdot K^{-\frac{d/2}{2+d}}.
	%\]
	%Plugging the above into \Cref{eq:single-arm-bin} and
	Summing over arms $a\in[K]$ with $\sum_{a\in[K]} \frac{1}{K+1-a} \leq \log(K)$, we obtain from summing the above over $K$:
	\begin{equation}\label{eq:multi-arm-bin}
		%\sum_{a\in[K]} \sum_{t=\tau_i}^{\tau_i^a} \frac{\delta_t(a)\cdot \pmb{1}\{X_t\in B\}}{|\mc{G}_t|} \leq
		c_4\log(K)\left( \log^{1/2}(T) r^d K^{\frac{1}{2+d}} (\tau_{i+1}-\tau_i)^{\frac{1+d}{2+d}} + K\log(T) + \sqrt{\log(T) \mu(B)\cdot  (\tau_i^a - \tau_i+1) } \right).
	\end{equation}
	Next, we claim that each significant phase $[\tau_i,\tau_{i+1})$ is at least $K$ rounds long or $K \leq \tau_{i+1}-\tau_i$. This follows from the definition of significant regret \Cref{eq:bad-arm} since for $[s_1,s_2]\subseteq [\tau_i,\tau_{i+1})$:
	\[
		n_B([s_2,s_2]) \geq \sum_{s=s_1}^{s_2} \delta_s(a) \cdot \pmb{1}\{X_s \in B\} \geq \sqrt{K\cdot n_B([s_1,s_2])} \implies \tau_{i+1}-\tau_{i} \geq n_B([s_1,s_2]) \geq K.
	\]
	Then $K \leq \tau_{i+1}-\tau_i$ implies (via \Cref{fact:level-bound} about the level $r_{\tau_{i+1}-\tau_i}$)
	\[
		\sum_{B\in\mc{T}_r} K\log(T) \leq K\log(T)\cdot r^{-d} \leq c_{11} \log(T) K^{\frac{2}{2+d}}(\tau_{i+1}-\tau_i)^{\frac{d}{2+d}} \leq c_{11} \log(T) K^{\frac{1}{2+d}}\cdot (\tau_{i+1}-\tau_i)^{\frac{1+d}{2+d}}.
	\]
	Additionally, we have by \Cref{lem:bias-variance}:
	\[
		\sqrt{(\tau_i^a-\tau_i)\cdot \mu(B)} \leq \sqrt{(\tau_{i+1}-\tau_i)\cdot \mu(B)} \leq c_7 (\tau_{i+1}-\tau_i)^{\frac{1}{2+d}} K^{\frac{d/2}{2+d}} \leq c_8 K^{\frac{1}{2+d}} (\tau_{i+1}-\tau_i)^{\frac{1+d}{2+d}}.
	\]
	Then, plugging the above into \Cref{eq:multi-arm-bin} and summing over bins $B$ at level $r$, we have the regret in episode $[\tau_i,\tau_{i+1})$ is with probability at least $1-1/T^2$ w.r.t. the distribution of $\bld{X}_T$:
	\begin{align*}
		\mb{E}\left[\sum_{t=\tau_i}^{\tau_{i+1}-1} \delta_t(\pi_t) \Bigg\vert \bld{X}_T\right] &= \mb{E}\left[ \sum_{B\in \mc{T}_r} \sum_{t=\tau_i}^{\tau_{i+1}-1} \sum_{a\in \mc{G}_t} \frac{\delta_t(a)\cdot \pmb{1}\{X_t\in B\}}{|\mc{G}_t|} \Bigg\vert \bld{X}_T \right]\\
												       &= \mb{E}\left[ \sum_{B\in \mc{T}_r} \sum_{a\in[K]} \sum_{t=\tau_i}^{\tau_i^a} \frac{\delta_t(a)\cdot \pmb{1}\{X_t\in B\}}{|\mc{G}_t|} \Bigg\vert \bld{X}_T \right]\\
												       &\leq c_{12}\log(K)\sum_{B\in \mc{T}_r} \log^{1/2}(T) r^d (\tau_{i+1}-\tau_i)^{\frac{1+d}{2+d}} K^{\frac{1}{2+d}} + K\log(T)\\
													     %&\leq c_3\log(K)\sqrt{ r^{-d}\cdot K\cdot (\tau_{i+1}-\tau_i)} + r\cdot (\tau_{i+1}-\tau_i) + \log(T)\\
													      &\leq c_{13}\log(K)\log(T)\cdot (\tau_{i+1}-\tau_i)^{\frac{1+d}{2+d}}\cdot K^{\frac{1}{2+d}},
	\end{align*}
	where we use the strong density assumption to bound $\sum_{B\in \mc{T}_r} r^d \leq \sum_{B\in \mc{T}_r} c_d^{-1}\cdot \mu(B) \leq c_d^{-1}$ in the last inequality.
	%where we use Jensen in the second inequality.
	Summing the regret over all experienced significant phases $[\tau_i,\tau_{i+1})$ gives the desired result. $\hfill\qed$

\section{Proof of \cmeta Regret Upper Bound (\Cref{thm})}\label{app:full-proof}

%Continuing from the discussion at the beginning of \Cref{subsec:result}, for any arm $a$ being played by \cmeta, the regret to the last arm $a'$ to incur significant regret in any given phase remains small, lest $a$ would be evicted before such regret $\sum_{t=s_1}^{s_2} \delta_t^{a',a}(B)$ is too large over a given interval $[s_1,s_2]$. However, we can only reliably estimate such regret via $\sum_{t=s_1}^{s_2} \hat{\delta}_t(a',a) $ when both $a, a'$ are being played over $[s_1,s_2]$ (in fact, this estimator is only unbiased when this is the case).

%To this end, we will decompose each phase into \emph{bad segments} where $\sum_{t=s_1}^{s_2} \delta_t^{a',a}(B)$ is large (see \Cref{defn:bad-segment}); we then argue that, while it is safe to miss a few such bad segments, a timely replay of $\base$ is likely to occur over some such segment, ensuring both $a, a'$ are being played, and thus leading to the detection that $a$ is no longer safe.
%Note that it is easy to argue that such a perfect replay will not evict arm $a'$ since otherwise $\max_{a''} \sum_{t=s_1}^{s_2} \delta_t(a'',a') \leq \sum_{t=s_1}^{s_2} \delta_t(a')$ must be large, i.e., $a'$ must have significant regret on some interval before the end of the phase. For simplicity, we'll refer to such well-timed replays as {\em perfect replays} (see \Cref{prop:perfect}).

Recall from \Cref{line:ep-start} of \Cref{meta-alg} that $t_{\ell}$ is the first round of the $\ell$-th episode. WLOG, there are $T$ total episodes and, by convention, we let $t_{\ell} \doteq T+1$ if only $\ell-1$ episodes occurred by round $T$. %Let $\hat{L}$ be the total number of episodes, and let $t_{\hat{L}+1} \doteq T+1$.

We first quickly handle the simple case of $T<K$.
%By our notion of significant regret, an arm $a$ incurring significant regret on the interval $[s_1,s_2]$ means
%\[
	%\sum_{t=s_1}^{s_2} \delta_t(a) \geq \sqrt{K\cdot (s_2-s_1)} \implies 2\cdot (s_2-s_1) \geq \sqrt{K\cdot (s_2 - s_1)} \implies s_2 - s_1 \geq K/4.
%\]
%Thus, each significant phase (\Cref{defn:sig-shift}) must be at least $K/4$ rounds long and if $T<K$, there can be at most $4$ significant shifts.
In this case, the regret bound of \Cref{thm} is vacuous since by the sub-additivity of $x\mapsto x^{\frac{1+d}{2+d}}$:
\[
	\sum_{i=0}^{\tilde{L}} (\tau_{i+1} - \tau_i)^{\frac{1+d}{2+d}}\cdot K^{\frac{1}{2+d}} \geq (\tau_{\Lsig+1} - \tau_0)^{\frac{1+d}{2+d}}\cdot K^{\frac{1}{2+d}} \geq T^{\frac{1+d}{2+d}}\cdot T^{\frac{1}{2+d}} = T.
\]
Thus, it remains to show \Cref{thm} for $T \geq K$.

We first transform the expected regret into a more suitable form, which will allow us to analyze regret in a similar fashion to the proof of the oracle regret bound (\Cref{app:oracle}).
%We first show that the estimation error of the average reward over an interval is suitably small.

\subsection{Decomposing the Regret}\label{subsec:episode-regret}

%It suffices to bound $\mb{E}[R_T(\pi,\bld{X}_T)\mid \bld{X}_T]$ on the good event $\mc{E}_1\cap \mc{E}_2$ where the bounds of \Cref{lem:concentration-covariate,lem:relate-levels} hold. Going forward in the rest of the analysis, we will assume said bounds hold wherever convenient.

We first transform the regret into a more convenient form. Let $\mc{F} \doteq \{\mc{F}_t\}_{t=1}^T$ be the filtration with $\mc{F}_t$ generated by $\{\pi_s,Y_s^{\pi_s}\}_{s=1}^t$ conditional on a fixed $\bld{X}_T$. Then,
\begin{align*}
	\mb{E}[ R_T(\pi,\bld{X}_T) \mid \bld{X}_T] &= \sum_{t=1}^T \mb{E}[ \mb{E}[ \delta_t({\pi_t})  \mid \mc{F}_{t-1}] \mid \bld{X}_T]\\
											&= \sum_{t=1}^T \mb{E}\left[ \sum_{a\in\mc{A}_t} \frac{\delta_t(\pi_t)}{|\mc{A}_t|}\cdot \Bigg\vert \bld{X}_T \right]\\
											&= \mb{E}\left[  \sum_{t=1}^T \sum_{a\in\mc{A}_t} \frac{\delta_t({a})}{|\mc{A}_t|} \Bigg\vert \bld{X}_T \right].
\end{align*}

Now, it suffices to bound the above R.H.S. on the good event $\mc{E}_1\cap \mc{E}_2$ where the bounds of \Cref{lem:concentration-covariate,lem:relate-levels} hold. Going forward in the rest of the analysis, we will assume said bounds hold wherever convenient.

Next, as alluded to in defining the oracle procedure (\Cref{defn:oracle}), until the end of a significant phase $[\tau_i,\tau_{i+1})$, there is a safe arm in each bin $B$ at level $r_{\tau_{i+1}-\tau_i}$ which is experienced.

\begin{defn}[local last safe arm in each phase $a_t^\sharp$]\label{defn:last-safe-arm}
	For a round $t\in [\tau_i,\tau_{i+1})$, let $B$ be the bin at level $r_{\tau_{i+1}-\tau_i}$ which contains $X_t$ and let $t_i(B)$ be the last round in $[\tau_i,\tau_{i+1})$ such that $X_{t_i(B)}\in B$. Then, by \Cref{defn:sig-shift}, there is a {\bf (local) last safe arm} $a_t^\sharp$ which does not yet incur significant regret in bin $B$ in the following sense: for all $[s_1,s_2]\subseteq [\tau_i,t_i(B)]$ letting $r=r_{s_2-s_1}$ and $B'\in \mc{T}_r$ such that $B'\supseteq B$ we have:
	\[
		\sum_{s=s_1}^{s_2} \delta_s(a_t^{\sharp})\cdot \pmb{1}\{X_s \in B'\} < \sqrt{K\cdot n_{B'}([s_1,s_2])} + r\cdot n_{B'}([s_1,s_2]).
	\]
	%Let $a_i^{\sharp}(B)$ be the last safe arm of $B$ in phase $[\tau_i,\tau_{i+1})$, i.e. the arm $a_i^{\sharp}(B)$
\end{defn}

\begin{rmk}
	The local last safe arms $\{a_t^\sharp\}_t$ only depend on the distribution of $\bld{X}_T$ and {\bf not} on the realized rewards $\bld{Y}_T$. In particular, the sequence $\{\atsharp\}_t$ is fixed conditional on $\bld{X}_T$.
\end{rmk}

%Similar to the analysis of \citet{suk22},
We first decompose the regret at round $t$ as (a) the regret of the local last safe arm $a_t^\sharp$ and (b) the regret of arm $a$ to $\atsharp$. In other words, it suffices to bound:
\[
	\mb{E}\left[  \sum_{t=1}^T \sum_{a\in\mc{A}_t} \frac{\delta_t(a)}{|\mc{A}_t|} \pmb{1}\{\mc{E}_1\cap \mc{E}_2\} \Bigg\vert \bld{X}_T \right] = \sum_{t=1}^T \delta_t({a_t^\sharp}) \pmb{1}\{\mc{E}_1\cap \mc{E}_2\} + \mb{E}\left[ \sum_{t=1}^T \sum_{a\in\mc{A}_t} \frac{\delta_t({a_t^\sharp,a})}{|\mc{A}_t|} \pmb{1}\{\mc{E}_1\cap \mc{E}_2\} \Bigg\vert \bld{X}_T\right].
\]
Note that the expectation on the first sum disappears since $a_t^\sharp$ is only a function of $\bld{X}_T$ and the mean reward functions $\{f_t^a(\cdot)\}_{t,a}$. % and not on the randomness of $\pi$ or $\bld{Y}_T$.

\subsection{Bounding the Regret of the Local Last Safe Arm}\label{app:bounding-last-safe}

Bounding $\sum_{t=1}^T \delta_t(a_t^\sharp)\cdot \pmb{1}\{\mc{E}_1\cap \mc{E}_2\}$ will be similar to the proof of \Cref{prop:sanity}. We show that the oracle procedure could have essentially just played arm $\atsharp$ every round.

Fix a phase $[\tau_i,\tau_{i+1})$ and let $r=r_{\tau_{i+1}-\tau_i}$. Fix a bin $B\in \mc{T}_r$ and let $a_i(B)$ be the local last safe arm $a_t^\sharp$ of the last round $t\in[\tau_i,\tau_{i+1})$ such that $X_t\in B$.
Then, $\atsharp=a_i(B)$ for every round $t \in [\tau_i,\tau_{i+1})$ such that $X_t\in B$. Then, we have by \Cref{defn:sig-shift} that for bin $B'\supseteq B$ at level $r_{t-\tau_i}$:
\[
	\sum_{s=\tau_i}^{t} \delta_s(a_i(B))\cdot \pmb{1}\{X_s \in B'\} \leq \sqrt{K \cdot n_{B'}([\tau_i,t])} + r(B')\cdot n_{B'}([\tau_i,t]).
\]
Then, by \Cref{lem:relate-levels}, we have:
\begin{align}\label{eq:last-safe-small-bin}
	\sum_{s=\tau_i}^t \delta_s(a_i(B))\cdot \pmb{1}\{X_s\in B\} &\leq c_4\left( \log^{1/2}(T) \cdot r^d\cdot (\tau_{i+1}-\tau_i)^{\frac{1+d}{2+d}} \cdot K^{\frac{1}{2+d}} \right. \nonumber \\
								    &\qquad + \left. K\log(T) + \sqrt{\log(T) \mu(B)\cdot (t-\tau_i+1) } \right).
\end{align}
%Note that by the strong density assumption (\Cref{assumption:strong-density}) and since $r(B) \geq (\tau_{i+1}-\tau_i)$:
%\[
	%\sqrt{K\cdot \mu(B)(\tau_{i+1}-\tau_i)} \leq \mu(B)\cdot (\tau_{i+1}-\tau_i)^{\frac{1+d}{2+d}}\cdot K^{\frac{1}{2+d}}.
%\]
%Plugging the above into \Cref{eq:last-safe-small-bin} and
Then, summing the above over bins in the same fashion as the proof of \Cref{prop:sanity} gives:
\begin{align*}
	\sum_{t=\tau_i}^{\tau_{i+1}-1} \delta_t(a_t^\sharp) = \sum_{B\in \mc{T}_r} \sum_{s=\tau_i}^{\tau_{i+1}-1} \delta_s(a_i(B))\cdot \pmb{1}\{X_s\in B\}
							    %&\leq c_3\sum_{B\in \mc{T}_r} \sqrt{K\cdot n_B([\tau_i,\tau_{i+1}))} + r\cdot n_B([\tau_i,\tau_{i+1})) + \log(T)\\
							%&\leq c_3\sqrt{r^{-d}\cdot K\cdot (\tau_{i+1}-\tau_i)} + r\cdot (\tau_{i+1}-\tau_i) + \log(T)\\
	\leq c_{14} \log(T) \cdot (\tau_{i+1}-\tau_i)^{\frac{1+d}{2+d}}\cdot K^{\frac{1}{2+d}}.
\end{align*}
%Next, we observe that the $K\log(T)$ term above is bounded as
%\[
	%K\cdot r^{-d} \leq K^{\frac{2}{2+d}}\cdot (\tau_{i+1}-\tau_i)^{\frac{d}{2+d}} \lesssim (\tau_{i+1}-\tau_i)^{\frac{1+d}{2+d}} K^{\frac{1}{2+d}},
%\]
%by virtue of $\tau_{i+1}-\tau_i\gtrsim K$.
Finally, summing over phases $[\tau_i,\tau_{i+1})$ we have $\sum_{t=1}^T \delta_t(a_t^\sharp)$ is of the right order.
\subsection{Relating Episodes to Significant Phases}

We next show that w.h.p. a restart occurs (i.e., a new episode begins) only if a significant shift has occurred sometime within the episode. %This will imply that the total number of episodes, call it $\hat{L}$, is bounded above by $\Lsig$, the number of significant shifts.
%Recalling the notation of \Cref{sec:overview}, let $t_{\ell}$ be the initial round of the $\ell$-th episode of \cmeta.
Recall from \Cref{defn:sig-shift} that $\tau_1,\tau_2,\ldots,\tau_{\Lsig}$ are the times of the significant shifts and that $t_1,\ldots,t_{T}$ are the episode start times. %Suppose WLOG that there are $T$ total episodes and, by convention, let $t_{\ell} \doteq T+1$ for all $\ell>\hat{L}$.

\begin{lemma}[{\bf Restart Implies Significant Shift}]\label{lem:counting-eps}
	On event $\mc{E}_1$, for each episode $[t_{\ell},t_{\ell+1})$ with $t_{\ell+1}\leq T$ (i.e., an episode which concludes with a restart), there exists a significant shift $\tau_i\in [t_{\ell},t_{\ell+1}]$.
\end{lemma}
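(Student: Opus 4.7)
I would argue the contrapositive: assume no significant shift lies in $[t_\ell, t_{\ell+1}]$, so that $[t_\ell, t_{\ell+1}] \subseteq [\tau_i, \tau_{i+1})$ for some $i$ with $\tau_{i+1} > t_{\ell+1}$, and derive a contradiction with the fact that a restart occurred at time $t_{\ell+1}$. The restart fires only when $\Amaster(B^*) = \emptyset$ for some bin $B^*$ whose master-arm set was shrunk at the latest iteration, either by the eviction of \Cref{line:evict-Amaster} or the ancestor refinement of \Cref{line:ancestor=discard}. Both operations only touch $\Amaster$ at the currently active bin (or its descendants, which can only inherit further restrictions later), so $B^* \ni X_{t^*}$ for some $t^* \in [t_\ell, t_{\ell+1}]$.

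\emph{Tracing evictions.} Fix an arbitrary arm $a \in [K]$. Since $a \notin \Amaster(B^*)$, by \Cref{line:evict-Amaster} and \Cref{line:ancestor=discard}, $a$ must have been dropped from $\Amaster(B_a^{\text{curr}})$ for some ancestor $B_a^{\text{curr}} \supseteq B^*$ at a round $t_a \in [t_\ell, t_{\ell+1}]$ when $B_a^{\text{curr}}$ was the active bin. The triggering condition \Cref{eq:elim} then fires on some interval $[s_1,s_2] \subseteq [t_\ell, t_a)$ in the bin $B_a \doteq T_{r_{s_2-s_1}}(X_{t_a})$. Because $s_2 - s_1$ cannot exceed the elapsed time from the active base's start, $r_{s_2-s_1}$ is at a coarser level than the active level at $t_a$, hence $B_a \supseteq B_a^{\text{curr}} \supseteq B^*$; in particular $X_{t^*} \in B_a$.

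\emph{From estimates to true gaps, then to a shift.} On event $\mc{E}_1$, \Cref{prop:concentration} gives $\sum_s \hat\delta_s^{B_a}(a',a) \leq \sum_s \delta_s(a',a)\pmb{1}\{X_s \in B_a\} + c_1\bigl(\sqrt{K\log(T)\,n_{B_a}([s_1,s_2])} + K\log(T)\bigr)$ whenever $a' \in \mc{A}_s$ throughout (and the bias introduced by rounds with $a' \notin \mc{A}_s$ only pushes the L.H.S.\ down, in our favor). Combining with \Cref{eq:elim} and $\delta_s(a',a) \leq \delta_s(a)$, for $C_0$ chosen large enough compared to $c_1$ one obtains
\[
\sum_{s=s_1}^{s_2} \delta_s(a)\,\pmb{1}\{X_s \in B_a\} \;\geq\; \sqrt{K\cdot n_{B_a}([s_1,s_2])} + r(B_a)\cdot n_{B_a}([s_1,s_2]),
\]
treating the cases $n_{B_a}\geq K\log(T)$ and $n_{B_a} < K\log(T)$ separately so that the $\log(T)$ factors on the two sides line up. This is exactly the significant-regret condition \Cref{eq:bad-arm} for arm $a$ in bin $B_a \ni X_{t^*}$ on the interval $I_a \doteq [s_1,s_2]$. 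Hence every arm $a$ is unsafe at $X_{t^*}$ on some $I_a \subseteq [t_\ell,t_{\ell+1}) \subseteq [\tau_i, \tau_{i+1}]$, and by the minimality in \Cref{defn:sig-shift} this forces $\tau_{i+1} \leq t^* \leq t_{\ell+1}$, contradicting $\tau_{i+1} > t_{\ell+1}$.

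\emph{Main obstacle.} The most delicate step is the geometric tracking in the second paragraph: verifying that the bin $B_a$ where \Cref{eq:elim} triggered actually contains $X_{t^*}$, which relies on both the ancestor-refinement in \Cref{line:ancestor=discard} and the monotonicity $r_{s_2-s_1} \geq r_{t_a - \tstart}$. The secondary subtlety is handling the case where $a' \notin \mc{A}_s$ in the concentration step, but there the bias in $\mb{E}[\hat\delta_s^{B_a}(a',a)\mid\mc{F}_{s-1}]$ is always in the direction that strengthens the implication from the empirical to the true gap.
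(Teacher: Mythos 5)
Your overall strategy matches the paper's. On the good event $\mc{E}_1$, you use \Cref{prop:concentration} to translate the empirical eviction criterion \Cref{eq:elim} into the population significant-regret condition \Cref{eq:bad-arm}, absorb the deviation term by taking $C_0$ large, and handle the rounds where $a'\notin\mc{A}_s$ by noting the bias is favorable. This is precisely the paper's case analysis of $\mb{E}[\hat\delta_s^B(a',a)\mid\mc{F}_{s-1}]$, which equals $\delta_s(a',a)\pmb{1}\{X_s\in B\}$, $-f_s^a(X_s)\pmb{1}\{X_s\in B\}$, or $0$, and in every case is $\leq \delta_s(a)\pmb{1}\{X_s\in B\}$. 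Your contrapositive framing is cosmetic; the content is the same direct implication the paper proves.

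The genuine gap is in your geometric bin-tracking step. You correctly write $[s_1,s_2]\subseteq[t_\ell,t_a)$ for the $\Amaster$ eviction of \Cref{line:evict-Amaster}, but then assert that $s_2-s_1$ ``cannot exceed the elapsed time from the active base's start,'' concluding $r_{s_2-s_1}$ is no finer than the active level at $t_a$ and hence $B_a\supseteq B_a^{\text{curr}}\supseteq B^*$. These two statements conflict. The constraint $s_2-s_1 \le t-\tstart$ applies only to the local eviction at \Cref{line:evict-At}, which restricts to $[\tstart,t)$. The master eviction that actually drives the restart runs over $[t_\ell,t)$. When a replay with $\tstart'>t_\ell$ is the active base at $t_a$, one can have $t_\ell\le s_1<\tstart'$ and hence $s_2-s_1>t_a-\tstart'$, making $r_{s_2-s_1}$ strictly finer than $r_{t_a-\tstart'}$. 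In that case $B_a=T_{r_{s_2-s_1}}(X_{t_a})$ is a strict descendant of $B_a^{\text{curr}}$ and need not contain $B^*$, so $X_{t^*}\in B_a$ fails and you cannot conclude that every arm is unsafe at the common context $X_{t^*}$ as \Cref{defn:sig-shift} requires. For what it's worth, the paper's own proof states ``in $B'\supseteq B$'' without unpacking it, so the subtlety is shared; but your proposed justification of that containment is incorrect as written.
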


\begin{proof}
	Fix an episode $[t_{\ell},t_{\ell+1})$. Then, by \Cref{line:evict-At} of \Cref{meta-alg}, there is a bin $B$ such that every arm $a\in [K]$ was evicted from $B$ at some round in the episode, i.e. \Cref{eq:elim} is true for each arm $a$ on some interval $[s_1,s_2]\subseteq [t_{\ell},t_{\ell+1})$. It suffices to show that this implies a significnat shift has occurred between rounds $t_{\ell}$ and $t_{\ell+1}$.

	%Specifically, we show that arm $a$ incurs significant regret \Cref{eq:bad-arm} on $[s_1,s_2]$ in the bin at level $r_{s_2-s_1}$ containing $X_t$.
	Suppose \Cref{eq:elim} first triggers the eviction of arm $a$ at time $t$ in $B'\supseteq B$ over interval $[s_1,s_2]$ where $r(B')=r_{s_2-s_1}$. By concentration \Cref{eq:error-bound} and our eviction criteria \Cref{eq:elim}, we have that there is an arm $a'\neq a$ such that (using the notation of \Cref{prop:concentration}) for large enough $C_0>0$ and some $c_{14}>0$:
	\begin{align}\label{eq:large-aggregate-gap}
		\sum_{s=s_1}^{s_2} \mb{E}\left[ \hat{\delta}_s^B(a',a) \mid \mc{F}_{s-1}\right] &\geq c_{14} \left(\sqrt{K\log(T) \cdot n_{B'}([s_1,s_2])+(K\log(T))^2}  \right.\nonumber\\
	&\qquad \left. + r(B')\cdot n_{B'}([s_1,s_2]) \right).
	\end{align}
	Next, if arm $a$ is evicted from $\mc{A}(B')$ at round $t$, then we have by the definition of $\hat{\delta}_s^{B'}(a',a)$ \Cref{eq:hat-delta}:
	\[
		\mb{E}[  \hat{\delta}_s^{B'}(a',a) \mid \mc{F}_{s-1} ] = \begin{cases}
		\delta_s(a',a)\cdot \pmb{1}\{X_s\in B'\} & a,a'\in \mc{A}_s\\
		-f_s^a(X_s)\cdot \pmb{1}\{X_s\in B\} & a\in \mc{A}_s,a'\not\in\mc{A}_s\\
		0 & a\not\in \mc{A}_s
		\end{cases}.
	\]
	In any case, the above L.H.S. conditional expectation is bounded above by $\delta_s(a)\cdot \pmb{1}\{X_s\in B'\}$. Thus, \Cref{eq:large-aggregate-gap} implies arm $a$ incurs significant regret \Cref{eq:bad-arm} in $B'$ on $[s_1,s_2]$:
	\[
		\sum_{s=s_1}^{s_2} \delta_s(a)\cdot \pmb{1}\{X_s\in B'\} \geq \sqrt{K\cdot n_{B'}([s_2,s_2])} + r(B')\cdot n_{B'}([s_1,s_2]).
	\]
	Then, since every arm $a$ is evicted in bin $B$ by round $t$, a significant shift must have occurred in episode $[t_{\ell},t_{\ell+1}]$.
	%If \Cref{eq:elim-base} triggers the eviction of arm $a$ at round $t$, an identical argument substituting $\hat{\delta}_s^{a',a}$ with $\hat{\delta}_s^{a',a}\cdot \pmb{1}\{\base(\tstart,m_0)\text{ active at }s\}$ for the active $\base(\tstart,m_0)$ at round $t$ gives the result.
\end{proof}

\subsection{Regret of {\normalfont \cmeta} to the Last Safe Arm}

It remains to bound $\mb{E}[\sum_{t=1}^T \sum_{a\in\mc{A}_t} \delta_t(a_t^\sharp,a)/|\mc{A}_t|\mid \bld{X}_t]$. We further decompose this sum over $t$ into episodes and then the blocks (see \Cref{defn:block}) where a particular choice of level is used within the episode. The following notation will be useful.

\begin{defn}\label{defn:phases}
	%Let $\textsc{Levels}(t_{\ell},t_{\ell+1})$ be the set of levels in $\mc{R}$ used during the episode.
	%Let $s_{\ell}(r)$ and $e_{\ell}(r)$ denote the first and last rounds when level $r$ is used by the master $\base$ in episode $[t_{\ell},t_{\ell+1})$, i.e. rounds $t\in [t_{\ell},t_{\ell+1})$ such that $r_{t-t_{\ell}}=r$. We call $[s_{\ell}(r),e_{\ell}(r)]$ a {\bf block}.
	Let $\textsc{Phases}(\ell,r) \doteq \{i\in [\Lsig]:[\tau_i,\tau_{i+1})\cap [s_{\ell}(r),e_{\ell}(r)] \neq \emptyset\}$ be the phases which intersect block $[s_{\ell}(r),e_{\ell}(r))$, let $T(i,r,\ell) \doteq |[\tau_i,\tau_{i+1})\cap [s_{\ell}(r), e_{\ell}(r)]|$ be the effective length of the phase as observed in block $[s_{\ell}(r),e_{\ell}(r)]$.

	Similarly, define $\Phases(\ell) \doteq \{i\in [\Lsig]: [\tau_i,\tau_{i+1}) \cap [t_{\ell},t_{\ell+1}) \neq \emptyset\}$ as the phases which intersect episode $[t_{\ell},t_{\ell+1})$.
\end{defn}

It will in fact suffice to show w.h.p. w.r.t. the distribution of $\bld{X}_T$, for each episode $[t_{\ell},t_{\ell+1})$, each block $[s_{\ell}(r),e_{\ell}(r)]$ in $[t_{\ell},t_{\ell+1})$, and each bin $B\in \mc{T}_r$:
\begin{align}%\label{eq:regret-episode}
	\mb{E}&\left[ \sum_{t=s_{\ell}(r)}^{e_{\ell}(r)} \right. \left.\sum_{a\in\mc{A}_t} \frac{\delta_t(a_t^\sharp,a)}{|\mc{A}_t|}\cdot \pmb{1}\{X_t\in B\}\cdot\pmb{1}\{\mc{E}_1\cap \mc{E}_2\} \Bigg\vert \bld{X}_T \right]  \nonumber\\
								&\leq c_{15} \log(K) \mb{E}\left[ \pmb{1}\{\mc{E}_1\cap \mc{E}_2\}\left( \log(T) + \log^2(T)\sum_{i\in \textsc{Phases}(\ell,r)} r(B)^d\cdot T(i,r,\ell)^{\frac{1+d}{2+d}}\cdot K^{\frac{1}{2+d}} \right) \Bigg\vert \bld{X}_T \right] \label{eq:regret-episode}
\end{align}

\subsection{Summing the Per-(Bin, Block, Episode) Regret over Bins, Blocks, and Episodes.}

Admitting \Cref{eq:regret-episode}, we show that the total dynamic regret over $T$ rounds is of the desired order.

Recall from earlier that there are WLOG $T$ total episodes with the convention that $t_{\ell} \doteq T+1$ if only $\ell$ episodes occur by round $T$. Then, summing our per-bin regret bound \Cref{eq:regret-episode} over all the bins $B\in \mc{T}_r$ at level $r$ gives (using strong density to bound $\sum_{B\in r} r^d \leq \frac{C_d}{c_d}$):
\begin{align}
	&\phantom{\leq} \mb{E}\left[ \sum_{B\in \mc{T}_r} \sum_{t=s_{\ell}(r)}^{e_{\ell}(r)}  \sum_{a\in\mc{A}_t} \frac{\delta_t(\atsharp,a)}{|\mc{A}_t|}\cdot \pmb{1}\{X_t\in B\}\cdot \pmb{1}\{\mc{E}_1\cap \mc{E}_2\} \Bigg\vert \bld{X}_T \right]\nonumber \\
	&\leq c_{16}\log(K) \mb{E}\left[ \pmb{1}\{\mc{E}_1\cap \mc{E}_2\} \left( \sum_{B\in \mc{T}_r} \log(T) + \log^2(T) \sum_{i \in \textsc{Phases}(\ell,r)} T(i,r,\ell)^{\frac{1+d}{2+d}}\cdot K^{\frac{1}{2+d}}  \right) \Bigg\vert \bld{X}_T \right]. \label{eq:bin-block-episode-bound}
\end{align}
Next, summing over the different levels $r$ (of which there are at most $\log(T)$ used in any episode), we obtain by Jensen's inequality on the concave function $z\mapsto z^{\frac{1+d}{2+d}}$:
\begin{align*}
	\sum_{r\in\mc{R}} \sum_{i\in\Phases(\ell,r)} T(i,r,\ell)^{\frac{1+d}{2+d}} &= \sum_{i\in\Phases(\ell)} \sum_{r\in\mc{R}:i\in\Phases(\ell,r)} T(i,r,\ell)^{\frac{1+d}{2+d}}\\
										   &\leq \sum_{i\in\Phases(\ell)} \left( \log(T)\sum_{r\in\mc{R}:i\in\Phases(\ell,r)} T(i,r,\ell)\right)^{\frac{1+d}{2+d}}.
\end{align*}
Now, we have
\[
	\sum_{r\in\mc{R}:i\in\Phases(\ell,r)} T(i,r,\ell) = \sum_{r\in\mc{R}:i\in\Phases(\ell,r)} |[\tau_i,\tau_{i+1})\cap [s_{\ell}(r),e_{\ell}(r)]| = \tau_{i+1}-\tau_i+1.
\]
We also have (via \Cref{fact:level-bound} about level $r_{t_{\ell+1}-t_{\ell}}$ which is the smallest level used in episode $[t_{\ell},t_{\ell+1})$).
\begin{align*}
	\sum_{r\in\mc{R}} \sum_{B\in \mc{T}_r} \log(T) &\leq \sum_{r\in\mc{R}} r^{-d}\cdot \log(T)\\
						       &\leq c_{17} \log^2(T) \left(\frac{t_{\ell+1}-t_{\ell}}{K}\right)^{\frac{d}{2+d}}\\
						       &\leq c_{18}\log^2(T) \sum_{i\in\Phases(\ell)} (\tau_{i+1}-\tau_i)^{\frac{1+d}{2+d}}\cdot K^{\frac{1}{2+d}}.
\end{align*}
Thus, combining the above inequalities with \Cref{eq:bin-block-episode-bound}, we obtain overall bound:
\[
	c_{18} \log(K) \log^3(T)\mb{E}\left[ \pmb{1}\{\mc{E}_1\cap \mc{E}_2\} \sum_{i\in\Phases(\ell)} (\tau_{i+1}-\tau_i)^{\frac{1+d}{2+d}} \cdot K^{\frac{1}{2+d}} \right].
\]
Recall now that $\mc{E}_1$ is the good event over which the concentration bounds of \Cref{prop:concentration} hold. Then, using the fact that, on event $\mc{E}_1$, each phase $[\tau_i,\tau_{i+1})$ intersects at most two episodes (\Cref{lem:counting-eps}), summing the above R.H.S over episodes $\ell\in[T]$ gives us (since at most $\log(T)$ blocks per episode) order
\[
	%\mb{E}\left[ \pmb{1}\{\mc{E}_1\}\sum_{\ell=1}^T  \sum_{i \in \textsc{Phases}(t_{\ell},t_{\ell+1})} (\tau_i - \tau_{i-1})^{\frac{1+d}{2+d}}\cdot K^{\frac{1}{2+d}} \right]
	2\log(K)\log^3(T)\sum_{i=1}^{\Lsig} (\tau_{i+1}-\tau_i)^{\frac{1+d}{2+d}}\cdot K^{\frac{1}{2+d}}.
\]
%Plugging this into our earlier bound gives us \Cref{thm}, concluding the analysis.

It then remains to show the per-(bin, block, episode) regret bound \Cref{eq:regret-episode}.

\subsection{Bounding the Per-(Bin, Block, Episode) Regret to the Last Safe Arm}

To show \Cref{eq:regret-episode}, we first fix a block $[s_{\ell}(r),e_{\ell}(r)]$ and a bin $B\in \mc{T}_r$. We then further decompose $\delta_t(a_t^\sharp,a)$ in two parts:
\begin{enumerate}[(a),leftmargin=*]
	\item The regret of $a$ to the {\em local last master arm}, denoted by $a_r(B)$, to be evicted from $\Amaster(B)$ in block $[s_{\ell}(r),e_{\ell}(r)]$ (ties are broken arbitrarily). \label{item:regret-ell}
	\item The regret of the local last master arm $a_r(B)$ to the last safe arm $a_t^\sharp$. \label{item:regret-persistent}
\end{enumerate}
In other words, the L.H.S. of \Cref{eq:regret-episode} is decomposed as:
\[
	%\mb{E} \left[ \sum_{t=t_{\ell}}^{t_{\ell+1}-1} \delta_t(\pi_t) \right] =
	%\underbrace{\mb{E} \left[ \sum_{t=s_{\ell}(r)}^{e_{\ell}(r)} \delta_t^{a_t^\sharp}(X_t)\cdot \pmb{1}\{X_t\in B\} \Bigg\vert \bld{X}_T \right] }_{\ref{item:regret-best}} +
	\underbrace{\mb{E}\left[ \sum_{t=s_{\ell}(r)}^{e_{\ell}(r)} \sum_{a\in\mc{A}_t} \frac{\delta_t(a_r(B),a)}{|\mc{A}_t|}\cdot \pmb{1}\{X_t\in B\} \Bigg\vert \bld{X}_T \right] }_{\ref{item:regret-ell}} + \underbrace{\mb{E} \left[  \sum_{t=s_{\ell}(r)}^{e_{\ell}(r)} \delta_t(a_t^\sharp,a_r(B)) \cdot \pmb{1}\{X_t\in B\} \Bigg\vert \bld{X}_T \right] }_{\ref{item:regret-persistent}}.
\]
We will show both \ref{item:regret-ell} and \ref{item:regret-persistent} are of order \Cref{eq:regret-episode}.

\paragraph{$\bullet$ Bounding the Regret of Other Arms to the Local Last Master Arm $a_r(B)$.}

We start by partitioning the rounds $t$ such that $X_t\in B$ and $a\in\mc{A}_t$ in \ref{item:regret-ell} according to before or after they are evicted from $\Amaster(B)$. Suppose arm $a$ is evicted from $\Amaster(B)$ at round $t_{r}^a\in [s_{\ell}(r),e_{\ell}(r)]$ (formally, we let $t_r^a \doteq e_{\ell}(r)$ if $a$ is not evicted in block $[s_{\ell}(r),e_{\ell}(r)]$).
%In particular, this means arm $a\in\mc{A}(B)$ for all $t\in [s_{\ell}(r),t_{r}^a]$.
Then, it suffices to bound:
\begin{equation}\label{eq:before-master-evict}
	\mb{E}\left[ \sum_{a=1}^K \sum_{t=s_{\ell}(r)}^{t_r^a-1} \frac{\delta_t(a_{r}(B),a)}{|\mc{A}_t|}\cdot \pmb{1}\{X_t\in B\} + \sum_{a=1}^K \sum_{t=t_r^a}^{e_{\ell}(r)}  \frac{\delta_t(a_r(B),a)}{|\mc{A}_t|} \cdot \pmb{1}\{a\in \mc{A}_t\} \cdot \pmb{1}\{X_t\in B\} \Bigg\vert \bld{X}_T \right].
\end{equation}
Suppose WLOG that $t_{r}^1\leq t_{r}^2 \leq \cdots \leq t_{r}^K$. Then, for each round $t < t_r^a$ all arms $a'\geq a$ are retained in $\Amaster(B)$ and thus retained in the candidate arm set $\mc{A}_t$ for all rounds $t$ where $X_t\in B$. Importantly, at each round $t$ a level of at least $r$ is used since a child $\base$ can only use a higher level than the master $\base$. Thus, $|\mc{A}_t| \geq K+1-a$ for all $t\leq t_r^a$.

%\com{include a diagram of level selection?}

Next, we bound the first double sum in \Cref{eq:before-master-evict}, i.e. the regret of playing $a$ to $a_r(B)$ from $s_{\ell}(r)$ to $t_r^a - 1$. Applying our concentration bounds (\Cref{prop:concentration}), since arm $a$ is not evicted from $\mc{A}(B)$ till round $t_r^a$, on event $\mc{E}_1$ we have for some $c_{19}>0$ and any other arm $a'\in \mc{A}(B)$ through round $t_r^a-1$ (i.e., $a'\in \mc{A}_t$ for all $t\in [t_{\ell},t_r^a)$ such that $X_t\in B$ since we always use level at least $r$ at such a round $t$): for bin $B' \supseteq B$ at level $r_{t_r^a-1-s_{\ell}(r)}$: on event $\mc{E}_1$ (note that we necessarily always have $\mc{A}(B')\supseteq \mc{A}(B)$ for $B'\supseteq B$ by \Cref{line:ancestor=discard} of \Cref{base-alg}):
\[
	\sum_{t=s_{\ell}(r)}^{t_r^a-1} \mb{E}[ \hat{\delta}_s^{B'}(a',a) \mid \mc{F}_{t-1}] \leq c_{19} \sqrt{K\log(T) \cdot ( n_{B'}([s_{\ell}(r),t_r^a))\vee K\log(T)) } + r(B') \cdot n_{B'}([s_{\ell}(r),t_r^a)).
\]
Next, since $a,a'\in \mc{A}_t$ for each $t\in [s_{\ell}(r),t_r^a-1)$ such that $X_t\in B$, we have:
\[
	\forall t\in [s_{\ell}(r),t_r^a),X_t\in B:\mb{E}[ \hat{\delta}_t^{B}(a',a) \mid \mc{F}_{t-1}] = \delta_t(a',a).
\]
Thus, we conclude by \Cref{eq:elim}:
\[
	\sum_{t=s_{\ell}(r)}^{t_r^a-1} \delta_t(a',a) \cdot \pmb{1}\{X_t \in B\} \leq c_{19} \sqrt{K\log(T) \cdot ( n_{B'}([s_{\ell}(r),t_r^a))\vee K\log(T))} + r(B') \cdot n_{B'}([s_{\ell}(r),t_r^a)).
\]
Thus, by \Cref{lem:relate-levels}, and since $B'\supseteq B$, we conclude for any such $a'$ on event $\mc{E}_1$: $	\sum_{t=s_{\ell}(r)}^{t_r^a-1} \frac{\delta_t(a',a)}{|\mc{A}_t|}\cdot \pmb{1}\{X_t\in B\}$ is at most
\begin{equation}\label{eq:small-bin-bound}
 	\frac{c_4\left( \log^{1/2}(T) r^d \cdot K^{\frac{1}{2+d}} \cdot (t_r^a - s_{\ell}(r))^{\frac{1+d}{2+d}} + K\log(T) + \sqrt{\log(T) (t_r^a - s_{\ell}(r))\cdot \mu(B)}\right)}{K+1-a},
\end{equation}
where we use the fact that $|\mc{A}_t|\geq K+1-a$ for all $t\in [s_{\ell}(r),t_r^a)$. Since this last bound holds uniformly for all $a'\in \mc{A}(B)$ through round $t_r^a-1$, it must hold for $a'= a_r(B)$, the local last master arm.

%We next claim that from the strong density assumption (\Cref{assumption:strong-density}):
%\begin{equation}\label{eq:sqrt-strong-density}
	%\sqrt{K\mu(B)\cdot (e_{\ell}(r)-s_{\ell}(r))} \leq C_d^{-1/2}\mu(B)\cdot (e_{\ell}(r) - s_{\ell}(r))^{\frac{1+d}{2+d}} \cdot K^{\frac{1}{2+d}} \iff \left(\frac{K}{e_{\ell}(r)-s_{\ell}(r)}\right)^{\frac{1}{2+d}} \leq r(B)
%\end{equation}
%Using \Cref{note:level}, we evaluate $s_{\ell}(r) = t_{\ell} + K\cdot r^{-(2+d)}$ and $e_{\ell}(r) = t_{\ell} + K\cdot r^{-(2+d)}\cdot 2^{2+d}-1$. Then,
%\begin{equation}\label{eq:block-length}
	%e_{\ell}(r) - s_{\ell}(r) = K\cdot r^{-(2+d)}\cdot (2^{2+d}-1) \implies \left(\frac{K}{e_{\ell}(r)-s_{\ell}(r)}\right)^{\frac{1}{2+d}} = \frac{r}{(2^{2+d}-1)^{\frac{1}{2+d}}} < r.
%\end{equation}
%This show \Cref{eq:sqrt-strong-density}.

Then,
%plugging \Cref{eq:sqrt-strong-density} into the R.H.S. of \Cref{eq:small-bin-bound} and
summing over all arms $a$, we have on event $\mc{E}_1$:
\begin{align}
	\sum_{a=1}^K \sum_{t=s_{\ell}(r)}^{t_r^a-1} \frac{\delta_t(a_r(B),a)}{|\mc{A}_t|} \cdot \pmb{1}\{X_t\in B\} \leq c_4 \log(K)\left(\log^{1/2}(T)\cdot r^d \cdot (e_{\ell}(r) - s_{\ell}(r))^{\frac{1+d}{2+d}} \cdot K^{\frac{1}{2+d}} + \right. \nonumber\\
	\left. K\log(T) + \sqrt{\log(T) (t_r^a-s_{\ell}(r))\cdot \mu(B)}\right). \label{eq:small-bin-bound-2}
\end{align}
Next note that by \Cref{lem:bias-variance}:
\begin{align*}
	\sqrt{(t_r^a-s_{\ell}(r))\cdot \mu(B)} &\leq \sqrt{(e_{\ell}(r) - s_{\ell}(r))\cdot \mu(B)} \\
					       &\leq c_7 K^{\frac{d/2}{2+d}} (e_{\ell}(r) - s_{\ell}(r))^{\frac{1}{2+d}} \\
					       &\leq c_7 K^{\frac{1}{2+d}}\cdot r^d\cdot (e_{\ell}(r) - s_{\ell}(r))^{\frac{1+d}{2+d}}.
\end{align*}
Additionally, since $K \leq e_{\ell}(r) - s_{\ell}(r)$ (\Cref{fact:k-long}), we have:
\[
	K\log(T) \leq (e_{\ell}(r) - s_{\ell}(r))^{\frac{1}{2+d}} K^{\frac{1+d}{2+d} } \propto r^d\cdot (e_{\ell}(r) - s_{\ell}(r))^{\frac{1+d}{2+d}} \cdot K^{\frac{1}{2+d}}.
\]
Thus, combining the above two displays with \Cref{eq:small-bin-bound-2} gives us
\[
	\sum_{a=1}^K \sum_{t=s_{\ell}(r)}^{t_r^a-1} \frac{\delta_t(a_r(B),a)}{|\mc{A}_t|} \cdot \pmb{1}\{X_t\in B\} \leq c_{20} \log(K)\log(T) \cdot r^d\cdot (e_{\ell}(r) - s_{\ell}(r))^{\frac{1+d}{2+d}} \cdot K^{\frac{1}{2+d}}.
\]
We next show the second double sum in \Cref{eq:before-master-evict} has an upper bound similar to the above. For this, we first observe that if arm $a$ is played in bin $B$ after round $t_r^a$, then it must be due to an active replay. The difficulty here is that replays may interrupt each other and so care must be taken in managing the contribution of $\sum_t\delta_t(a_r(B),a)$ (which may be negative) by different overlapping replays.

Our strategy, similar to that of Section B.1 in \citet{suk22}, is to partition the rounds when $a$ is played by a replay after round $t_{r}^a$ according to which replay is active and not accounted for by another replay. This involves carefully identifying a subclass of replays whose durations while playing $a$ in $B$ span all the rounds where $a$ is played in $B$ after $t_{r}^a$. Then, we cover the times when $a$ is played by a collection of intervals corresponding to the schedules of this subclass of replays, on each of which we can employ the eviction criterion \Cref{eq:elim} and concentration bound as done earlier.

%This will allow us to isolate the relative regret contributions $\delta_t(a_r(B),a)$ to the second double sum in \Cref{eq:before-master-evict} of a given replay. % and again use a single base algorithm's eviction criterion \Cref{eq:evict}.

For this purpose, we first define the following terminology (which is all w.r.t. a fixed arm $a$, along with fixed block $[e_{\ell}(r),s_{\ell}(r)]$ and bin $B\in \mc{T}_r$):

\begin{defn}
%Recall that the Bernoulli $B_{s,m}$ (see \Cref{line:add-replay} of \Cref{meta-alg}) decides whether $\base(s,m)$ is scheduled (and hence has the chance to become active at all).
	\begin{enumerate}[(i)]
		\item[]
		\item For each scheduled and activated $\base(s,m)$, let the round $M(s,m)$ be the minimum of two quantities: (a) the last round in $[s,s+m]$ when arm $a$ is retained in $\mc{A}(B)$ by $\base(s,m)$ and all of its children, and (b) the last round that $\base(s,m)$ is active and not permanently interrupted by another replay. Call the interval $[s,M(s,m)]$ the {\bf active interval} of $\base(s,m)$.
		\item Call a replay $\base(s,m)$ {\bf proper} if there is no other scheduled replay $\base(s',m')$ such that $[s,s+m] \subset (s',s'+m')$ where $\base(s',m')$ will become active again after round $s+m$. In other words, a proper replay is not scheduled inside the scheduled range of rounds of another replay. Let $\textsc{Proper}(s_{\ell}(r),e_{\ell}(r))$ be the set of proper replays scheduled to start in the block $[s_{\ell}(r),e_{\ell}(r)]$.
%before $\base(s,m)$ is permanently interrupted (i.e., is not active again after round $M(m,s)$)
		\item Call a scheduled replay $\base(s,m)$ a {\bf sub-replay} if it is non-proper and if each of its ancestor replays (i.e., previously scheduled replays whose durations have not concluded) $\base(s',m')$ satisfies $M(s',m')<s$. In other words, a sub-replay either permanently interrupts its parent or does not, but is scheduled after its parent (and all its ancestors) stops playing arm $a$ in $B$. Let $\Subproper$ be the set of all sub-replays scheduled before round $t_{\ell+1}$.
	\end{enumerate}
%arm $a\not \in \mc{A}_{s-1}$ (i.e., such a replay reintroduces $a$ to contention).
%Our strategy is then to partition the rounds according to which replay is active.
\end{defn}

\begin{figure}
    \centering
    \vspace{-3mm}
    \includegraphics[scale=0.5]{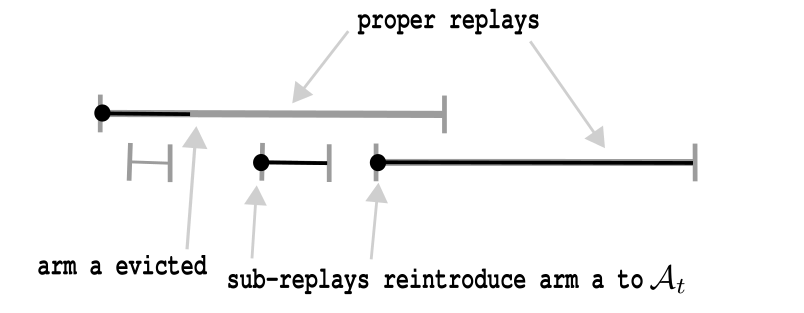}
    \caption{Shown are replay scheduled durations (in gray) with dots marking when arm $a$ is reintroduced to $\mc{A}_t$. Black segments indicate the active intervals $[s,M(s,m)]$ for proper replays and sub-replays. Note that the rounds where $a\in\mc{A}_t$ in the left unlabeled replay's duration are accounted for by the larger proper replay.}
    \label{fig:proper}
    \vspace{-5mm}
\end{figure}

Equipped with this language, we now show some basic claims which essentially reduce analyzing the complicated hierarchy of replays to analyzing the active intervals of replays in $\Proper \cup \Subproper$.

\begin{prop}\label{prop:active-disjoint}
The active intervals
\[
\{[s,M(s,m)]:\base(s,m)\in \Proper \cup \Subproper\},
\]
are mutually disjoint.
\end{prop}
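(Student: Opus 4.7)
The plan is to argue by case analysis on the relative position of the two replays' scheduled windows, using in each case the defining property of whichever of \Proper{} or \Subproper{} contains the later replay. First note that by the activation rule on \Cref{line:replay} of \Cref{base-alg}, at most one replay is activated per round (the one with maximal duration among those with $Z_{m,t}>0$), so two distinct activated replays must start at distinct rounds. Fix $\base(s_1,m_1)\neq \base(s_2,m_2)$ in $\Proper \cup \Subproper$, and assume WLOG $s_1 < s_2$. It suffices to prove $M(s_1,m_1) < s_2$, since the starts $s_1 < s_2$ then force the active intervals to be disjoint.

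If $s_2 > s_1+m_1$, then $\base(s_1,m_1)$'s scheduled duration has already elapsed, so by clause (b) in the definition of $M$, we immediately get $M(s_1,m_1) \le s_1+m_1 < s_2$. Otherwise $s_2 \le s_1+m_1$, meaning $\base(s_2,m_2)$ is scheduled within $\base(s_1,m_1)$'s scheduled duration; in particular $\base(s_1,m_1)$ qualifies as an ancestor of $\base(s_2,m_2)$ in the sense of \Subproper{} (previously scheduled, with scheduled duration not yet concluded by $s_2$). If $\base(s_2,m_2)\in \Subproper$, the sub-replay definition applied to the ancestor $\base(s_1,m_1)$ directly yields $M(s_1,m_1) < s_2$. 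If instead $\base(s_2,m_2)\in \Proper$, split on whether $s_2+m_2 < s_1+m_1$ or $s_2+m_2 \ge s_1+m_1$. In the first case, $[s_2,s_2+m_2]\subset (s_1,s_1+m_1)$, so the properness of $\base(s_2,m_2)$ forbids $\base(s_1,m_1)$ from becoming active again after $s_2+m_2$; combined with $\base(s_1,m_1)$ being preempted at $s_2$, it is permanently interrupted, so $M(s_1,m_1) \le s_2-1$ by clause (b). In the second case $s_2+m_2 \ge s_1+m_1$, the scheduled duration of $\base(s_1,m_1)$ expires before $\base(s_2,m_2)$ (and its descendants) relinquishes control, so $\base(s_1,m_1)$ can never resume, again giving $M(s_1,m_1) \le s_2-1$.

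The only delicate point is the structural claim that while $\base(s_2,m_2)$ or any of its descendant replays is running, the ancestor $\base(s_1,m_1)$ is \emph{not} active; this is the hierarchical invariant of the \base/\cmeta scheduling, where control is always held by the most recently activated, not-yet-terminated replay and only returns to a parent after all children conclude. Once this invariant is explicitly stated, the case analysis above covers every combination of \Proper{} and \Subproper{} for the later replay and closes the proposition.
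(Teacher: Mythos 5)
Your proof is correct and takes essentially the same approach as the paper's: a case analysis driven directly by the defining properties of proper replays and sub-replays, together with the hierarchical scheduling invariant. You organize the cases a bit differently --- first reducing to the claim $M(s_1,m_1) < s_2$ for the earlier start and then splitting on whether the windows nest and on the class of the later replay, whereas the paper splits on the pair of classes --- but the underlying reasoning and the conclusions in each branch coincide with the paper's.
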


\begin{proof}
%To show this, we first observe that for each round $t$ when a replay is active, there is a unique proper replay associated to $t$. This is either the replay active at round $t$ or the proper replay which was scheduled most recently before $t$.
%Furthermore, a non-proper replay cannot permanently interrupt another proper replay since a permanent interruption of a replay can only occur once.
	Clearly, the classes of replays $\textsc{Proper}(t_{\ell},t_{\ell+1})$ and $\Subproper$ are disjoint. Next, we show the respective active intervals $[s,M(s,m)]$ and $[s',M(s',m')]$ of any two $\base(s,m)$ and $\base(s',m')\in \Proper \cup \Subproper$ are disjoint.

	\begin{enumerate}
		\item Proper replay vs. sub-replay: a sub-replay can only be scheduled after the round $M(s,m)$ of the most recent proper replay $\base(s,m)$ (which is necessarily an ancestor). Thus, the active intervals of proper replays and sub-replays are disjoint.
		\item Two distinct proper replays: two such replays can only intersect by one permanently interrupting the other, and since $M(s,m)$ always occurs before the permanent interruption of $\base(s,m)$, we have the active intervals of two such replays are disjoint.
		\item Two distinct sub-replays: consider two non-proper replays $\base(s,m),\base(s',m')\in \Subproper$ with $s'>s$. Suppose their active intervals intersect and that $\base(s,m)$ is an ancestor of $\base(s',m')$. Then, if $\base(s',m')$ is a sub-replay, we must have $s'>M(s,m)$, which means that $[s',M(s',m')]$ and $[s,M(s,m)]$ are disjoint.
	\end{enumerate}
\end{proof}

Next, we claim that the active intervals $[s,M(s,m)]$ for $\base(s,m)\in\textsc{Proper}(t_{\ell},t_{\ell+1})\cup\Subproper$ contain all the rounds where $a$ is played in $B$ after being evicted from $\Amaster(B)$. To show this, we first observe that for each round $t$ when a replay is active, there is a unique proper replay associated to $t$, namely the proper replay scheduled most recently.
Next, note that any round $t>t_{r}^a$ where $X_t\in B$ and where arm $a\in\mc{A}_t$ must belong to the active interval $[s,M(s,m)]$ of this unique proper replay $\base(s,m)$ associated to round $t$, or else satisfies $t>M(s,m)$ in which case a unique sub-replay $\base(s',m')\in \Subproper$ is active at round $t$ and not yet permanently interrupted by round $t$. Thus, it must be the case that $t\in [s',M(s',m')]$.

Overloading notation, we'll let $\mc{A}_t(B)$ be the value of $\mc{A}(B)$ for the $\base$ active at round $t$. Next, note that every round $t\in [s,M(s,m)]$ for a proper or subproper $\base(s,m)$ is clearly a round where $a\in\mc{A}_t(B)$ and no such round is accounted for twice by \Cref{prop:active-disjoint}. Thus,
\[
	\{t\in (t_r^a,e_{\ell}(r)]: a\in\mc{A}_t(B)\} = \bigsqcup_{\base(s,m)\in \Proper \cup \Subproper} [s,M(s,m)].
\]
Then, we can rewrite the second double sum in \Cref{eq:before-master-evict} as:
\[
	\sum_{a=1}^K \sum_{\base(s,m)\in \Proper \cup \Subproper} Z_{m,s}\cdot \sum_{t=s\vee t_{r}^a}^{M(s,m)} \frac{\delta_t(a_r(B),a)}{|\mc{A}_t|}\cdot \pmb{1}\{X_t\in B\}.
\]
Recall in the above that the Bernoulli R.V. $Z_{m,s}$ (see \Cref{line:add-replay} of \Cref{meta-alg}) decides whether $\base(s,m)$ is scheduled.

Further bounding the sum over $t$ above by its positive part, we can expand the middle sum above over $\base(s,m)\in\textsc{Proper}(t_{\ell},t_{\ell+1})\cup\Subproper$ to instead be over all $\base(s,m)$, or obtain:
%\begin{equation}\label{eq:positive-part}
\[
	\sum_{a=1}^K \sum_{\base(s,m)} Z_{m,s}\cdot \left( \sum_{t=s\vee t_r^a}^{M(s,m)} \frac{\delta_t(a_r(B),a)}{|\mc{A}_t|}\cdot\pmb{1}\{X_t\in B \} \right)_+ ,
\]
%\end{equation}
%Let $E_{s,m,t}$ be the event that $\base(s,m)$ is active at round $t$. Recall that the Bernoulli $B_{s,m}$ (see \Cref{line:add-replay} of \Cref{meta-alg}) decides whether $\base(s,m)$ is scheduled (and hence has the chance to become active at all).

%Then, the second double sum in \Cref{eq:before-master-evict} can be expressed as:
%\begin{equation}\label{eq:decompose-regret-replays}
%%	\sum_{a=1}^K \sum_{t=t_r^a+1}^{t_{\ell+1}} \delta_t(a_r(B),a)\cdot \pmb{1}\{a_t=a\}
	%\sum_{\base(s,m)} \pmb{1}\{B_{s,m}=1\} \sum_{a=1}^K \sum_{t=s \vee t_r^a}^{s+m} \frac{\delta_t(a_r(B),a)}{|\mc{A}_t|}\cdot \pmb{1}\{E_{s,m,t},a\in\mc{A}_t\},
%\end{equation}
where the sum is over all replays $\base(s,m)$, i.e. $s\in \{t_{\ell}+1,\ldots,t_{\ell+1}-1\}$ and $m\in \{2,4,\ldots,2^{\lceil \log(T)\rceil}\}$.
%Then, it remains to bound the regret incurred by $\base(s,m)$ while it is active.
%In particular, it suffices to bound the regret of $\base(s,m)$ while playing arm $a$, i.e. the innermost sum on the R.H.S. of \Cref{eq:decompose-regret-replays}.
It then remains to bound the contributed relative regret of each $\base(s,m)$ in the interval $[s\vee t_r^a,M(s,m)]$, which will follow similarly to the previous steps in bounding the first double sum of \Cref{eq:before-master-evict}.

We first have (now overloading the notation $M(s,m)$ as $M(s,m,a)$ for clarity), i.e. combining our concentration bound \Cref{eq:error-bound} with the eviction criterion \Cref{eq:elim} and applying \Cref{lem:relate-levels}:
\begin{align*}
	&\sum_{t=s\vee t_r^a}^{M(s,m,a)} \frac{\delta_t(a_r(B),a)}{|\mc{A}_t|} \cdot \pmb{1}\{X_t \in B\} \\
	&\leq \frac{c_{21}\left( \log^{1/2}(T) r^d\cdot K^{\frac{1}{2+d}}\cdot M(s,m,a)^{\frac{1+d}{2+d}} + K\log(T) + \sqrt{\log(T) \cdot M(s,m,a)\cdot \mu(B)}\right)}{\min_{t\in [s,M(s,m,a)]} |\mc{A}_t|}
\end{align*}
Thus, it remains to bound
\[
	\sum_{a,s,m} Z_{m,s} \left(\frac{ \log^{1/2}(T) r^d K^{\frac{1}{2+d}} M(s,m,a)^{\frac{1+d}{2+d}} + K\log(T) + \sqrt{\log(T) M(s,m,a) \mu(B)}}{\min_{t\in [s,M(s,m,a)]} |\mc{A}_t|}\right).
\]
Swapping the outer two sums and, similar to before, recognizing that $\sum_{a=1}^K \frac{1}{\min_{t\in [s,M(s,m,a)]} |\mc{A}_t|} \leq \log(K)$ by summing over arms in the order they are evicted by $\base(s,m)$, we have that it remains to bound:
\begin{equation}\label{eq:positive-part}
	\log(K)\sum_{\base(s,m)} Z_{m,s} \cdot c_{21} \left( \log^{1/2}(T)\cdot r^d\cdot K^{\frac{1}{2+d}}\cdot \tilde{m}^{\frac{1+d}{2+d}} + K\log(T) + \sqrt{\log(T) \cdot \tilde{m} \cdot \mu(B)}\right),
\end{equation}
where $\tilde{m} \doteq m \land (e_{\ell}(r) - s_{\ell}(r))$ (note we may freely restrict all active intervals to the current block $[s_{\ell}(r),e_{\ell}(r)]$).
%\begin{equation}\label{eq:per-bin-replay-bound}
	%\sum_{k=1}^K \sum_{t=s \vee t_{r}^{a_k}}^{s_k-1}  \frac{\delta_t(a_r(B),a_k)}{|\mc{A}_t|} \cdot \pmb{1}\{X_t\in B\} \leq c_5 \left( \log^{1/2}(T) r^d\cdot K^{\frac{1}{2+d}}\cdot \tilde{m}^{\frac{1+d}{2+d}} + K\log(T) \right)
%\end{equation}
Let
\begin{align*}
	R(m,B) \doteq \left( c_{21} \left( \log^{1/2}(T) \cdot r^d\cdot K^{\frac{1}{2+d}}\cdot \tilde{m}^{\frac{1+d}{2+d}} + (K\land \tilde{m})\log(T) \right. \right. \\
	\left. \left. + \sqrt{\log(T) \cdot \tilde{m}\cdot \mu(B) }\right)\right) \land n_B([s,s+m]).
\end{align*}
Then, in light of the previous calculations, $R(m,B)$ is an upper bound on the within-bin $B$ regret contributed by a replay of total duration $m$ (note we can always coarsely upper bound this regret by $n_B([s,s+m])$.

Then, plugging $R(m,B)$ into \Cref{eq:positive-part} gives via tower law:
\begin{equation}\label{eq:replay-bound-2}
	%\mb{E}\left[ \pmb{1}\{\mc{E}_1\}  \sum_{a=1}^K \sum_{t=t_r^a}^{e_{\ell}(r)}  \frac{\delta_t(a_r(B),a)}{|\mc{A}_t|} \cdot \pmb{1}\{a\in \mc{A}_t,X_t\in B\} \right] &\leq
	\mb{E} \left[ \mb{E}\left[ \sum_{s,m} Z_{m,s}\cdot R(m,B) \Bigg\vert s_{\ell}(r) \right] \Bigg\vert \bld{X}_T \right]
	=  \mb{E}\left[ \sum_{s=s_{\ell}(r)}^T \sum_m  \mb{E}[ Z_{m,s} \pmb{1}\{s\leq e_{\ell}(r)\} \mid s_{\ell}(r)] R(m,B) \Bigg\vert \bld{X}_T \right]
\end{equation}
Next, we observe that $Z_{m,s}$ and $\pmb{1}\{s\leq e_{\ell}(r)\}$ are independent conditional on $s_{\ell}(r)$ since $\pmb{1}\{s\leq e_{\ell}(r)\}$ only depends on the scheduling and observations of base algorithms scheduled before round $s$. Additionally, conditional on $s_{\ell}(r)$, the episode start time $t_{\ell}$ is also fixed since the two are deterministically related (see \Cref{fact:start-end-formula}). Then, we have that:
\[
	\mb{P}(Z_{m,s}=1) = \mb{E}[Z_{m,s} \mid s_{\ell}(r) ] = \mb{E}[Z_{m,s} \mid t_{\ell}, s_{\ell}(r) ] = \left(\frac{1}{m}\right)^{\frac{1}{2+d}}\cdot \left(\frac{1}{s-t_{\ell}}\right)^{\frac{1+d}{2+d}}.
\]
Thus,
\begin{align*}
	\mb{E}[ Z_{m,s}\cdot \pmb{1}\{s\leq e_{\ell}(r)\} \mid s_{\ell}(r) ] &=  \mb{E}[ Z_{m,s} \mid s_{\ell}(r), t_{\ell}] \cdot \mb{E}[ \pmb{1}\{s\leq e_{\ell}(r)\} \mid s_{\ell}(r), t_{\ell} ]\\
											      &= \left(\frac{1}{m}\right)^{\frac{1}{2+d}}\cdot \left(\frac{1}{s-t_{\ell}}\right)^{\frac{1+d}{2+d}}  \cdot \mb{E}[ \pmb{1}\{ s \leq e_{\ell}(r)\} \mid s_{\ell}(r)].
\end{align*}
Plugging this into \Cref{eq:replay-bound-2} and unconditioning, we obtain:
\begin{equation}\label{eq:regret-replay}
	\mb{E}\left[ \sum_{s=s_{\ell}(r)}^{e_{\ell}(r)} \sum_{n=1}^{\lceil \log(T)\rceil} \left(\frac{1}{2^n}\right)^{\frac{1}{2+d}} \left(\frac{1}{s-t_{\ell}}\right)^{\frac{1+d}{2+d}} \cdot R(2^n,B) \Bigg\vert \bld{X}_T \right] %\leq c_6\log^3(T) \left( (e_{\ell}(r)-s_{\ell}(r))^{\frac{1+d}{2+d}}\cdot \mu(B) + K\log(T)\right).
\end{equation}
We first evaluate the inner sum over $n$. Note that
\begin{align*}
	\sum_{n=1}^{\lceil \log(T)\rceil} \left(\frac{1}{2^n}\right)^{\frac{1}{2+d}} \cdot (2^n\land (e_{\ell}(r) - s_{\ell}(r))^{\frac{1+d}{2+d}} &\leq \log(T) \cdot (e_{\ell}(r) - s_{\ell}(r))^{\frac{d}{2+d}}\\
	\sum_{n=1}^{\lceil \log(T) \rceil} \left(\frac{1}{2^n}\right)^{\frac{1}{2+d}} \sqrt{2^n \land (e_{\ell}(r) - s_{\ell}(r))} &\leq (e_{\ell}(r) - s_{\ell}(r))^{\frac{d/2}{2+d}}\\
	\sum_{n=1}^{\ceil{\log(T)}} \left(\frac{1}{2^n}\right)^{\frac{1}{2+d}} (K\land 2^n) &\leq \log(T) \cdot K^{\frac{1+d}{2+d}}.
\end{align*}
Next, we plug in the above displays into \Cref{eq:regret-replay}. In particular, multiplying the above displays by $(s-t_{\ell})^{-\frac{1+d}{2+d}}$ and taking a further sum over $s\in [s_{\ell}(r),e_{\ell}(r)]$ gives an upper bound of:
\begin{align*}
	%\sum_{s=s_{\ell}(r)}^{e_{\ell}(r)} \sum_{n=1}^{\lceil \log(T)\rceil} \left(\frac{1}{2^n}\right)^{\frac{1}{2+d}} \left(\frac{1}{s-t_{\ell}}\right)^{\frac{1+d}{2+d}} \cdot R(2^n,B) \leq
	(e_{\ell}(r) - t_{\ell})^{\frac{1}{2+d}} \left( (e_{\ell}(r) - s_{\ell}(r))^{\frac{d}{2+d}} K^{\frac{1}{2+d}} \cdot r^d\cdot \log^{3/2}(T) \right. \\
	+ \left. (e_{\ell}(r)-s_{\ell}(r))^{\frac{d/2}{2+d}}\sqrt{\log(T)\cdot r^d} +
	K^{\frac{1+d}{2+d}}\log(T)\right).
\end{align*}
First, we note the first term inside the parentheses above inside dominates the second term for all values of $K,e_{\ell}(r),s_{\ell}(r),T$.

Next, note from \Cref{fact:k-long} that $e_{\ell}(r) - t_{\ell}\leq c_{10} (e_{\ell}(r) - s_{\ell}(r))$ and
%thus $\sqrt{\mu(B)}\gtrsim K^{\frac{d/2}{2+d}} (e_{\ell}(r) - s_{\ell}(r))^{-\frac{d/2}{2+d}}$ which means the second term above is dominated by the first term. Then, \Cref{eq:regret-replay} is at most
so the above is at most:
\begin{equation}\label{eq:intermediate-two-bound}
	r^d \cdot  (e_{\ell}(r)-s_{\ell}(r))^{\frac{1+d}{2+d}}K^{\frac{1}{2+d}} \log^{3/2}(T) + \log(T) K^{\frac{1+d}{2+d}} \cdot (e_{\ell}(r) - s_{\ell}(r))^{\frac{1}{2+d}}.
\end{equation}
	We next recall from \Cref{fact:k-long} that each block $[s_{\ell}(r),e_{\ell}(r)]$ is at least $K$ rounds long. Thus,
\[
	r^d \cdot (e_{\ell}(r) - s_{\ell}(r))^{\frac{1+d}{2+d}} \cdot K^{\frac{1}{2+d}} \geq c_{22} \cdot (e_{\ell}(r) - s_{\ell}(r))^{\frac{1}{2+d}}\cdot K^{\frac{1+d}{2+d}}.
\]
Thus, the second term of \Cref{eq:intermediate-two-bound} is at most the order of the first term.

Showing \ref{item:regret-ell} is order \Cref{eq:regret-episode} then follows from writing $e_{\ell}(r) - s_{\ell}(r)$ as the sum of effective phase lengths $T(i,r,\ell)$ (see \Cref{defn:phases}) of the phases $[\tau_i,\tau_{i+1})$ intersecting block $[s_{\ell}(r),e_{\ell}(r)]$, and using the sub-additivity of $x\mapsto x^{\frac{1+d}{2+d}}$.

\paragraph{$\bullet$ Bounding the Regret of the Last Master Arm $a_r(B)$ to the Last Safe Arm $a_t^\sharp$.}

Before we proceed, we first convert $\sum_{t=s_{\ell}(r)}^{e_{\ell}(r)} \delta_t(a_t^\sharp,a_r(B))\cdot\pmb{1}\{X_t\in B\}$ into a more convenient form in terms of the masses $\mu(B)$. By concentration \Cref{concentration:relative} of \Cref{prop:concentration}, we have
\begin{align*}
	\sum_{t=s_{\ell}(r)}^{e_{\ell}(r)} \delta_t(a_t^\sharp,a_r(B))\cdot\pmb{1}\{X_t\in B\} &\leq \sum_{t=s_{\ell}(r)}^{e_{\ell}(r)} \delta_t(a_t^{\sharp},a_r(B))\cdot \mu(B) \\
											       &\qquad + c_2\left( \log(T) + \sqrt{\log(T) (e_{\ell}(r) - s_{\ell}(r))\cdot \mu(B)}\right).
								       %&\leq \sum_t \delta_t(a_t^\sharp,a_r(B))\cdot\mu(B) + \sum_t \delta_t(a_t^\sharp)\mu(B) + \cdots
\end{align*}
We first show the two concentration error terms on the R.H.S. above are negligible with respect to the desired bound \Cref{eq:regret-episode}. The $\log(T)$ term is clearly of the right order, whereas the other term is handled by \Cref{lem:bias-variance}, by which
\[
	\sqrt{(e_{\ell}(r) - s_{\ell}(r))\cdot \mu(B)}\leq c_7 (e_{\ell}(r) - s_{\ell}(r))^{\frac{1}{2+d}} \cdot K^{\frac{d/2}{2+d}} \leq c_{23} \cdot r^d \cdot (e_{\ell}(r) - s_{\ell}(r))^{\frac{1+d}{2+d}} \cdot K^{\frac{1}{2+d}}.
\]
By similar arguments to before, where we write $e_{\ell}(r) - s_{\ell}(r) = \sum_{i \in \Phases(\ell,r)} T(i,r,\ell)$ and use the sub-additivity of the function $x\mapsto x^{\frac{1+d}{2+d}}$, the above is of the right order w.r.t. \Cref{eq:regret-episode}.

Thus, going forward, by the strong density assumption (\Cref{assumption:strong-density}) and in light of \Cref{eq:regret-episode}, it suffices to show for any fixed arm $a\in [K]$ (which we will take to be $a_r(B)$ in the end):
\begin{equation}\label{eq:global-bound}
	\sum_{t=s_{\ell}(r)}^{e_{\ell}(r) \land E(B,a)} \delta_t(a_t^\sharp,a) \lesssim \sum_{i\in\Phases(\ell,r)} (\tau_{i+1}-\tau_i)^{\frac{1+d}{2+d}} K^{\frac{1}{2+d}},
\end{equation}
where $E(B,a)$ is the last round in block $[s_{\ell}(r),e_{\ell}(r)]$ for which $a\in \Amaster(B)$.
%Note also that the quantities in the above display no longer depend on bin $B$, but only on the level $r$ and block $[s_{\ell}(r),e_{\ell}(r)]$. As such, we aim to bound the aggregate gap across all bins $\sum_{t=s_{\ell}(r)}^{e_{\ell}(r)} \delta_t(\atsharp,a)$, which will in term bound the regret of the last master arm $a_r(B)$ in all bins $B$ at level $r$ by the above concentration arguments.

This aggregate gap is the most difficult quantity to bound since arm $a_t^\sharp$ may have been evicted from $\Amaster(B)$ before round $t$ and, thus, we rely on our replay scheduling (\Cref{line:add-replay} of \Cref{base-alg}) to bound the regret incurred while waiting to detect a large aggregate value of $\delta_t(a_t^\sharp,a)$.

In an abuse of notation, we'll conflate $e_{\ell}(r)$ with the {\em anticipated block end time} based on $s_{\ell}(r)$; that is, the end block time if no episode restart occurs within the block. Now, for each phase $[\tau_i,\tau_{i+1})$ which intersects the block $[s_{\ell}(r),e_{\ell}(r)]$, our strategy will be to map out in time the {\em local bad segments} or subintervals of $[\tau_i,\tau_{i+1})$ where a fixed arm $a$ incurs significant regret to arm $a_t^\sharp$ in bin $B$, roughly in the sense of \Cref{eq:bad-arm}. The argument will conclude by arguing that a well-timed replay is scheduled w.h.p. to detect some local bad segment in $B$, before too many elapse.

%The technical difficulty here is that $a_r(B)$ is a random variable which depends on all the randomness up to time $e_{\ell}(r)$.
In particular, conditional on just the block start time $s_{\ell}(r)$, we define the bad segments for a fixed arm $a$ and then argue that if too many bad segments w.r.t. $a$ elapse in the block's anticipated set of rounds $[s_{\ell}(r),e_{\ell}(r)]$, then arm $a$ will be evicted in bin $B$. Crucially, this will hold uniformly over all arms $a$ and, in particular, for arm $a \doteq a_r(B)$. This will then bound the regret of $a_r(B)$ in block $[s_{\ell}(r),e_{\ell}(r)]$ in the sense of \Cref{eq:global-bound}.

\begin{note*}
	Going forward, we will drop the dependence on the level $r$, block $[s_{\ell}(r),e_{\ell}(r)]$, and episode $[t_{\ell},t_{\ell+1})$ in certain definitions as they are fixed momentarily. Recall from \Cref{app:bounding-last-safe} that $\atsharp$ is the local last safe arm of the last round $t_i(B)\in [\tau_i,\tau_{i+1})$ such that $X_{t_i(B)} \in B$ where $B$ is the bin at level $r_{\tau_{i+1}-\tau_i}$ containing $X_t$ (see \Cref{defn:last-safe-arm}).
\end{note*}

We first introduce the notion of a {\em bad segment} of rounds which is a minimal period where large regret in the sense of \Cref{eq:bad-arm} within bin $B$ is detectable by a well-timed replay. %Note that the definition does not depend on bin $B$ except through the arms $\a$, and only depends on the gaps at observed contexts $\delta_t(a',a) = f_t^{a'}(X_t) - f_t^a(X_t)$.
%This will be related to \Cref{eq:bad-arm} and our eviction criterion \Cref{eq:elim}.

\begin{defn}\label{defn:bad-segment}
	Fix an arm $a$ and $s_{\ell}(r)$, and let $[\tau_i,\tau_{i+1})$ be any phase intersecting $[s_{\ell}(r),e_{\ell}(r)]$. Define rounds $s_{i,0}(a),s_{i,1}(a),s_{i,2}(a)\ldots\in [t_{\ell} \vee \tau_{i}, \tau_{i+1})$ recursively as follows: let $s_{i,0}(a) \doteq t_{\ell} \vee \tau_{i}$ and define $s_{i,j}(a)$ as the smallest round in $(s_{i,j-1}(a),\tau_{i+1}\land e_{\ell}(r))$ such that arm $a$ satisfies for some fixed $c_{21}>0$:
		\begin{equation}\label{eq:segment}
			%\sum_{t = s_{i,j-1}(a)}^{s_{i,j}(a)} \ol{\delta}^B_{t}(a_i^\sharp(B),a) \geq c_4\log(T) \cdot (s_{i,j}(a)-s_{i,j}(a))^{\frac{1+d}{2+d}} \cdot K^{\frac{1}{2+d}}.
			\sum_{t = s_{i,j-1}(a)}^{s_{i,j}(a)} {\delta}_{t}(\atsharp,a) \geq c_{24} \log(T) \cdot (s_{i,j}(a)-s_{i,j-1}(a))^{\frac{1+d}{2+d}} \cdot K^{\frac{1}{2+d}}.
		\end{equation}
		%where $B'\supseteq B$ is the bin at level $r_{s_{i,j}(a) - s_{i,j-1}(a)}$, if such a round $s_{i,j}(a)$ exists.
		Otherwise, we let the $s_{i,j}(a) \doteq \tau_{i+1}-1$. We refer to the interval $[s_{i,j-1}(a),s_{i,j}(a))$ as a {\bf bad segment}. We call $[s_{i,j-1}(a),s_{i,j}(a))$ a {\bf proper bad segment} if \Cref{eq:segment} above holds. %In the above, recall that $\ol{\delta}_t^B(\aisharp(B),a) \doteq \max_{x\in B} f_t^{\aisharp(B)}(x) - f_t^a(x)$.
\end{defn}

It will in fact suffice to constrain our attention to proper bad segments, since non-proper bad segments $[s_{i,j-1}(a),s_{i,j}(a))$ (where $s_{i,j}(a)=\tau_{i+1}-1$ and \Cref{eq:segment} is reversed) will be negligible in the regret analysis since there is at most one non-proper bad segment per phase $[\tau_i,\tau_{i+1})$ (i.e., the regret of each non-proper bad segment is at most the R.H.S. of \Cref{eq:global-bound}). %In what follows, we will use $B'\supseteq B$ to denote the bin at level $r_{s_{i,j}(a) - s_{i,j-1}(a)}$ where $[s_{i,j-1}(a),s_{i,j}(a))$ will be some proper bad segment, known from context.

	We first establish some elementary facts about proper bad segments which will later serve useful in analyzing the detectability of \Cref{eq:bad-arm} along such segments of time.

	%Next, our goal is to relate our concentration bound \Cref{eq:error-bound} to \Cref{eq:segment}, giving us control of the behavior of \cmeta on proper bad segments. But, even before this, we establish an elementary lemma.

	\begin{lemma}\label{lem:detect}
		Let $[s_{i,j}(a),s_{i,j+1}(a))$ be a proper bad segment defined w.r.t. arm $a$.
		%Let $\tilde{s}\in [s_{i,j}(a),s_{i,j+1}(a)]$ be the smallest round such that $r_{\tilde{s}-s_{i,j}(a)} = r_{s_{i,j+1}(a)-s_{i,j}(a)}$, i.e. $\tilde{s}=s_{i,j}(a)+2^{m(2+d)}$ where $m\in\mb{N}$ is such that $2^{-m}=r_{s_{i,j+1}(a) - s_{i,j}(a)}$.
		Let $m\in\mb{N}\cup \{0\}$ be such that $r_{s_{i,j+1}(a) -s_{i,j}(a)} = 2^{-m}$. Then, for some $c_{25}=c_{25}(d)>0$ depending on the dimension $d$:
		\begin{equation}\label{eq:relative-gap-detect}
			\sum_{t = s_{i,j+1}(a) - K2^{(m-2)(2+d)-1}}^{s_{i,j+1}(a)} {\delta}_{t}(\atsharp,a) \geq c_{25} \log(T)\cdot K^{\frac{1}{2+d}} \left( s_{i,j+1}(a) - s_{i,j}(a) \right)^{\frac{1+d}{2+d}}.
		\end{equation}
	\end{lemma}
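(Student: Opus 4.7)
\textbf{Proof plan for \Cref{lem:detect}.} The proof will combine two facts: (i) the proper bad segment inequality \Cref{eq:segment}, which lower bounds the full sum over $[s_{i,j}(a),s_{i,j+1}(a)]$, and (ii) the minimality of $s_{i,j+1}(a)$ in \Cref{defn:bad-segment}, which upper bounds any strict prefix. Subtracting the two yields a lower bound on the suffix of interest, provided the suffix length is a constant fraction of the segment length.

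Let me introduce the shorthand $N \doteq s_{i,j+1}(a)-s_{i,j}(a)$, $L \doteq K\cdot 2^{(m-2)(2+d)-1}$, and $p \doteq (1+d)/(2+d) \in [1/2,1)$. First, I would use \Cref{fact:level-bound} applied to $r_N = 2^{-m}$, giving $K\cdot 2^{(m-1)(2+d)} < N \le K\cdot 2^{m(2+d)}$. From this one reads off directly that
\[
 2^{-2(2+d)-1} \;\le\; \tfrac{L}{N} \;<\; 2^{-(2+d)-1},
\]
so $L/N$ lies in a positive range of constants depending only on $d$; in particular $L<N$, so $M' \doteq s_{i,j+1}(a) - L$ is a valid round in the open interval $(s_{i,j}(a), s_{i,j+1}(a))$ and hence (by definition of $s_{i,j+1}(a)$) also in $(s_{i,j}(a), \tau_{i+1}\wedge e_{\ell}(r))$.

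Next, because $[s_{i,j}(a), s_{i,j+1}(a))$ is a \emph{proper} bad segment, \Cref{eq:segment} gives the lower bound
\[
 \sum_{t=s_{i,j}(a)}^{s_{i,j+1}(a)} \delta_t(\atsharp,a) \;\ge\; c_{24}\log(T)\, K^{1/(2+d)}\, N^{p},
\]
and by the minimality of $s_{i,j+1}(a)$ in \Cref{defn:bad-segment}, the defining inequality of \Cref{eq:segment} \emph{fails} at $s' = M'$, so
\[
 \sum_{t=s_{i,j}(a)}^{M'} \delta_t(\atsharp,a) \;<\; c_{24}\log(T)\, K^{1/(2+d)}\, (N-L)^{p}.
\]
Subtracting,
\[
 \sum_{t=M'+1}^{s_{i,j+1}(a)} \delta_t(\atsharp,a) \;>\; c_{24}\log(T)\, K^{1/(2+d)}\, N^{p}\bigl[1-(1-L/N)^{p}\bigr].
\]

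Finally, since $p\in [1/2,1)$ is a function of $d$ only and $L/N \ge 2^{-2(2+d)-1}$ is bounded away from $0$ by a constant depending only on $d$, the factor $1-(1-L/N)^{p}$ is lower bounded by some $\beta(d)>0$. Absorbing $\beta(d)$ and the harmless off-by-one between summing from $M'+1$ vs.\ $M' = s_{i,j+1}(a)-L$ (the missing term $\delta_{M'}(\atsharp,a)\in[-1,1]$ is dominated by the main bound since we are in the regime $N \ge K$, so $K^{1/(2+d)}N^{p}\ge 1$) into the constant $c_{25}$ yields \Cref{eq:relative-gap-detect}. The main bookkeeping obstacle is step (i)---carefully reading off from \Cref{fact:level-bound} that $L$ is a constant fraction of $N$, which is exactly what makes the subtraction nontrivial; once $L/N$ is bounded below by a constant, the rest is elementary.
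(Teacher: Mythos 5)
Your proof is correct and follows essentially the same route as the paper's: lower bound the full aggregate relative gap over $[s_{i,j}(a),s_{i,j+1}(a)]$ via properness of the segment, upper bound the prefix up to $\tilde s = s_{i,j+1}(a) - K2^{(m-2)(2+d)-1}$ via minimality of $s_{i,j+1}(a)$, subtract, and use \Cref{fact:level-bound} to show $L/N$ is bounded below by a dimension-only constant so that $1-(1-L/N)^{p}$ is bounded away from $0$. You are slightly more careful than the paper about the off-by-one at $t=\tilde s$ (absorbing the single extra $\delta_{\tilde s}\in[-1,1]$ term using $K^{1/(2+d)}N^{(1+d)/(2+d)}\ge 1$), though you could also note, as the paper does, that one may assume $N\ge 4K$ (for $c_{24}$ large) so that $m\geq 1$ and $L$ is well-defined.
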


	\begin{proof}
		First, we may assume $s_{i,j+1}(a) - s_{i,j}(a) \geq 4\cdot K$ by choosing $c_{24}$ in \Cref{eq:segment} large enough (this will make $m-1\geq 0$).

		First, observe by the definition of $r_{s_{i,j+1}(a) - s_{i,j}(a)}$ (\Cref{note:level}) that
		\begin{equation}\label{eq:proper-length-bound}
			K 2^{(m-1)(2+d)} \leq s_{i,j+1}(a) - s_{i,j}(a) < K 2^{m(2+d)}.
		\end{equation}
		Now, let $\tilde{s} \doteq s_{i,j+1}(a) - K2^{(m-2)(2+d)-1}$. Then, we have by \Cref{eq:segment} in the construction of the $s_{i,j}(a)$'s (\Cref{defn:bad-segment}) that:
		\begin{align*}
			\sum_{t=\tilde{s}}^{s_{i,j+1}(a)} {\delta}_{t}(\atsharp,a)  &= %\sum_{t=s_{i,j}(a)}^{s_{i,j+1}(a)} \delta_t(a_i^\sharp(B))
			\sum_{t = s_{i,j}(a) }^{s_{i,j+1}(a)}  {\delta}_{t}(\atsharp,a)
			%\sum_{t = s_{i,j}(a)}^{\tilde{s} - 1} \delta_{t}(a_i^\sharp(B))
			- \sum_{t = s_{i,j}(a) }^{\tilde{s}} {\delta}_t(\atsharp,a) \\
											      &\geq c_{24} \log(T) K^{\frac{1}{2+d}}\left( (s_{i,j+1}(a) - s_{i,j}(a))^{\frac{1+d}{2+d}} - (\tilde{s} - s_{i,j}(a))^{\frac{1+d}{2+d}}\right) \\
		\end{align*}
		Let $m_{i,j}(a) \doteq s_{i,j+1}(a) - s_{i,j}(a)$. Then, we have by \Cref{eq:proper-length-bound} that:
		\[
			m_{i,j}(a) \leq K 2^{m(2+d)} \implies \tilde{s} - s_{i,j}(a) = m_{i,j}(a) - K 2^{(m-2)(2+d)-1} \leq m_{i,j}(a) \cdot (1-2^{-2(2+d)-1}).
		\]
		Plugging this into our earlier bound the constants in our updated lower bound scale like:
		\[
			1 - \left(1-\frac{1}{2^{2(2+d)+1}}\right)^{\frac{1+d}{2+d}} > 0.
		\]
		But, this last term is positive for all $d\in \mb{N}\cup\{0\}$ and only depends on $d$.
	\end{proof}

	\begin{lemma}[Aggregate Gap Dominates Concentration Error]\label{lem:bad-segment-error}
		Fix a bin $B$ at level $r$. Let $[s_{i,j}(a),s_{i,j+1}(a))$ be a proper bad segment and let $B'\supseteq B$ be the bin at level $r_{s_{i,j+1}(a) - \tilde{s}}$ where $\tilde{s} \doteq s_{i,j+1}(a) - K 2^{(m-2)(2+1)-1}$ is as in \Cref{lem:detect}. Then, for some $c_{26}>0$:
		\begin{align*}
			\sum_{t = \tilde{s}}^{s_{i,j+1}(a)} {\delta}_t(\atsharp,a) \cdot \pmb{1}\{X_t\in B'\}  &\geq c_{26} \left( \log(T)\sqrt{K\cdot ( n_{B'}([\tilde{s},s_{i,j+1}(a)]) \vee K)} \right. \\
													       &\qquad + \left. r(B')\cdot n_{B'}([\tilde{s},s_{i,j+1}(a)]) \right).
		\end{align*}
	\end{lemma}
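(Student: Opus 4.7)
The strategy is to first pass from the pointwise cumulative gap bound of \Cref{lem:detect} to its restriction to $B'$ via concentration of context masses, and then verify the resulting lower bound dominates both error terms on the R.H.S. by comparing scales through the strong density assumption (\Cref{assumption:strong-density}) and the level-interval correspondence (\Cref{fact:level-bound}). Throughout, we exploit the fact that the intervals $[s_{i,j}(a), s_{i,j+1}(a)]$ and $[\tilde{s}, s_{i,j+1}(a)]$ have lengths of the same order (differing by a $d$-only constant factor, by the definition of $\tilde{s}$ and \Cref{fact:level-bound}), so all scales expressed in terms of one interval length can be freely transported to the other.

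First, \Cref{lem:detect} gives the cumulative gap bound
\[
    \sum_{t=\tilde{s}}^{s_{i,j+1}(a)} \delta_t(\atsharp,a) \geq c_{25} \log(T) \cdot K^{\frac{1}{2+d}} \cdot (s_{i,j+1}(a) - \tilde{s})^{\frac{1+d}{2+d}},
\]
up to absorbing the $d$-dependent constant above. Next, applying \Cref{concentration:absolute} with $W_t = \delta_t(\atsharp, a)$ yields
\[
    \sum_{t=\tilde{s}}^{s_{i,j+1}(a)} \delta_t(\atsharp,a) \mathbf{1}\{X_t \in B'\} \geq \mu(B') \sum_{t=\tilde{s}}^{s_{i,j+1}(a)} \delta_t(\atsharp,a) - c_2\left(\log T + \sqrt{\log T \cdot \mu(B') \cdot (s_{i,j+1}(a) - \tilde{s})}\right).
\]
By \Cref{fact:level-bound} applied to the length $s_{i,j+1}(a) - \tilde{s}$ we get $r(B') \asymp (K/(s_{i,j+1}(a) - \tilde{s}))^{1/(2+d)}$, so strong density gives $\mu(B') \asymp r(B')^d \asymp (K/(s_{i,j+1}(a) - \tilde{s}))^{d/(2+d)}$. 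Plugging these scales in, the leading term scales as $\log T \cdot K^{(1+d)/(2+d)} \cdot (s_{i,j+1}(a) - \tilde{s})^{1/(2+d)}$, while the concentration error scales as $\sqrt{\log T} \cdot K^{d/(2(2+d))} \cdot (s_{i,j+1}(a) - \tilde{s})^{1/(2+d)}$, smaller by a factor $\sqrt{K \log T} \geq 1$.

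It remains to check the R.H.S. of the lemma is of the same order $K^{(1+d)/(2+d)} (s_{i,j+1}(a) - \tilde{s})^{1/(2+d)}$ (sans the $\log T$). Using \Cref{concentration:counts} to approximate $n_{B'}([\tilde{s}, s_{i,j+1}(a)]) \asymp \mu(B') \cdot (s_{i,j+1}(a) - \tilde{s})$ up to lower-order terms, a direct calculation (as done at length in the proof of \Cref{prop:sanity} and in the preliminaries to this proof) gives both $r(B') \cdot n_{B'}$ and $\sqrt{K \cdot n_{B'}}$ of order $K^{(1+d)/(2+d)}(s_{i,j+1}(a) - \tilde{s})^{1/(2+d)}$. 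The $\vee K$ safeguard inside $\sqrt{K(n_{B'} \vee K)}$ contributes at most $K$, which is itself absorbed into this scale since proper bad segments satisfy \Cref{eq:segment} and hence have length at least of order $K$ (indeed $\log T \cdot K$, noting $\delta_t \in [0,1]$).

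The main obstacle is the case where $\mu(B') \cdot (s_{i,j+1}(a) - \tilde{s})$ is not much larger than the concentration corrections in \Cref{lem:concentration-covariate}, meaning $n_{B'}$ may fluctuate substantially around its mean. This is precisely why the $\log(T)$ factor in \Cref{lem:detect} is crucial: it provides the extra $\sqrt{\log T}$ slack needed to absorb the concentration error in Step 2 and to dominate the $\log(T) \sqrt{K \cdot K} = K\log T$ term when $n_{B'} \leq K$. Taking $c_{26}$ small enough relative to $c_{25}, c_2, c_d, C_d$ closes the argument.
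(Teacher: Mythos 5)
Your proposal is correct and follows essentially the same route as the paper's proof: invoke \Cref{lem:detect}, apply the concentration bound of \Cref{lem:concentration-covariate} (the paper uses \Cref{concentration:relative} rather than \Cref{concentration:absolute}, since the relative gap $\delta_t(\atsharp,a)$ rather than the worst gap $\delta_t(a)$ is involved, but the argument is the same $W_s$ trick), then compare scales via strong density and \Cref{fact:level-bound}, and close by noting that proper bad segments are long enough — via \Cref{eq:segment} — to absorb the $\vee K$ safeguard and the lower-order concentration remainders. The paper packages the scale comparison through \Cref{lem:bias-variance} and writes out the residual error terms explicitly (reducing to the inequality \Cref{eq:bad-segment-bias-variance}), whereas you summarize the same arithmetic at a higher level; the substance matches.
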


	\begin{proof}
		We have via concentration (\Cref{concentration:relative} of \Cref{lem:concentration-covariate}), \Cref{lem:detect}, and the strong density assumption (\Cref{assumption:strong-density}):
		\begin{align*}
			%n_{B'}([\tilde{s},s_{i,j+1}(a)]) &=
			\sum_{t = \tilde{s}}^{s_{i,j+1}(a)} {\delta}_t(\atsharp,a) \cdot \pmb{1}\{X_t\in B'\} &\geq \sum_{t = \tilde{s}}^{s_{i,j+1}(a)} {\delta}_t(\atsharp,a) \cdot \mu(B') \\
													      &\qquad - c_2 \left( \log(T) + \sqrt{\log(T) (s_{i,j+1}(a) - \tilde{s} )\cdot \mu(B')} \right)\\
							 &\geq c_{25}\log(T) (s_{i,j+1}(a) - s_{i,j}(a))^{\frac{1}{2+d}}\cdot K^{\frac{1+d}{2+d}} \\
							 &\qquad - c_2 \left( \log(T) + \sqrt{\log(T) (s_{i,j+1}(a) - \tilde{s} ) \cdot \mu(B')} \right).
							  %&\geq \sqrt{K\cdot n_{B'}([s_{i,j}(a),s_{i,j+1}(a)])},
		\end{align*}
		Now, the first term on the final R.H.S. above dominates the other two terms for large enough $c_{25}$ and via strong density assumption (\Cref{assumption:strong-density}).

		Thus, it suffices to show
		\begin{align}\label{eq:bad-segment-bias-variance}
			c_{25} &\log(T) (s_{i,j+1}(a) - s_{i,j}(a))^{\frac{1}{2+d}} \cdot K^{\frac{1+d}{2+d}} \geq \nonumber\\
			c_{27} &\left( \log(T)\sqrt{K\cdot ( n_{B'}([\tilde{s},s_{i,j+1}(a)]) \vee K)} + r(B')\cdot n_{B'}([\tilde{s},s_{i,j+1}(a)])\right). \numberthis
		\end{align}
		We first upper bound the ``variance'' term, or the first term on the R.H.S. above. Let $W \doteq s_{i,j+1}(a) - \tilde{s}$. By \Cref{lem:bias-variance}, we have
		\begin{align*}
			\log(T) \sqrt{K\cdot ( n_{B'}([\tilde{s},s_{i,j+1}(a)]) \vee K)} &\leq  c_8 \left( \log(T)\cdot W^{\frac{1}{2+d}} \cdot K^{\frac{1+d}{2+d}} + \log^{3/2}(T) + K\log(T) \right. \\
		&\qquad + \left. \log^{5/4}(T) \cdot K^{\frac{1+3d/4}{2+d}} \cdot W^{\frac{1/2}{2+d}} \right).
		\end{align*}
		Now, we also have %since $s_{i,j+1}(a) - s_{i,j}(a) \geq W$,
		\begin{align*}
			s_{i,j+1}(a) - s_{i,j}(a) \geq W \implies \log(T)\cdot (s_{i,j+1}(a) - s_{i,j}(a))^{\frac{1}{2+d}} \cdot K^{\frac{1+d}{2+d}} &\geq \log(T)\cdot W^{\frac{1}{2+d}}\cdot K^{\frac{1+d}{2+d}}\\
			%(s_{i,j+1}(a) - s_{i,j}(a))^{\frac{1}{2+d}} \geq \log(T) \implies \log(T)\cdot (s_{i,j+1}(a) - s_{i,j}(a))^{\frac{1}{2+d}}  &\geq \log^{3/2}(T)
		\end{align*}
		Next, we note that by the definition of a proper bad segment (\Cref{eq:segment} in \Cref{defn:bad-segment}) that
		\[
			2\cdot (s_{i,j+1}(a) - s_{i,j}(a) ) \geq \sum_{t=s_{i,j}(a)}^{s_{i,j}(a)} \delta_t(\atsharp , a) \geq c_{24} \log(T) \cdot (s_{i,j+1}(a) - s_{i,j}(a))^{\frac{1+d}{2+d}}\cdot K^{\frac{1}{2+d}}.
		\]
		This implies $(s_{i,j+1}(a) - s_{i,j}(a) )^{\frac{1}{2+d}} \geq \frac{c_{24}}{2} \log(T)$. By similar reasoning, we have $s_{i,j+1}(a) - s_{i,j}(a) \geq c_{28} K$. From this, we conclude for $c_{24}>0$ large enough:
		\[
			\log(T)\cdot (s_{i,j+1}(a) - s_{i,j}(a))^{\frac{1}{2+d}}\cdot K^{\frac{1+d}{2+d}}  \geq \log^{3/2}(T) + K\log(T) + \log^{5/4}(T) \cdot K^{\frac{1+3d/4}{2+d}}\cdot W^{\frac{1/2}{2+d}}.
		\]
		Thus, \Cref{eq:bad-segment-bias-variance} is shown.

		%We also have that the $c_{25} \log(T) (s_{i,j+1}(a) - s_{i,j}(a))^{\frac{1}{2+d}} \cdot K^{\frac{1+d}{2+d}}$ dominates $\sqrt{K\cdot n_{B'}([\tilde{s},s_{i,j+1}(a)])}$ by \Cref{lem:bias-variance} and again choosing $c_{24}$ large enough. Since $n_{B'}([\tilde{s},s_{i,j+1}(a)))$ is an upper bound on the above L.H.S., it also necessarily follows that $n_{B'}([\tilde{s},s_{i,j+1}(a)]) \geq K$. Finally, $c_{25} \log(T) (s_{i,j+1}(a) - s_{i,j}(a))^{\frac{1}{2+d}} \cdot K^{\frac{1+d}{2+d}}$ also dominates $r(B')\cdot n_{B'}([\tilde{s},s_{i,j+1}(a)])$ by similar concentration arguments.
		%the strong density assumption, and then applying \Cref{lem:concentration-covariate} on $n_{B'}(\cdot)$ and then using the below lemma on $n_{B'}(\cdot)$ dominating concentration error terms.
	\end{proof}

	Now, we define a well-timed or {\em perfect replay} which, if scheduled, will detect the badness of arm $a$ (in the sense of \Cref{eq:elim}) in bin $B$ over a proper bad segment $[s_{i,j}(a),s_{i,j+1}(a))$. The simplest such perfect replay is one which is scheduled directly from rounds $s_{i,j}(a)$ to $s_{i,j+1}(a)$. We in fact show there is a spectrum of replays (of size the length of the segment $\Omega(s_{i,j+1}(a) - s_{i,j}(a))$) each of which can detect arm $a$ is bad in bin $B$, possibly by using a larger ancestor bin $B' \supseteq B$.

	\begin{defn}[Perfect Replay]\label{defn:perfect}
		For a fixed proper bad segment $[s_{i,j}(a),s_{i,j+1}(a))$, define a perfect replay as a $\base(\tstart,M)$ with $\tstart \in [s_{i,j+1}(a)-K2^{(m-2)(2+d)} + 1,s_{i,j+1}(a) - K2^{(m-2)(2+d)-1}]$ (where $m\in\mb{N}\cup\{0\}$ is as in \Cref{lem:detect}) and $\tstart + M \geq s_{i,j+1}(a)$.
	\end{defn}

	The following proposition analyzes the behavior of a perfect replay and shows, if scheduled, it will in fact evict arm $a$ from $\mc{A}(B)$ within a proper bad segment $[s_{i,j}(a),s_{i,j+1}(a))$.

	\begin{prop}[Perfect Replay Evicts Bad Arm in Proper Bad Segment]\label{prop:behavior}
		Suppose event $\mc{E}_1\cap \mc{E}_2$ holds (see \Cref{note:good-event}). Fix a bin $B$ at level $r$. Let $[s_{i,j}(a),s_{i,j+1}(a))$ be a proper bad segment defined with respect to arm $a$. Let $\base(\tstart,M)$ be a perfect replay as defined above which becomes active at $\tstart$ (i.e., $Z_{\tstart,M}=1$) for a fixed integer $M \geq s_{i,j+1}(a) - s_{i,j}(a)$. Then:
		\begin{enumerate}[(i)]
			\item Let $B'$ be the bin at level $r_{\tilde{s} - s_{i,j}(a)}$ where $\tilde{s} \doteq s_{i,j+1}(a) - K2^{(m-2)(2+d)-1}$ ($m\in\mb{N}\cup\{0\}$ is as in \Cref{lem:detect}), as in \Cref{lem:bad-segment-error}. Then, there is a ``safe arm'' $\asharp(B')$ which will not be evicted from $\mc{A}(B')$ by $\base(\tstart,M)$ (or any of its children) before round $s_{i,j+1}(a)+1$. \label{2a}
			\item If $a \in \mc{A}_t$ for all rounds $t\in [\tilde{s},s_{i,j+1}(a))$ where $X_t\in B$, w then arm $a$ will be excluded from $\mc{A}(B)$ by round $s_{i,j+1}(a)$. \label{2b}
		\end{enumerate}
	\end{prop}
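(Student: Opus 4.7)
The plan is to prove parts (i) and (ii) separately, in both cases by combining the concentration bound \Cref{prop:concentration} for the gap estimator with the aggregate-gap bound \Cref{lem:bad-segment-error} for proper bad segments. The objects to track carefully are the bin at which the eviction criterion \Cref{eq:elim} is inspected by the base algorithm (\Cref{line:evict-At} of \Cref{base-alg}) and the propagation of evictions to descendant bins via \Cref{line:ancestor=discard}.

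\textbf{Plan for part (i).} I would take $\asharp(B')$ to be any arm that is safe at $X_{s_{i,j+1}(a)}$ over $[\tau_i, s_{i,j+1}(a)]$ in the sense of \Cref{defn:sig-shift}; such an arm exists because $s_{i,j+1}(a) < \tau_{i+1}$ precedes the next significant shift. The safety condition is precisely the reverse of \Cref{eq:bad-arm}, uniformly over every bin $B''$ containing $X_{s_{i,j+1}(a)}$ and every sub-interval $[s_1, s_2] \subseteq [\tau_i, s_{i,j+1}(a)]$. Combining with \Cref{prop:concentration} and choosing the constant $C_0$ in \Cref{eq:elim} large enough, the empirical aggregate $\sum \hat\delta^{B''}(\cdot, \asharp(B'))$ stays strictly below the eviction threshold. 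Hence $\asharp(B')$ is never directly removed from $\mc{A}(B')$, and the ancestor-discarding rule on \Cref{line:ancestor=discard} also cannot evict it because the same concentration argument applies to every ancestor bin.

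\textbf{Plan for part (ii).} The strategy is to force the eviction criterion \Cref{eq:elim} to fire for arm $a$ by round $s_{i,j+1}(a)$ on the sub-interval $[s_1,s_2] = [\tilde{s}, s_{i,j+1}(a)]$ with bin $B'$, so that the direct eviction triggered on \Cref{line:evict-At} removes $a$ from $\mc{A}(B')$ and hence from $\mc{A}(B)$ via the ancestor-discarding rule. The range in \Cref{defn:perfect} gives $\tstart \leq \tilde{s}$ and $\tstart + M \geq s_{i,j+1}(a)$, so the sub-interval is in the allowed window. I would take $a' = \asharp(B')$ in the max of \Cref{eq:elim} and write $\delta_t(\asharp(B'), a) = \delta_t(a) - \delta_t(\asharp(B')) \geq \delta_t(\atsharp, a) - \delta_t(\asharp(B'))$. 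Summing, the first term is lower-bounded by \Cref{lem:bad-segment-error}, while the second is upper-bounded by the safety of $\asharp(B')$ established in part (i), giving a large lower bound on $\sum_t \delta_t(\asharp(B'), a) \pmb{1}\{X_t \in B'\}$. The hypothesis that $a \in \mc{A}_t$ whenever $X_t \in B$, combined with the inheritance $\mc{A}(B) \subseteq \mc{A}(B')$, yields that both $a$ and $\asharp(B')$ lie in $\mc{A}_s$ on the relevant rounds with $X_s \in B'$, so $\hat\delta^{B'}_s(\asharp(B'), a)$ is conditionally unbiased. Applying \Cref{prop:concentration} promotes this to a matching lower bound on the empirical aggregate, and \Cref{eq:elim} triggers.

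\textbf{Main obstacle.} The principal difficulty is reconciling the various levels involved: $\atsharp$ is tied to the phase-level bin at $X_t$ (\Cref{defn:last-safe-arm}), the eviction check uses the critical-level bin $B'$ coming from \Cref{lem:bad-segment-error}, and the replay evolves through a sequence of levels $r_{t-\tstart}$. The key reduction is \Cref{lem:relate-levels} (and the passing observation in \Cref{rmk:critical-level}), which lets one transfer significant regret at any experienced bin to significant regret at the critical level. One must then verify that the scheduling window in \Cref{defn:perfect} indeed forces the replay's level at round $s_{i,j+1}(a)$ to be coarse enough (comparable to the level of $B'$) for the algorithm to actually consider the pair $([\tilde{s}, s_{i,j+1}(a)], B')$ in its \Cref{line:evict-At} check, which is exactly the purpose of the range $\tstart \in [s_{i,j+1}(a) - K 2^{(m-2)(2+d)} + 1, \tilde{s}]$.
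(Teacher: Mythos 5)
Your plan is essentially the paper's proof: define a safe arm for $B'$ via \Cref{defn:sig-shift}, argue it survives eviction (part~(i)); then, for part~(ii), use the decomposition $\delta_t(\asharp(B'),a)=\delta_t(\atsharp,a)-\delta_t(\asharp(B'))$, lower-bound the first summand by \Cref{lem:bad-segment-error}, upper-bound the second by the safety of $\asharp(B')$, and apply \Cref{prop:concentration} to trigger \Cref{eq:elim}. You have also correctly identified that the level-matching forced by the window in \Cref{defn:perfect} is the linchpin, and that without it the estimator's unbiasedness in $B'$ cannot be guaranteed; this is exactly the closing argument of the paper's proof.

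There is, however, a small but genuine gap in your choice of safe arm in part~(i). You propose taking $\asharp(B')$ to be ``any arm safe at $X_{s_{i,j+1}(a)}$'' over the phase so far. But \Cref{defn:sig-shift} ties safety to the bins containing the chosen context, and the proper bad segment $[s_{i,j}(a),s_{i,j+1}(a))$ is defined purely in time (\Cref{eq:segment} has no bin indicator), so there is no guarantee that $X_{s_{i,j+1}(a)}\in B'$. If $X_{s_{i,j+1}(a)}$ lies outside $B'$, the safety of your $\asharp(B')$ says nothing about $B'$ and its ancestors, and the eviction-safety claim fails. The fix is what the paper does: take $t_i(B')$ to be the last round in $[\tilde s, s_{i,j+1}(a)]$ with $X_{t_i(B')}\in B'$ (this context exists whenever $n_{B'}>0$, which is the only case where the eviction criterion could fire anyway), and let $\asharp(B')$ be the arm safe at that context over the relevant sub-interval. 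With this modification, and with the level-matching argument made explicit (the perfect replay's level never drops below $r(B')$, so $\mc{A}_s$ at rounds with $X_s\in B'$ is the arm set of an ancestor of $B'$, which contains $\mc{A}(B)$ by \Cref{line:ancestor=discard}), your plan matches the paper's.
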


	\begin{proof}
		%Suppose event $\mc{E}_1$ (i.e., our concentration bound \Cref{eq:error-bound} holds).
		For \ref{2a}, we can define the ``safe arm'' $\asharp(B')$ in a similar fashion to how $\atsharp$ was defined. Let $t_i(B')$ be the last round in $[\tilde{s},s_{i,j+1}(a)]$ such that $X_{t_i(B')} \in B'$. Then, since $s_{i,j+1}(a) < \tau_{i+1}$, we have that at round $t_i(B')$, there is a safe arm $\asharp(B')$ which does not satisfy \Cref{eq:bad-arm} for any bin $B''$ intersecting $B'$ and interval of rounds $I \subseteq [\tilde{s},s_{i,j+1}(a)]$. Once $\base(\tstart,M)$ is scheduled, it (or any of its children) cannot evict arm $\asharp(B')$ from $B'$ as doing so would imply it has significant regret in some bin intersecting $B'$ (following the same calculations as in \Cref{lem:counting-eps}).

		%For \ref{2a}, if $a_i^\sharp(B)$ is evicted over $[s_1,s_2]\subseteq [s_{i,j}(a),s_{i,j+1}(a)]$ from $\mc{A}(B')$ for bin $B'\supseteq B$ at level $r_{s_2-s_1}$ by \Cref{line:evict-At} of \Cref{base-alg}, then $a_i^\sharp(B)$ incurs significant regret in bin $B'$ over $[s_1,s_2]$ (following the same calculations as in \Cref{lem:counting-eps}). This is a contradiction to the definition of the last safe arm $a_i^\sharp(B)$ (\Cref{defn:last-safe-arm}). This shows \ref{2a}.

		We next turn to \ref{2b}. We first suppose that arms $a$ is active in bin $B'$ from rounds $\tilde{s}$ to $s_{i,j+1}(a)$ (we'll carefully argue later this is indeed the case).
		We first observe $\mb{E}[\hat{\delta}_t^B(\asharp(B),a) \mid \mc{F}_{t-1}] = \delta_t(a_i^\sharp(B),a)$ for any round $t\in [\tilde{s},s_{i,j+1}(a)]$ such that $X_t\in B'$. %Let $B'\supseteq B$ be the bin at level $r_{s_{i,j+1}(a) - \tilde{s}}$.
		We next observe that:
		\begin{equation}\label{eq:reg-bin-triangle}
			\sum_{t=\tilde{s}}^{s_{i,j+1}(a)} \delta_t(\asharp(B'),a) \cdot \pmb{1}\{X_t \in B'\} \geq \sum_{t=\tilde{s}}^{s_{i,j+1}(a)} \delta_t(\atsharp,a) \cdot \pmb{1}\{X_t \in B'\} - \sum_{t=\tilde{s}}^{s_{i,j+1}(a)} \delta_t(\asharp(B'))\cdot \pmb{1}\{X_t \in B'\}.
		\end{equation}
		By \Cref{lem:bad-segment-error}, the first term on the R.H.S. is at least
		\[
			c_{26} \left( \log(T) \sqrt{K\cdot ( n_{B'}([\tilde{s},s_{i,j+1}(a)]) \vee K)} + r(B')\cdot n_{B'}([\tilde{s},s_{i,j+1}(a)]) \right).
		\]
		Meanwhile, the second term on the R.H.S. of \Cref{eq:reg-bin-triangle} is at most the same order by the definition of $\asharp(B)$. Thus, choosing $c_{26}$ large enough gives us that
		\begin{align*}
			\sum_{t=\tilde{s}}^{s_{i,j+1}(a)} \delta_t(\asharp(B'),a)\cdot \pmb{1}\{X_t \in B'\} &\geq c_{29} \left(  \log(T) \sqrt{K\cdot ( n_{B'}([\tilde{s},s_{i,j+1}(a)]) \vee K)} \right.\\
													     &\qquad + \left. r(B')\cdot n_{B'}([\tilde{s},s_{i,j+1}(a)]) \right).
		\end{align*}
		Then, combining the above with our eviction criterion \Cref{eq:elim} and concentration \Cref{eq:error-bound}, we have that arm $a$ will be evicted in the bin $B'\supseteq B$ at level $r_{s_{i,j+1}(a) - \tilde{s}}$ by round $s_{i,j+1}(a)$.

		Finally, it remains to show that, within $\base(\tstart,M)$'s play, arm $a$ will \bld{not} be evicted in any child of $B'$ before round $s_{i,j+1}(a)$. This will follow from the fact that any perfect replay must use a level in $\mc{R}$ of size at least $r_W$. In particular, by \Cref{defn:perfect}, the starting round $\tstart$ is ``close enough'' to the critical round $s_{i,j+1}(a) - K2^{(m-2)(2+d)-1}$ so that it will not use a different level than the perfect replay which starts exactly at this critical round.

		Formally, we have that the smallest level a perfect replay can use is $r_{\tilde{W}}$ where $\tilde{W} \doteq K\cdot 2^{(m-2)(2+d)} - 1$.

		Next, note that $s_{i,j+1}(a) - \tstart \leq K\cdot 2^{(m-2)(2+d)} - 1$ and so
	\[
		\left(\frac{K}{s_{i,j+1}(a) - \tstart}\right)^{\frac{1}{2+d}} \geq \left(\frac{K}{K\cdot 2^{(m-2)(2+d)} - 1}\right)^{\frac{1}{2+d}} \geq 2^{-(m-2)}.
	\]
	Thus, $r_{\tilde{W}} \geq 2^{-(m-2)}$. On the other hand,
	\[
		\left(\frac{K}{s_{i,j+1}(a) - \tilde{s}}\right)^{\frac{1}{2+d}} = \frac{1}{2^{m-2-\frac{1}{2+d}}} \in [2^{-(m-2)},2^{-(m-3)}).
	\]
	Thus, $2^{-(m-2)}=r_{s_{i,j+1}(a) - \tilde{s}}$ is also the level used to detect that arm $a$ is bad in bin $B'$. Thus, we conclude that $r_{\tilde{W}}$ is no smaller than the level $r_{s_{i,j+1}(a) - \tilde{s}}$ used to evict arm $a$ in bin $B'$. This means arm $a$ cannot be evicted in a child of $B'$ before round $s_{i,j+1}(a)$, if $a$ is not already evicted in $B$.
\end{proof}

	Next, we show for any arm $a$ (in particular, $a=a_r(B)$), a perfect replay characterized by \Cref{defn:perfect} is scheduled with high probability if too many bad segments w.r.t. $a$ elapse, thus bounding the regret of $a$ to $a_i^\sharp(B)$ over the phases $[\tau_i,\tau_{i+1})$ intersecting block $[s_{\ell}(r),e_{\ell}(r)]$.

\subsection{Bounding the Regret of the Last Master Arm $a_r(B)$ to the Last Safe Arm $a_t^\sharp$}\label{subsec:3}

	Next, we bound the the regret of a fixed arm $a$ to $\atsharp$ over the bad segments w.r.t. $a$ in $B$. Recall from earlier \Cref{eq:global-bound} that our remaining goal is to establish the following bound for every bin $B \in \mc{T}_r$ at level $r$:
	\[
		\mb{E} \left[ \max_{a\in [K]} \sum_{t=s_{\ell}(r)}^{e_{\ell}(r) \land E(B,a)} \delta_t(\atsharp,a) \cdot \pmb{1}\{\mc{E}_1 \cap \mc{E}_2\} \right] \lesssim \sum_{i\in \Phases(\ell,r)} (\tau_{i+1} - \tau_i)^{\frac{1+d}{2+d}} \cdot K^{\frac{1}{2+d}},
	\]
	where $E(B,a)$ is the last round in block $[s_{\ell}(r),e_{\ell}(r)]$ for which $a\in \Amaster(B)$.

	Note that the bad segments (\Cref{defn:bad-segment}) are defined for a level $r$, block start time $s_{\ell}(r)$, and phase $[\tau_i,\tau_{i+1})$. In particular, they are defined independent of any choice of bin $B$ at level $r$. We'll similarly define a {\em bad round} $s(a)$ which will indicate when ``too many'' bad segments w.r.t. $a$ have elapsed, irrespective of a choice of bin $B$. Then, we'll argue that for any bin $B$ at level $r$, $E(B,a) \leq s(a)$.

	It should be understood that in what follows, we condition on the block start time $s_{\ell}(r)$. First, fix an arm $a$ and define the {\em bad round} $s(a)>s_{\ell}(r)$ as the smallest round which satisfies, for some fixed $c_{30}>0$:
		\begin{equation}\label{eq:big-segment-regret}
			\ds\sum_{(i,j)}  (s_{i,j+1}(a)-s_{i,j}(a))^{\frac{1+d}{2+d}} > c_{30} \log(T) (s(a)-t_{\ell})^{\frac{1+d}{2+d}}
		\end{equation}
		%then \cmeta would have restarted before round $s$ with probability at least $1-1/t^2$.
		where the above sum is over all pairs of indices $(i,j)\in\mb{N}\times\mb{N}$ such that $[s_{i,j}(a),s_{i,j+1}(a))$ is a proper bad segment with $s_{i,j+1}(a)<s(a)$. We will show that, for any bin $B$ at level $r$, arm $a$ is evicted from $\mc{A}(B)$ episode $\ell$ with high probability by the time the bad round $s(a)$ occurs.

		%Next, we will apply concentration over the randomization of scheduling replays on \Cref{line:add-replay} of \Cref{meta-alg}.

		For each proper bad segment $[s_{i,j}(a),s_{i,j+1}(a))$, let $\tilde{s}_{i,j}(a) \doteq s_{i,j+1}(a) - K2^{(m-2)(2+d)-1}$ denote the ``critical point'' of the bad segment as in \Cref{lem:detect} and also let $m_{i,j} \doteq 2^n$ where $n\in\mb{N}$ satisfies:
		\[
			2^n \geq s_{i,j+1}(a) - s_{i,j}(a) > 2^{n-1}.
		\]
		Next, recall that the Bernoulli $Z_{M,t}$ decides whether $\base(t,M)$ activates at round $t$ (see \Cref{line:add-replay} of \Cref{meta-alg}). If for some $t\in [\hat{s}_{i,j}(a),\tilde{s}_{i,j}(a)]$ where $\hat{s}_{i,j}(a) \doteq s_{i,j+1}(a) - K2^{(m-2)(2+d)}+1$, $Z_{m_{i,j}.t}=1$, i.e. a perfect replay is scheduled, then $a$ will be evicted from $\mc{A}(B)$ by round $s_{i,j+1}(a)$ (\Cref{prop:behavior}).

		We will show this happens with high probability via concentration on the sum $\sum_{(i,j)}\sum_t Z_{m_{i,j},t}$ where $j,i,t$ run through all $t\in [\hat{s}_{i,j}(a),\tilde{s}_{i,j}(a))$ and all proper bad segments $[s_{i,j}(a),s_{i,j+1}(a))$ with $s_{i,j+1}(a)<s(a)$. Note that these random variables, conditional on $\bld{X}_T$, depend only on the fixed arm $a$, the block start time $s_{\ell}(r)$, and the randomness of scheduling replays on \Cref{line:add-replay}. In particular, the $Z_{m_{i,j},t}$ are independent conditional on $t_{\ell}$.

		Then, a Chernoff bound over the randomization of \cmeta on \Cref{line:add-replay} of \Cref{meta-alg} conditional on $t_{\ell}$ yields
		\begin{align*}
			\mb{P}\left( \sum_{(i,j)}\sum_{t} Z_{m_{i,j},t} \leq \frac{\mb{E}[ \sum_{(i,j)}\sum_{t} Z_{m_{i,j},t}\mid s_{\ell}(r), \bld{X}_T ]}{2} \Bigg\vert s_{\ell}(r), \bld{X}_T \right) \\
			\leq \exp\left(-\frac{\mb{E}[\sum_{(i,j)}\sum_{t}  Z_{m_{i,j},t} \mid s_{\ell}(r), \bld{X}_T ]}{8}\right).
		\end{align*}
		We claim the error probability on the R.H.S. above is at most $1/T^3$. To this end, we compute:
		\begin{align*}
			\mb{E}\left[ \sum_{(i,j)}\sum_{t} Z_{m_{i,j},t} \Bigg\vert s_{\ell}(r), \bld{X}_T \right] &\geq \ds\sum_{(i,j)}  \sum_{t=\hat{s}_{i,j}(a)}^{\tilde{s}_{i,j}(a)} \left(\frac{1}{m_{i,j}}\right)^{\frac{1}{2+d}} \left(\frac{1}{t - t_{\ell}}\right)^{\frac{1+d}{2+d}}\\
														  &\geq \frac{1}{4}\ds\sum_{(i,j)} m_{i,j}^{\frac{1+d}{2+d}} \left(\frac{1}{s(a)-t_{\ell}}\right)^{\frac{1+d}{2+d}}\\
														  &\geq \frac{c_{30}}{4} \log(T),
		\end{align*}
		where the last inequality follows from \Cref{eq:big-segment-regret}. The R.H.S. above is larger than $24\log(T)$ for $c_{30}$ large enough, showing that the error probability is small. Taking a further union bound over the choice of arm $a\in[K]$ gives us that $\sum_{(i,j)}\sum_{t} Z_{m_{i,j},t} > 1$ for all choices of arm $a$ (define this as the good event $\mc{E}_3(s_{\ell}(r))$) with probability at least $1-K/T^3$.

		Recall on the event $\mc{E}_1\cap \mc{E}_2$ the concentration bounds of \Cref{prop:concentration} and \Cref{lem:concentration-covariate} hold. Then, on $\mc{E}_1\cap \mc{E}_2 \cap \mc{E}_3(s_{\ell}(r))$, we must have for each bin $B\in \mc{T}_r$ at level $r$, $E(B,a) \leq s(a)$ since otherwise $a$ would have been evicted in $\mc{A}(B)$ by some perfect replay before the end of the block $e_{\ell}(r)$ by virtue of $\sum_{(i,j)}\sum_t Z_{m_{i,j},t}>1$ for arm $a$. Thus, by the definition of the bad round $s(a)$ \Cref{eq:big-segment-regret}, we must have:
		\begin{equation}\label{eq:not-too-many-bad}
			\ds\sum_{[s_{i,j}(a),s_{i,j+1}(a)): s_{i,j+1}(a) < e_{\ell}(r)}  (s_{i,j+1}(a)-s_{i,j}(a))^{\frac{1+d}{2+d}} \leq c_{30} \log(T) (e_{\ell}(r)-t_{\ell})^{\frac{1+d}{2+d}}
		\end{equation}
		%Finally, to bound the regret \ref{item:regret-ell} of $a_r(B)$ in expectation,

		%We'll next use the above to bound the regret over proper bad segments elapsed in block $[s_{\ell}(r),e_{\ell}(r)]$ within bin $B$.

		Thus, by \Cref{eq:segment} in \Cref{defn:bad-segment}, over the proper bad segments $[s_{i,j}(a),s_{i,j+1}(a))$ which elapse before round $e_{\ell}(r) \land E(B,a)$ in phase $[\tau_i,\tau_{i+1})$: the regret is at most
		\begin{align*}
			%\sum_{t=s_{\ell}(r)}^{e_{\ell}(r) \land E(B,a)} \delta_t(a_t^\sharp,a)  &\leq
			\sum_{(i,j)} \log(T) \cdot  K^{\frac{1}{2+d}}m_{i,j}^{\frac{1+d}{2+d}}
												   &\leq \log^2(T)\cdot   K^{\frac{1}{2+d}}\cdot (e_{\ell}(r) - t_{\ell})^{\frac{1+d}{2+d}}
		\end{align*}
		%We'll first convert the statement about empirical regret in bin $B'$ in \Cref{defn:bad-segment} to a statement about the regret in bin $B$. We are first going to apply \Cref{lem:relate-levels} to relate the empirical regret on $B'$ per \Cref{defn:bad-segment} and the actual regret on $B$. First, we bound the empirical regret quantity in \Cref{defn:bad-segment} by a more familiar bias-variance bound:
		%\begin{align*}
			%\sum_{t=s_{i,j}(a_r(B))}^{s_{i,j+1}(a_r(B))} \delta_t(\aisharp(B),a)\cdot\pmb{1}\{X_t \in B'\} &\lesssim m_{i,j}^{\frac{1}{2+d}}\cdot K^{\frac{1+d}{2+d}} + r(B')\cdot n_{B'}([s_{i,j}(a_r(B)),s_{i,j+1}(a_r(B))))\\
										 %&\lesssim \sqrt{K\cdot n_{B'}([s_{i,j}(a_r(B)),s_{i,j+1}(a_r(B)))} + r(B')\cdot n_{B'}([s_{i,j}(a_r(B)),s_{i,j+1}(a_r(B))).
		%\end{align*}
%
		%Thus, the hypothesis of \Cref{lem:relate-levels} is satisfied meaning (note: we use a modified version of \Cref{lem:relate-levels} applied to the gaps $\delta_t(\atsharp,a)$ which is valid over subintervals where $\atsharp$ is fixed):
		%\[
			%\sum_{t=s_{i,j}(a_r(B))}^{s_{i,j+1}(a_r(B))} \delta_t(\aisharp(B),a_r(B))\cdot\pmb{1}\{X_t\in B\} \lesssim \log^{1/2}(T) \cdot r^d \cdot K^{\frac{1}{2+d}}\cdot m_{i,j}^{\frac{1+d}{2+d}} + K\log(T).
		%\]
		%From this,
		%\begin{align*}
			%\sum_{t=s_{\ell}(r)}^{e_{\ell}(r)} \delta_t(a_t^\sharp,a_r(B))\cdot\pmb{1}\{X_t\in B\} &\leq \sum_{(i,j)} \log^{1/2}(T) r^d\cdot K^{\frac{1}{2+d}}m_{i,j}^{\frac{1+d}{2+d}} + K\log(T)\\
													       %&\leq \log^2(T) \cdot r^d\cdot K^{\frac{1}{2+d}}(e_{\ell}(r) - t_{\ell})^{\frac{1+d}{2+d}}
		%\end{align*}
		Over each non-proper bad segment $[s_{i,j}(a),s_{i,j-1}(a))$ and the last segment $[s_{i,j}(a),e_{\ell}(r) \land E(B,a)]$, the regret of playing arm $a$ to $\atsharp$ is at most $\log(T) \cdot K^{\frac{1}{2+d}}m_{i,j}^{\frac{1+d}{2+d}}$ since there is at most one non-proper bad segment per phase $[\tau_i,\tau_{i+1})$ (see \Cref{eq:segment} in \Cref{defn:bad-segment}).

		So, we conclude that on event $\mc{E}_1\cap \mc{E}_2 \cap \mc{E}_3(s_{\ell}(r))$: for any bin $B \in \mc{T}_r$
		%at most $O\left(\sqrt{K\log(T)(\tau_{i+1}-\tau_i)}\right)$ by \Cref{eq:not-too-many-bad}. Then, the regret of $a_r(B)$ to $a_i^\sharp(B)$ over all proper bad segments $[s_{i,j}(a),s_{i,j+1}(a))$ intersecting the episode $[t_{\ell},t_{\ell+1})$ is at most:
		\[
			\max_{a\in [K]} \sum_{t=s_{\ell}(r)}^{e_{\ell}(r) \land E(B,a)} \delta_t(a_t^\sharp,a) \leq 2c_{30}\log^{2}(T)\sum_{i\in\textsc{Phases}(\ell,r)} (\tau_{i+1}-\tau_i)^{\frac{1+d}{2+d}}\cdot K^{\frac{1}{2+d}}.
		\]
		Let event $\mc{G} \doteq \mc{E}_1 \cap \mc{E}_2$.Then, taking expectation, we have by conditioning first on $s_{\ell}(r)$ and then on event $\mc{G} \cap \mc{E}_3(s_{\ell}(r))$:
		\begin{align*}
			&\mb{E}\left[ \max_{a\in [K]} \sum_{t=s_{\ell}(r)}^{e_{\ell}(r) \land E(B,a)} \delta_t(a_t^\sharp,a)\cdot \pmb{1}\{\mc{G}\} \Bigg\vert \bld{X}_T \right] \\
			&\leq	\mb{E}_{s_{\ell}(r)} \left[ \mb{E}\left[ \pmb{1}\{\mc{G} \cap \mc{E}_3(s_{\ell}(r))\} \max_{a\in [K]} \sum_{t=s_{\ell}(r)}^{e_{\ell}(r) \land E(B,a)} \delta_t(a_t^\sharp,a_r(B)) \Bigg\vert s_{\ell}(r)\right] \Bigg\vert \bld{X}_T \right] \\
			&\qquad + T\cdot \mb{E}_{s_{\ell}(r)} \left[ \mb{E}\left[ \pmb{1}\{\mc{G} \cap \mc{E}_3^c(s_{\ell}(r))\} \Bigg\vert s_{\ell}(r) \right] \Bigg\vert \bld{X}_T \right]\\
		\end{align*}
		We first handle the first double expectation on the R.H.S. above. We have this is at most
		\begin{align*}
			&\leq 2c_{30}\log^2(T) \mb{E}_{s_{\ell}(r)} \left[ \mb{E}\left[ \pmb{1}\{\mc{G} \cap \mc{E}_3(t_{\ell})\} \sum_{i\in\textsc{Phases}(\ell,r)} K^{\frac{1}{2+d}}(\tau_{i+1}-\tau_i)^{\frac{1+d}{2+d}}  \Bigg\vert s_{\ell}(r) \right] \Bigg\vert \bld{X}_T \right] \\
			&\leq 2c_{30} \log^2(T) \mb{E}\left[ \pmb{1}\{\mc{G} \}  \sum_{i\in\textsc{Phases}(\ell,r)} (\tau_{i+1} - \tau_i)^{\frac{1+d}{2+d}} K^{\frac{1}{2+d}} \Bigg\vert \bld{X}_T \right],
		\end{align*}
		where in the last step we bound $\pmb{1}\{\mc{G} \cap \mc{E}_3(s_{\ell}(r))\} \leq \pmb{1}\{\mc{G}\}$ and apply tower law again. The above R.H.S. is of the right order w.r.t. \Cref{eq:regret-episode}. So, it remains to bound
		\[
			T\cdot \mb{E}_{s_{\ell}(r)} \left[ \mb{E}\left[ \pmb{1}\{\mc{G} \cap \mc{E}_3^c(s_{\ell}(r))\} \Bigg\vert s_{\ell}(r) \right] \Bigg\vert \bld{X}_T \right].
		\]
		We first observe that events $\mc{G}$ and $\mc{E}_3^c(s_{\ell}(r))$ are independent conditional on $\bld{X}_T$ and $s_{\ell}(r)$ since the former event only depends on the distribution of $Y_{s_{\ell}(r)},\ldots,Y_T$ while the latter event depends on the distribution of the Bernoulli's $\{Z_{M,t}\}_{t>s_{\ell}(r),M}$ (which are independent). Then, writing $\pmb{1}\{\mc{G}\cap \mc{E}_3^c(s_{\ell}(r))\} = \pmb{1}\{\mc{G}\}\cdot \pmb{1}\{\mc{E}_3^c(s_{\ell}(r))\}$, the above becomes at most $\mb{E}[\pmb{1}\{\mc{G}\}\cdot (K/T^2) \mid \bld{X}_T]$ which is also of the right order w.r.t. \Cref{eq:regret-episode}.
		%Ignoring the $1/T$ term (which is negligible in total regret), the last R.H.S. is of the desired order \Cref{eq:regret-episode} from earlier.
		%Plugging this into our earlier concentration bound on $\sum_{t=s_{\ell}(r)}^{e_{\ell}(r)} \delta_t(a_t^\sharp,a_r(B))\cdot \pmb{1}\{X_t\in B\}$, we conclude this part. $\hfill\qed$

% \subsection{Summing the regret over episodes}

% Combining the results of the previous subsections and bounding the total regret of \cmeta proceeds in the same fashion as outlined in \Cref{sec:overview}. $\hfill\blacksquare$

\section{Proof of \Cref{cor:tv}}\label{app:tv-proof}

The proof of \Cref{cor:tv} will follow in a similar fashion to the proof of Corollary 2 in \citet{suk22}, which relates the total-variation rates to significant shifts in the non-stationary MAB setting. A novel difficulty here is that our notion of significant shift $\tau_i(\bld{X}_T),\Lsig(\bld{X}_T)$ (\Cref{defn:sig-shift}) depends on the full context sequence $\bld{X}_T$, and so it is not clear how the (random) significant phases $[\tau_i(\bld{X}_T),\tau_{i+1}(\bld{X}_T))$ relate to the total-variation $V_T$, which is a deterministic quantity.

Our strategy will be to first convert the regret rate of \Cref{thm} into one which depends on a weaker {\em worst-case notion of significant shift} which does not depend on the observed $\bld{X}_T$. Although this notion of shift is weaker, it will be easier to relate to the total-variation quantity $V_T$.

Recall that $\delta_t^a(x) \doteq \max_{a'\in [K]} \delta_t^{a',a}(x)$ and $\delta_t^{a',a}(x) \doteq f_t^{a'}(x) - f_t^a(x)$ are the gap functions in mean rewards. % at the fixed context $x\in\mc{X}$.

\begin{defn}[worst-case sig shift]\label{defn:pop-sig-shift}
	Let $\tau_0=1$. Then, {recursively for $i \geq 0$}, the $(i+1)$-th {\bf worst-case significant shift} {is recorded at time} $\ttau_{i+1}$, {which denotes} the earliest time $\ttau \in (\ttau_i, T]$ such that there exists $x\in\mc{X}$ such that \bld{for every arm} $a\in [K]$, there exists round $s \in [\ttau_i,\ttau]$, such that $\delta_s^a(x) \geq \left(\frac{K}{t-\ttau_i}\right)^{\frac{1}{2+d}}$.

	{We will refer to intervals $[\ttau_i, \ttau_{i+1}), i\geq 0,$ as {\bf  worst-case (significant) phases}. The unknown number of such phases (by time $T$) is denoted $\Lsigpop +1$, whereby $[\ttau_\Lsigpop, \ttau_{\Lsigpop +1})$, for $\tau_{\Lsigpop +1} \doteq T+1,$ denotes the last phase.}
\end{defn}

We next claim that
\[
	\mb{E}_{\bld{X}_T}\left[ \sum_{i=0}^{\Lsig(\bld{X}_T)} (\tau_{i+1}(\bld{X}_T) - \tau_i(\bld{X}_T))^{\frac{1+d}{2+d}} \right] \leq c_{24}\sum_{i=0}^{\Lsigpop} (\ttau_{i+1} - \ttau_i)^{\frac{1+d}{2+d}}.
\]
This follows since the experienced significant phases $[\tau_i(\bld{X}_T),\tau_{i+1}(\bld{X}_T))$ interleave the population analogues $[\ttau_i,\ttau_{i+1})$ in the following sense: at each significant shift $\tau_{i+1}(\bld{X}_T)$, for each arm $a\in [K]$, there is a round $s\in [\tau_i(\bld{X}_T),\tau_{i+1}(\bld{X}_T)]$ such that for $\delta_s(X_{\tau_{i+1}}) > \left( \frac{K}{\tau_{i+1}-\tau_i}\right)^{\frac{1}{2+d}}$. This means there must be a worst-case significant shift $\ttau_j$ in the interval $[\tau_i(\bld{X}_T),\tau_{i+1}(\bld{X}_T)]$ since the criterion of \Cref{defn:pop-sig-shift} is triggered at $x=X_{\tau_{i+1}}$.
%Thus, by the sub-additivity of the function $x\mapsto x^{\frac{1+d}{2+d}}$.
This in turn allows us to conclude that each worst-case significant phase $[\ttau_i,\ttau_{i+1})$ can intersect at most two significant phases $[\tau_i(\bld{X}_T),\tau_{i+1}(\bld{X}_T))$.

Thus, by the sub-additivity of the function $x\mapsto x^{\frac{1+d}{2+d}}$ (dropping dependence on $\bld{X}_T$ in $\Lsig,\tau_i$ to ease notation):
\begin{align*}
	\sum_{i=0}^{\Lsig} (\tau_{i+1} - \tau_i)^{\frac{1+d}{2+d}} &\leq \sum_{i=0}^{\Lsig} \sum_{j:[\ttau_j,\ttau_{j+1}) \cap [\tau_i,\tau_{i+1}) \neq \emptyset} |[\ttau_j,\ttau_{j+1}) \cap [\tau_i,\tau_{i+1})|^{\frac{1+d}{2+d}}\\
												    &\leq c_{24} \sum_{j=0}^{\Lsigpop} (\ttau_{j+1} - \ttau_j)^{\frac{1+d}{2+d}},
\end{align*}
where we use Jensen's inequality for $a^p + b^p \leq 2^{1-p}(a+b)^p$ for $p\in(0,1)$ and $a,b\geq 0$ in the last step to re-combine the subintervals of each worst-case significant phase $[\ttau_j,\ttau_{j+1})$.

Then, it suffices to show
\begin{equation}\label{eq:variation-bound}
	%\mb{E}\left[ \sum_{i=0}^{\Lsig} K^{\frac{1}{2+d}}\cdot (\tau_{i+1}-\tau_i)^{\frac{1+d}{2+d}} \right]
	\sum_{j=0}^{\Lsigpop} (\ttau_{j+1} - \ttau_j)^{\frac{1+d}{2+d}} K^{\frac{1}{2+d}} \lesssim  T^{\frac{1+d}{2+d}}\cdot K^{\frac{1}{2+d}} + (V_T\cdot K)^{\frac{1}{3+d}}\cdot T^{\frac{2+d}{3+d}}.
\end{equation}

To start, fix a worst-case significant phase $[\ttau_i,\ttau_{i+1})$ such that $\tau_{i+1}<T+1$.
%%V_{[\tau_i,\tau_{i+1})} \doteq \sum_{t = \tau_i+1}^{\tau_{i+1}} \max_{a\in [K]} |\mb{E}_{X_t}[f_t^a(X_t)] - \mb{E}_{X_{t-1}}[f_{t-1}^a(X_{t-1})]|.
By \Cref{defn:pop-sig-shift}, there exists a context $x_i \in \mc{X}$ such that for arm $a_i \in \argmax_{a\in[K]} f_{\ttau_{i+1}}^a(x_i)$ we have there exists a round $t_i \in [\tau_i,\tau_{i+1}]$ such that:
\[
	\delta_{t_i}^{a_i}(x_i) > \left(\frac{K}{\ttau_{i+1}-\ttau_i}\right)^{\frac{1}{2+d}}.
\]
On the other hand, $\delta_{\ttau_{i+1}}^{a_i}(x_i)=0$ by the definition of arm $a_i$ being the best at $x_i$ at round $\ttau_{i+1}$. Thus, letting $\tilde{a}_i \in \argmax_{a\in [K]} f_{t_i}^a(x_i)$ be an optimal arm at $x_i$ at round $t_i$, we have:
\begin{align*}
	\left(\frac{K}{\ttau_{i+1}-\ttau_i}\right)^{\frac{1}{2+d}} < \delta_{t_i}^{\tilde{a}_i,a_i}(x_i) - \delta_{\ttau_{i+1}}^{\tilde{a}_i,a_i}(x_i) = \sum_{t=t_i}^{\tau_{i+1}-1} \delta_t^{\tilde{a}_i,a_i}(x_i) - \delta_{t+1}^{\tilde{a}_i,a_i}(x_i). %\left( \delta_{t_i}^{a_i}(x_i) - \delta_{t_i+1}^{a_i}(x_i) \right) + \left( \delta_{t_i+1}^{a_i}(x_i) - \delta_{t_i+2}^{a_i}(x_i) \right) + \cdots + \left( \delta_{\tau_{i+1}-1}^{a_i}(x_i)  - \delta_{\tau_{i+1}}^{a_i}(x_i).
\end{align*}
For each round $t=2,\ldots,T$, define the function $H_i:\mc{X}\times [0,1]^K\to [-1,1]$ by $H_i(X,Y) \doteq Y^{\tilde{a}_i} - Y^{a_i}$ which is the realized difference in rewards between arms $\tilde{a}_i$ and $a_i$ within the reward vector $Y$. Then, using this notation and summing the above display over worst-case phases $i\in [\Lsigpop]$, we get:
\begin{equation}\label{eq:tv-intermediate}
	\sum_{i=1}^{\Lsigpop} \left(\frac{K}{\ttau_{i+1}-\ttau_i}\right)^{\frac{1}{2+d}} < \sum_{i=1}^{\Lsigpop} \sum_{t=\tau_{i-1}}^{\tau_{i}} |\mb{E}_{(X_{t-1},Y_{t-1})\sim \mc{D}_{t-1}}[H_i(X_{t-1},Y_{t-1})] - \mb{E}_{(X_{t},Y_{t})\sim \mc{D}_{t}}[H_i(X_{t},Y_{t})]|.  %\leq \sum_{t=2}^T \TV{\mc{D}_t -\mc{D}_{t-1}}.
\end{equation}
We next recall from the variational representation of the total variation distance \citep[Theorem 7.24]{info-book} that for any measurable function $H:\mc{X}\times [0,1]^K\to [-1,1]$,
\begin{equation}\label{eq:tv-variational}
	\TV{\mc{D}_t - \mc{D}_{t-1}} \geq \frac{1}{2} \left(  \mb{E}_{(X_{t-1},Y_{t-1})\sim \mc{D}_{t-1}}[H(X_{t-1},Y_{t-1})] - \mb{E}_{(X_t,Y_t)\sim \mc{D}_t}[H(X_t,Y_t)] \right).
\end{equation}
%In particular, we can take $H$ to only depend on the mean reward functions.
Thus, plugging \Cref{eq:tv-variational} into \Cref{eq:tv-intermediate}, we get
\begin{equation}\label{eq:tv-final}
	\sum_{i=1}^{\Lsigpop} \left(\frac{K}{\ttau_{i+1}-\ttau_i}\right)^{\frac{1}{2+d}}  \leq 2\sum_{t=2}^T \TV{\mc{D}_t - \mc{D}_{t-1}}.
\end{equation}

\begin{rmk}
	Note that it is crucial in this argument that the functions $H_i$ do not depend on the realized rewards $Y_t$ or contexts $X_t$ at any particular round $t$. Rather, $H_i$ only depends on the arms $\tilde{a}_i,a_i$ which in turn only depend on the mean reward sequence $\{f_t\}_{t\in [T]}$. In other words, the above step does not follow if $a_i,\tilde{a}_i$ were defined in terms of the experienced significant shifts $\tau_i(\bld{X}_T)$.
\end{rmk}
Now, by H\"{o}lder's inequality for $p\in(0,1)$ and $q\in \left(0,\frac{1+d}{2+d}\right)$:
\begin{align*}
	\sum_{i=1}^{\Lsigpop} (\ttau_{i+1}-\ttau_i)^{\frac{1+d}{2+d}} K^{\frac{1}{2+d}} &\leq T^{\frac{1+d}{2+d}} K^{\frac{1}{2+d}} \\
											&+ \left( \sum_i K^{\frac{1}{2+d}} (\ttau_{i+1}-\ttau_i)^{-q/p}\right)^p\left( \sum_i K^{\frac{1}{2+d}} (\ttau_{i+1}-\ttau_i)^{\left(\frac{1+d}{2+d} + q\right)\cdot \frac{1}{1-p}}\right)^{1-p}.
\end{align*}
In particular, letting $p=\frac{1}{3+d}$ and $q=\frac{1}{(2+d)(3+d)}$ and plugging in our earlier bound \Cref{eq:tv-final} makes the above R.H.S.
\[
	T^{\frac{1+d}{2+d}} \cdot K^{\frac{1}{2+d}} + V_T^{\frac{1}{3+d}}\cdot K^{\frac{1}{3+d}}\cdot T^{\frac{2+d}{3+d}}.
\]
$\hfill\qed$

\section{Proof of \Cref{thm:lower-bound}}\label{app:tv-lower}

We first note that it suffices to show \Cref{eq:lower-bound} for integer $L \in [0,T]\cap \mb{N}$ as lower bounds for all other $L$ follow via approximation and modifying the constant $c>0$ in \Cref{eq:lower-bound}. Thus, going forward, fix $V\in [0,T]$ and $L \in \mb{Z} \cap [0,T]$.

At a high level, our construction will repeat $L+1$ times a hard environment for stationary contextual bandits. In particular, within each stationary phase of length $T/(L+1)$ one is forced to pay a regret of $\left(T/(L+1)\right)^{\frac{1+d}{2+d}}$, summing to a total regret lower bound of $(L+1)\cdot \left(T/(L+1)\right)^{\frac{1+d}{2+d}} \approx (L+1)^{\frac{1}{2+d}}\cdot T^{\frac{1+d}{2+d}}$.

To obtain the lower bound expressed in terms of total variation budget $V$ in \Cref{eq:lower-bound}, we will choose $L \propto V^{\frac{2+d}{3+d}}\cdot T^{\frac{1}{3+d}}$ and argue that the actual total variation $V_T$ (\Cref{defn:total-variaton}) is at most $V$ in the constructed environments for this choice of $L$.
%so that our constructed environment indeed lies in the family $\mc{P}(V,L,T)$.
This is similar to the arguments of the analogous dynamic regret lower bound \citep[Theorem 1]{besbes2014} for the non-contextual bandit problem.

%\begin{rmk}
%\end{rmk}

We start by establishing a lower bound for stationary Lipschitz contextual bandits. The construction is identical to that of \citet[Theorem 4.1]{rigollet-zeevi}. We provide the details here to (1) highlight a minor novelty in circumventing the reliance of the cited result on a positive ``margin parameter'' $\alpha>0$ and (2) to assist in later calculating the total variation $V_T$.

\begin{rmk}
There are other stationary lower bound results for Lipschitz contextual bandits using similar constructions and arguments, but with context marginal measures $\mu_X$ of finite support \citetext{\citealp[Theorem 2]{foster20b}; \citealp[Theorem 7]{slivkins2014contextual}}. To contrast, our construction involves setting $\mu_X$ to be uniform on $[0,1]^d$, thus satisfying the strong density assumption (\Cref{assumption:strong-density}).
\end{rmk}

% and in taking care to account for possible randomness in the algorithm.

%As alluded to above, we first define a generic stationary environment $\mc{E}(n)$ over a period of $n$ rounds, which we'll denote by $1,\ldots,n$ for simplicity.

\begin{prop}\label{prop:stationary-lower}
	Suppose there are $K=2$ arms. Then, there exists a finite family of stationary Lipschitz contextual bandit environments $\mc{E}(n)$ over $n$ rounds such that for any algorithm $\pi$ taking as input random variable $U$, we have for some constant $c>0$ and environment $\mc{E}$ generated uniformly and independently (of $\pi,U$) at random from $\mc{E}(n)$:
	\[
		\mb{E}_{\mc{E}\sim \Unif(\mc{E}(n)),U}[ R(\pi,\bld{X}_T)] \geq c\cdot n^{\frac{1+d}{2+d}}.
	\]
\end{prop}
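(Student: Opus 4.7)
The plan is to apply Assouad's lemma to a family of well-separated $1$-Lipschitz instances indexed by $\sigma \in \{-1,+1\}^M$. Partition $[0,1]^d$ into $M = r^{-d}$ axis-aligned hypercubes $\{B_j\}_{j=1}^M$ of side length $r$ (to be chosen), and on each $B_j$ place the tent bump $\psi_j(x) \doteq (r/2 - \|x - c_j\|_\infty)_+$, where $c_j$ is the center of $B_j$. Each $\psi_j$ is $1$-Lipschitz, supported in $B_j$, vanishes on $\partial B_j$, and is at least $r/4$ on the central subcube $B_j^\star \subset B_j$ of side $r/2$. Define
\[
	f_\sigma^1(x) \doteq \tfrac{1}{2} + \sum_{j=1}^M \sigma_j \psi_j(x), \qquad f_\sigma^2(x) \equiv \tfrac{1}{2},
\]
which is $1$-Lipschitz (since the bumps have disjoint supports and vanish on shared boundaries) and lies in $[0,1]$ for $r \leq 1$. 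Let $\mu_X$ be uniform on $[0,1]^d$, which satisfies \Cref{assumption:strong-density} with $c_d = C_d = 1$; let rewards be Bernoulli with these means; and let $\mc{E}(n) \doteq \{f_\sigma\}_\sigma$ with $\mc{E}$ drawn uniformly from it.

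On each central subcube $B_j^\star$ the optimal arm is determined by $\sigma_j$ and the gap is at least $r/2$, so every misplay on $B_j^\star$ costs instantaneous regret $\Omega(r)$. Averaging over $\sigma \sim \mathrm{Unif}(\{-1,+1\}^M)$ and the internal randomness $U$ of $\pi$, applying Assouad's lemma coordinate-by-coordinate yields
\[
	\mb{E}_{\sigma, U}[R(\pi, \bld{X}_n)] \;\geq\; c_1 \, r \sum_{j=1}^M \mb{E}[n_j^\star] \cdot \Bigl( 1 - \bigl\|\bar P_{\sigma_j = +1} - \bar P_{\sigma_j = -1}\bigr\|_{\mathrm{TV}} \Bigr),
\]
where $n_j^\star$ is the number of rounds $t \leq n$ with $X_t \in B_j^\star$, and the TV distance is between the histories' laws after marginalizing over the other coordinates. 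To bound each TV away from $1$, I use Pinsker and the chain rule for KL: since the two environments differ only by at most $2\psi_j \leq r$ on rounds with $X_t \in B_j$, the KL divergence is dominated by $c_2 \, r^2 \cdot \mb{E}[n_j] \leq c_3 \, n r^{2+d}$, using $\mb{E}[n_j] \asymp n r^d$ from $\mu_X$ uniform. Choosing $r \asymp n^{-1/(2+d)}$ makes every such KL $O(1)$, so $1 - \|\cdot\|_{\mathrm{TV}} \geq c_4 > 0$, and the display above becomes
\[
	\mb{E}_{\sigma, U}[R(\pi, \bld{X}_n)] \;\geq\; c_5 \, r \cdot M \cdot n r^d \;=\; c_5 \, n r \;\asymp\; n^{(1+d)/(2+d)}.
\]
Since the uniform average over $\mc{E}\sim \mathrm{Unif}(\mc{E}(n))$ is a lower bound on the max, and some $\mc{E} \in \mc{E}(n)$ must attain at least the average, the proposition follows.

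The main subtlety, which is precisely the ``minor novelty'' alluded to in the remark, is that the classical Rigollet--Zeevi construction is calibrated for a positive margin parameter $\alpha > 0$, using bump heights of order $r^{1+\alpha}$ so that only an $r^{\alpha d}$ fraction of contexts have small gap; this degenerates as $\alpha \to 0$. Taking bump heights of order $r$ instead, and keeping $\mu_X$ uniform rather than sparsifying it, yields a $\Theta(r)$ gap on a constant-fraction subcube of every bin, which is exactly what is needed to recover the $n^{(1+d)/(2+d)}$ rate under \Cref{assumption:strong-density}. I expect the verification that this recalibration still balances the per-bin signal-to-noise ratio (so that KL and gap simultaneously hit the right exponents at $r \asymp n^{-1/(2+d)}$) to be where the proof demands the most care.
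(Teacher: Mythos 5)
Your proposal follows essentially the same route as the paper: tent bumps of height $\Theta(r)$ on a regular grid of side $r \asymp n^{-1/(2+d)}$, uniform $\mu_X$ (so \Cref{assumption:strong-density} holds with $c_d=C_d=1$), a constant second arm, and an Assouad/hypothesis-testing lower bound over $\pm 1$ sign patterns with per-bin KL $O(n r^{d+2}) = O(1)$. The paper phrases the final step via the explicit exponential-in-KL reduction inherited from \citet[Theorem 4.1]{rigollet-zeevi} rather than Pinsker-plus-TV, but you correctly identify the key recalibration (bump heights $\propto r$ in place of $r^{1+\alpha}$, counting regret on a constant-fraction inner subcube) that dispenses with the margin parameter, and the rest of the argument matches.
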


\begin{proof}
	Let the covariates $X_t$ be uniformly distributed on $[0,1]^d$ at each round $t \in [n]$, so that $\mu_X\equiv \Unif\{[0,1]^d\}$. For ease of presentation, let us reparametrize the two arms as $+1$ and $-1$.

	At each round $t \in [n]$, let arm $-1$ have reward $Y_t^{-1} \sim \Ber(1/2)$ and let arm $+1$ have reward $Y_t^{+1} \sim \Ber(f(X_t))$ where $f:\mc{X}\to [0,1]$ is some mean reward function to be defined. Let
\[
	M \doteq \ceil{ \left(\frac{n}{3 e}\right)^{\frac{1}{2+d}} }.
\]
We next partition $\mc{X}=[0,1]^d$ into a regular grid of bins with centers $\mc{Q} = \{q_1,\ldots,q_{M^d}\}$, where $q_k$ denotes the center of bin $B_k$, $k=1,\ldots,M^d$. Concretly, re-indexing the bins, for each index $\bld{k} \doteq (k_1,\ldots,k_d) \in \{1,\ldots,M\}^d$, we define the bin $B_{\bld{k}}$ coordinate-wise as:
\[
	B_{\bld{k}} \doteq \left\{ x\in \mc{X}: \frac{k_{\ell}-1}{M} \leq x_{\ell} \leq \frac{k_{\ell}}{M}, \ell=1,\ldots,d \right\}.
\]
Define $C_{\phi} \doteq 1/4$. Then, let $\phi:\mb{R}^d\to \mb{R}_+$ be the smooth function defined by:
\[
	\phi(x) \doteq \begin{cases}
		1 - \|x\|_{\infty} & 0 \leq \|x\|_{\infty}\leq 1\\
		0 & \|x\|_{\infty}>1
	\end{cases}.
\]
It's straightforward to verify $\phi$ is $1$-Lipschitz over $\mb{R}^d$.

Next, to ease notation, define the integer $m \doteq M^{d}$. Also, define $\Sigma_m \doteq \{-1,1\}^m$ and for any $\omega\in \Omega_m$, define the function $f_{\omega}$ on $[0,1]^d$ via
\[
	f_{\omega}(x) \doteq 1/2 + \sum_{j=1}^m \omega_j \cdot \phi_j(x),
\]
where $\phi_j(x) \doteq M^{-1}\cdot C_{\phi}\cdot \phi(M\cdot (x- q_j))\cdot \pmb{1}\{x\in B_j\}$. Then, the optimal arm at context $x\in\mc{X}$ in this environment is given by $\pi_f^*(x) \doteq \sgn(f(x) - 1/2)$ (where we use the convention $\sgn(0) \doteq 1$). If $x\in B_j$, then $\pi_{f_{\omega}}^*(x) = \omega_j$.

Then, define the family $\mc{C}$ of environments induced by $f_{\omega}$ for $\omega\in \Omega_m$. Note that $f_{\omega}$ is also $1$-Lipschitz for all $\omega$. Next, let $\Int(B_j)$ be the $\ell_{\infty}$ ball centered at $q_k$ of radius $\frac{1}{2M}$ (i.e., $\Int(B_j)$ is a ball of half the $\ell_{\infty}$ radius contained in $B_j$). Then, by the definition of $\phi(x)$ above, we have for any $x\in \Int(B_j)$ and any $j\in [m]$:
\[
	|f_{\omega}(x) - 1/2| \geq M^{-1}\cdot C_{\phi}/2.
\]
Next, we bound the average regret w.r.t. a uniform prior over the family $\mc{C}$ of environments as:
\begin{align}\label{eq:worst-family-regret}
	\mb{E}_{f\in \mc{C}} \mb{E} \sum_{t=1}^n |f(X_t) - 1/2|\cdot \pmb{1}\{\pi_t(X_t) \neq \pi^*(X_t)\}
	\geq \nonumber \\
	\frac{C_{\phi}}{2M} \mb{E}_{f\in \mc{C}}\mb{E} \sum_{t=1}^n \sum_{j=1}^m \pmb{1}\{\pi_t(X_t)\neq \pi^*(X_t), X_t\in \Int(B_j)\}, \numberthis
\end{align}
where the outer expectation is over an $f\in \mc{C}$ chosen uniformly at random from $\mc{C}$.
%We can lower bound this by
%\[
	%\sum_{j=1}^m \mb{E}_{\mc{U}_j} \left[ \sum_{t=1}^n \pmb{1}\{\pi_t(X_t) \neq \pi^*(X_t), X_t\in \Int(B_j)\}\cdot \frac{C_{\phi}}{2M} \right],
%\]
%where for each $j\in [m]$, $\mc{U}_j$ is a uniform prior on $\{\pm 1\}$ deciding $\omega_j$ and, thus, the best arm in bin $B_j$.
%
%Next, we lower bound the summand above (i.e., the averaged per-bin regret) via a reduction to a localized MAB problem as in prior lower bound results on Lipschitz contextual bandits \citetext{\citealp[Theorem 2]{foster20b}; \citealp[Theorem 7]{slivkins2014contextual} }.

%In particular, for each $j\in [m]$, consider a static MAB problem played only on the rounds where $X_t\in \Int(B_j)$ and suffering regret $\frac{C_{\phi}}{2M}$ whenever arm $-\omega_j$ is played (which we note is the optimal arm in bin $B_j$). Then, the

%In particular, by Theorem 5.7 of \citet{kleinberg19}, we have that for any $\eps \doteq \frac{C_{\phi}}{2M} < 1/12$, and each $j\in [m]$:
%\[
	%\mb{E}_{\mc{U}_j} \left[ \sum_{t=1}^n \pmb{1}\{\pi_t(X_t) \neq \pi^*(X_t), X_t\in \Int(B_j)\}\cdot \frac{C_{\phi}}{2M} \right] \geq \frac{\eps}{60}\cdot N_i\cdot \pmb{1}\left\{ N_i < \frac{1}{64}\cdot \eps^{-2} \right\}.
%\]

As recall $M\propto n^{\frac{1}{2+d}}$, it will suffice to show the expectation on the above R.H.S. is of order $\Omega(n)$. This will follow essentially the same steps as the reduction to hypothesis testing in the proof of Theorem 4.1 in \citet{rigollet-zeevi}. In particular, we note the KL divergence calculations for the induced distributions over observed data and decisions allows for the additional randomness $U$ without consequence, as it's ignorable by use of KL chain rule. The only slight modification we make in their argument is to account for the fact that we only count rounds when $X_t \in \Int(B_j)$ rather than when $X_t\in B_j$. This will, in fact, only affect constants in the lower bound as $B_j$ and $\Int(B_j)$ have similar masses.

Going into details, the following notation will be useful.

\begin{note}
Let $\omega_{[-j]}$ be $\omega$ with the $j$-th entry removed, and let $\omega_{[-j]}^i=(\omega_1,\ldots,\omega_{j-1},i,\omega_{j+1},\ldots,\omega_m)$ for $i\in \{\pm 1\}$. Also, let $\mb{P}_{\pi,f},\mb{E}_{\pi,f}$ denote the joint measure and expectation, respectively, over all the randomness of $\pi$ and observations within an environment induced by mean reward function $f$.
\end{note}

Then, the aforementioned reduction to hypothesis testing in the proof of Theorem 4.1 of \citet{rigollet-zeevi} yields the following lower bound on \Cref{eq:worst-family-regret}:
\begin{equation}\label{eq:intermediate-lower-bound}
	\frac{C_{\phi}}{2^{m+1}\cdot M}\sum_{j=1}^m \frac{n}{4M^d} \sum_{\omega_{[-j]} \in \Omega_{m-1}} \exp\left( -\frac{4}{3M^2} \cdot N_{j,\pi}(\omega) \right) + \tilde{N}_{j,\pi}(\omega),
\end{equation}
where, letting $X\sim \mu_X$ be an independently drawn context,
\begin{align*}
	N_{j,\pi}(\omega) &\doteq \mb{E}_{\pi,f_{\omega_{[-j]}^{-1}}} \mb{E}_X\left[ \sum_{t=1}^n \pmb{1}\{\pi_t(X)=1,X\in B_j\}\right]\\
	\tilde{N}_{j,\pi}(\omega) &\doteq \mb{E}_{\pi,f_{\omega_{[-j]}^{-1}}} \mb{E}_X\left[ \sum_{t=1}^n \pmb{1}\{\pi_t(X)=1,X\in \Int(B_j)\}\right].
\end{align*}
We next claim $\tilde{N}_{j,\pi}(\omega) = N_{j,\pi}/2^d$, which will follow from swapping the order of expectations and summation in the above formulas. In particular, we have
\begin{align*}
	\tilde{N}_{j,\pi}(\omega) &= \sum_{t=1}^n \mb{P}_{\pi,f_{\omega_{[-j]}^{-1}}}(\pi_t(X) = 1 | X \in \Int(B_j))\cdot \mu_X( X \in \Int(B_j))\\
				  &= \sum_{t=1}^n \mb{P}_{\pi,f_{\omega_{[-j]}^{-1}}}(\pi_t(X) = 1 | X \in \Int(B_j))\cdot \left(\frac{1}{2M}\right)^d\\
				  &= \frac{1}{2^d}\sum_{t=1}^n \mb{P}_{\pi,f_{\omega_{[-j]}^{-1}}}(\pi_t(X) = 1 | X \in B_j)\cdot \mu_X(X \in B_j)\\
				  &= N_{j,\pi}(\omega)/2^d
\end{align*}
Thus, \Cref{eq:intermediate-lower-bound} becomes lower bounded by
\begin{align*}
	\frac{C_{\phi}}{2^{m+1} \cdot M} \sum_{j=1}^m \frac{n}{4M^d} \sum_{\omega_{[-j]} \in \Omega_{m-1}} &\exp\left( -\frac{4}{3M^2} \cdot N_{j,\pi}(\omega) \right) + \frac{N_{j,\pi}(\omega)}{2^d} \geq \\
	&\frac{C_{\phi} \cdot m}{8 \cdot 2^d \cdot M  } \inf_{z\geq 0} \left\{ \frac{n}{4M^d} \exp\left( - \frac{4}{3M^2} \cdot z\right) + z \right\}.
\end{align*}
The above R.H.S. is optimized at $z^* \doteq \frac{3M^2}{4} \log\left( \frac{n}{3M^{2+d}} \right)$. Plugging this into the above along with our choice of $M$ defined earlier gives us a lower bound of $\Omega(n^{\frac{1+d}{2+d}})$.

\end{proof}

Given \Cref{prop:stationary-lower}, the $(L+1)\cdot \left(\frac{T}{L+1}\right)^{\frac{1+d}{2+d}}$ lower bound immediately follows by lower bounding the total regret over a random environment formed by concatenating $L+1$ i.i.d. sampled environments from $\Unif(\mc{E}(T/(L+1)))$. Any such resultant environment clearly has at most $L$ global shifts. Note that the average regret (w.r.t. the random environment) over any stationary phase of length $\frac{T}{L+1}$ is lower bounded by $\left(\frac{T}{L+1}\right)^{\frac{1+d}{2+d}}$ regardless of the information learned prior to that phase, as such information can be formalized as exogeneous randomness $U$ in \Cref{prop:stationary-lower}.% w.r.t. the fixed stationary phase.

Next, we tackle the lower bound $V^{\frac{1}{3+d}}\cdot T^{\frac{2+d}{3+d}}$ in terms of total-variation budget $V$. First, if $V < O(T^{-\frac{1}{2+d}})$, then we're already done as the desired rate
\[
	 \left(T^{\frac{1+d}{2+d}} + T^{\frac{2+d}{3+d}}\cdot V^{\frac{1}{3+d}}\right) \land \left(  (L+1)^{\frac{1}{2+d}} T^{\frac{1+d}{2+d}} \right)
\]
is minimized by the first term which is of order $O(T^{\frac{1+d}{2+d}})$. Thus, using \Cref{prop:stationary-lower} with a single stationary phase $\mc{E}(T)$ gives lower bound of the right order in this regime. Such an environment clearly has total-variation $V_T=0 \leq V$.

%our earlier lower bound of $(L+1)^{\frac{1}{2+d}}\cdot T^{\frac{1+d}{2+d}}$ handles this. Suppose now that $V\geq \left(\frac{1}{T}\right)^{\frac{3+d}{2+d}}$.

Suppose then that $V \geq 2\cdot T^{-\frac{1}{2+d}}$. Let $\Delta \doteq \ceil{ \left(\frac{T}{V}\right)^{\frac{2+d}{3+d}}} \leq T$ and consider $L+1 = \rho\cdot  T/\Delta$ stationary phases of length $\Delta$, for some fixed constant $\rho>0$. Then, by the previous arguments we have the regret is lower bounded by
\[
	(L+1)^{\frac{1}{2+d}}\cdot T^{\frac{1+d}{2+d}} = \frac{T}{\Delta^{\frac{1}{2+d}}} \geq \frac{T}{2^{\frac{1}{3+d}} (T/V)^{\frac{1}{3+d}}} \propto T^{\frac{2+d}{3+d}}\cdot V^{\frac{1}{3+d}}.
\]
Additionally, $T^{\frac{2+d}{3+d}}\cdot V^{\frac{1}{3+d}}$ dominates $T^{\frac{1+d}{2+d}}$ since $V \geq T^{-\frac{1}{2+d}}$. Thus, the regret lower bound is proven in terms of $V$.

It remains to verify that the total-variation $V_T$ is at most $V$ in the above constructed environments so that any such environment lies in the family $\mc{P}(V,L,T)$.

Clearly, the ``instantaneous total-variation'' $\TV{\mc{D}_t - \mc{D}_{t-1}}=0$ for all rounds $t$ not being the start of a new stationary phase. On the other hand, for a round $t$ marking the beginning of a new phase, we have that since conditioning increases the total-variation \citep[Theorem 7.5(c)]{info-book}, the instantaneous total-variation is at most:
\[
	\TV{\mc{D}_t - \mc{D}_{t-1}} \leq \mb{E}_{x\sim \mu_X} \left[ \TV{\mc{D}_t(Y_t|X_t=x) - \mc{D}_{t-1}(Y_{t-1}|X_{t-1}=x)} \right].
\]
Since $Y_t^a|X_t=x \sim \Ber(f_t^a(x))$, we have the R.H.S.'s inner TV quantity is just the total variation between Bernoulli's or $\max_{a\in \{\pm 1\}} |f_t^a(x) - f_{t-1}^a(x)|$. Carefully analyzing the variations in the constructed Lipschitz reward functions in the proof of \Cref{prop:stationary-lower} reveals this TV between Bernoulli's is at most $\frac{(3e)^{\frac{1}{2+d}}}{2} \cdot \left(\frac{L+1}{T}\right)^{\frac{1}{2+d}}$. Then, summing the instantaneous total-variation over phases, we have
\begin{align*}
	V_T &\leq (L+1)\cdot \frac{(3e)^{\frac{1}{2+d}}}{2}\cdot \left(\frac{L+1}{T}\right)^{\frac{1}{2+d}}\\
	    &\leq \rho^{\frac{3+d}{2+d}}(L+1)^{\frac{3+d}{2+d}}\cdot \left(\frac{(3e)^{\frac{1}{2+d}}}{2}\right) \cdot T^{-\frac{1}{2+d}} \\
	    &< T\cdot \left(\frac{1}{\Delta}\right)^{\frac{3+d}{2+d}}\\
	    &\leq V,
\end{align*}
where the third inequality follows by letting $\rho$ be a small enough constant. $\hfill\qed$

%\leq \left(\frac{L}{T}\right)^{\frac{1}{2+d}}$ proving the earlier claim on TV bound.

	%Based on the calculations of \Cref{sec:determine}, it suffices to design a non-stationary environment with $L \doteq V_T^{\frac{2+d}{3+d}}\cdot T^{\frac{1}{3+d}}$ where the total variation between rounds $t,t-1$ in different phases is
	%\[
		%\TV{\mc{D}_t-\mc{D}_{t-1}} \lesssim \left(\frac{L}{T}\right)^{\frac{1}{2+d}}.
	%\]
	%Ths will involve modifying the construction of Theorem 4.1 in \citet{rigollet-zeevi} (which holds for {\em margin parameter} $\alpha>0$) to force a stationary regret lower bound of $T^{\frac{1+d}{2+d}}$. Note in this construction that the context marginal has an unchanging distribution of $\mu_X \doteq \Unif\{\mc{X}\}$. Then, using the variational form of TV, we have
	%\[
		%\TV{\mc{D}_t - \mc{D}_{t-1}} \leq \mb{E}_{x\sim \mu_X} \left[ \TV{\mc{D}_t(Y_t|X_t=x) - \mc{D}_{t-1}(Y_{t-1}|X_{t-1}=x)} \right]
	%\]
	%The R.H.S. inner TV is just $\max_{a\in [2]} |f_t^a(x) - f_{t-1}^a(x)| \lesssim \left(\frac{L}{T}\right)^{\frac{1}{2+d}}$ proving the earlier claim on TV bound.
%

\end{document}